\titlespacing\section{0pt}{5pt plus 2pt minus 1pt}{0pt plus 2pt minus
  2pt}
\titlespacing\subsection{0pt}{5pt plus 2pt minus 1pt}{0pt plus 2pt
  minus 2pt}
\theoremstyle{definition}
\newtheorem{definition}{Definition}[section]
\newtheorem{theorem}{Theorem}
\newtheorem{lemma}{Lemma}
\newtheorem{condition}{Condition}
\newtheorem*{condition*}{Condition}
\tikzset{%
  highlight/.style={rectangle,rounded corners,fill=red!15,draw,fill opacity=0.2,thick,inner sep=3pt}
}
\newcommand{\tikzmark}[2]{\tikz[overlay,remember picture,baseline=(#1.base)] \node (#1) {$#2$};}
\newcommand{\Highlight}[1][submatrix]{%
    \tikz[overlay,remember picture]{
    \node[highlight,fit=(left.north west) (right.south east)] (#1) {};}
}
\title{{\sc NAIS-Net}: Stable Deep Networks from Non-Autonomous Differential Equations}
\author{
  Marco Ciccone\thanks{The authors equally contributed.} \\
  Politecnico di Milano\\
  NNAISENSE SA\\
  \texttt{marco.ciccone@polimi.it} \\
  \And
  Marco Gallieri\footnotemark[1]~ \thanks{The author derived the mathematical results.}\\
  NNAISENSE SA\\
  \texttt{marco@nnaisense.com} \\
  \AND
  Jonathan Masci\\
  NNAISENSE SA\\
  \texttt{jonathan@nnaisense.com} \\
  \And
  Christian Osendorfer\\
  NNAISENSE SA\\
  \texttt{christian@nnaisense.com} \\
  \And
  Faustino Gomez\\
  NNAISENSE SA\\
  \texttt{tino@nnaisense.com} \\
}
\begin{document}

\maketitle

\begin{abstract}
  This paper introduces \emph{Non-Autonomous Input-Output Stable
  Network} (NAIS-Net), a very deep architecture where each stacked
  processing block is derived from a time-invariant non-autonomous
  dynamical system. Non-autonomy is implemented by skip connections
  from the block input to each of the unrolled processing stages and
  allows stability to be enforced so that blocks can be
  unrolled  adaptively to a {\em pattern-dependent processing depth}.
  NAIS-Net induces \emph{non-trivial, Lipschitz
  input-output maps}, even for an infinite unroll length.
  We prove that the network is globally asymptotically stable so that
  for every initial condition there is exactly one input-dependent
  equilibrium assuming $tanh$ units, and incrementally stable 
  for ReL units.  An efficient implementation that enforces the
  stability under derived conditions for both fully-connected and
  convolutional layers is also presented. Experimental results
  show how NAIS-Net exhibits stability in practice, yielding a
  significant reduction in \emph{generalization gap} compared to ResNets.
\end{abstract}

\section{Introduction} \label{sec:introduction}
Deep neural networks are now the state-of-the-art in a variety of
challenging tasks, ranging from object recognition to natural language
processing and graph analysis
~\cite{krizhevsky2012a,BattenbergCCCGL17,zilly17a,SutskeverVL14,MontiBMRSB17}.
With enough layers, they can, in principle, learn arbitrarily complex
abstract representations through an iterative process~\cite{greff2016}
where each layer transforms the output from the previous layer
non-linearly until the input pattern is embedded in a latent space
where inference can be done efficiently.

Until the advent of Highway~\cite{srivastava2015} and Residual
(ResNet;~\cite{he2015b}) networks, training nets beyond a certain
depth with gradient descent was limited by the vanishing gradient
problem~\cite{hochreiter1991a,bengio1994a}. These very deep networks
(VDNNs) have skip connections that provide shortcuts for the gradient to
flow back through hundreds of layers. Unfortunately, training them
still requires extensive hyper-parameter tuning, and, even if there
were a principled way to determine the optimal number of layers or
\emph{processing depth} for a given task, it still would be fixed for all
patterns.

Recently, several researchers have started to view VDNNs from a
dynamical systems perspective. Haber and Ruthotto~\cite{haber2017} analyzed
the stability of ResNets by framing them as an Euler integration of an
ODE, and~\cite{lu2018beyond} showed how using other numerical
integration methods induces various existing network architectures such
as PolyNet~\cite{zhang2017}, FractalNet~\cite{larsson2016} and
RevNet~\cite{gomez2017}.
A fundamental problem with the dynamical systems underlying
these architectures is that they are
\emph{autonomous}: the input pattern sets the initial condition, only
directly affecting the first processing stage. This means that if the
system converges, there is either exactly one fixpoint or exactly one
limit cycle~\cite{strogatz2014}. Neither case is desirable from a
learning perspective because a dynamical system should have
input-dependent convergence properties so that
representations are useful for learning. One possible approach to
achieve this is to have a \emph{non-autonomous} system where, at each
iteration, the system is forced by an external input.

\begin{figure*}
    \centering
    \includegraphics[width=\textwidth]{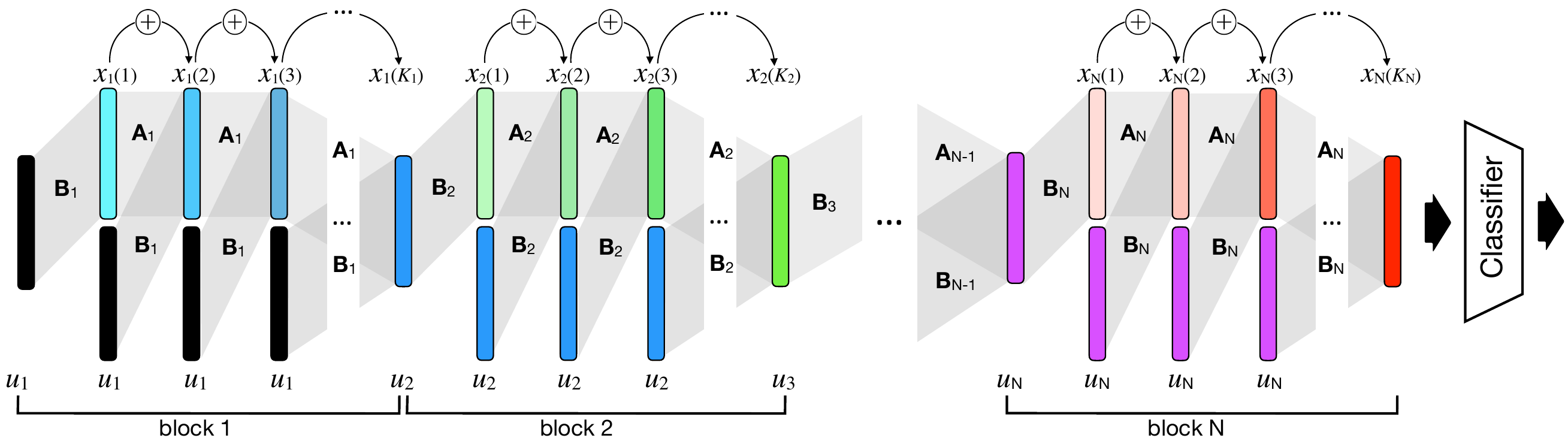}
    \vspace{-0.1cm}
   \caption{\small{\bf NAIS-Net architecture}.  Each block represents a
      time-invariant iterative process as the first layer in the
      $i$-th block, $x_i(1)$, is unrolled into a pattern-dependent
      number, $K_i$, of processing stages, using weight matrices
      $\mathbf{A}_i$ and $\mathbf{B}_i$.  The skip connections from
      the input, $u_i$, to all layers in block $i$ make the process
      non-autonomous. Blocks can be chained together (each block
      modeling a different latent space) by passing final latent
      representation, $x_i(K_i$), of block $i$ as the input to block
      $i+1$.}
    \label{fig:nudenet}
    \vspace{-0.5cm}
\end{figure*}

This paper introduces a novel network architecture, called the
\emph{``Non-Autonomous Input-Output Stable Network''}
(NAIS-Net), that is derived from a dynamical system that is both
time-invariant (weights are shared) and non-autonomous.\footnote{The
  DenseNet architecture~\cite{lang1988,huang2017densely} is
  non-autonomous, but time-varying.}
NAIS-Net is a general residual architecture where a block (see
figure~\ref{fig:nudenet}) is the unrolling of a time-invariant system, and
non-autonomy is implemented by having the external input  applied to each of
the unrolled processing stages in the block through skip connections.
ResNets are similar to NAIS-Net except that ResNets are
time-varying and only receive the external input at the first layer of
the block.

With this design, we can derive sufficient conditions under which the
network exhibits well behaved trajectories for every initial condition. More specifically,
in section~\ref{sec:nonautonomousResNet}, we prove that with $tanh$
activations, NAIS-Net has exactly one input-dependent equilibrium,
while ReLU activations lead to incrementally stable trajectories per
input pattern.  Moreover, the NAIS-Net architecture allows not only the
internal stability of the system to be analyzed but, more importantly,
the input-output stability --- the difference between the representations
generated by two different inputs belonging to a bounded set will also be
bounded at each stage of the unrolling.\footnote{In the supplementary
  material, we also show that these results hold both for shared and
  unshared weights.}

In section~\ref{sec:implementation}, we provide an efficient
implementation that enforces the stability conditions for both
fully-connected and convolutional layers in the stochastic
optimization setting. These implementations are compared
experimentally with ResNets on both CIFAR-10 and CIFAR-100 datasets,
in section~\ref{sec:experiments}, showing that NAIS-Nets achieve
comparable classification accuracy with a much better
\emph{generalization gap}. NAIS-Nets can also be 10 to 20 times deeper
than the original ResNet without increasing the total number of network
parameters, and, by stacking several stable NAIS-Net blocks,
models that implement pattern-dependent processing depth can be
trained without requiring any normalization at each step (except when
there is a change in layer dimensionality, to speed up training).

The next section presents a more formal treatment of the dynamical
systems perspective of neural networks, and a brief overview of work
to date in this area.

\section{Background and Related Work}
\label{sec:background}

Representation learning is about finding a mapping from input patterns
to encodings that disentangle the underlying variational factors of
the input set.  With such an encoding, a large portion of typical
supervised learning tasks (e.g.\ classification and regression) should
be solvable using just a simple model like logistic regression.  A key
characteristic of such a mapping is its invariance to input
transformations that do not alter these factors for a given
input\footnote{Such invariance conditions can be very powerful
  inductive biases on their own: For example, requiring invariance to
  time transformations in the input leads to popular RNN
  architectures~\cite{tallec2018a}.}. In particular, random
perturbations of the input should in general not be drastically
amplified in the encoding. In the field of control theory, this
property is central to stability analysis which investigates the
properties of dynamical systems under which they converge to a single
steady state without exhibiting chaos~\cite{khalil2001,strogatz2014,sontag_book}.

In machine learning,
stability has long been central to the study of
recurrent neural networks (RNNs) with respect to the
vanishing~\cite{hochreiter1991a, bengio1994a, pascanu2013a}, and
exploding~\cite{doya1992bifurcations, baldi1996universal,
  pascanu2013a} gradient problems, leading to the development of Long
Short-Term Memory~\cite{hochreiter1997b} to alleviate the former.
More recently, general conditions for RNN stability have been
presented~\cite{zilly17a, kanai2017a, laurent_recurrent_2016,vorontsov2017} based on
general insights related to Matrix Norm analysis.
Input-output stability~\cite{khalil2001} has also been analyzed for
simple RNNs~\cite{steil1999input,knight_stability_2008,haschke2005input,singh2016stability}.

Recently, the stability of deep feed-forward networks was more closely
investigated, mostly due to adversarial attacks~\cite{szegedy2013intriguing} on
trained networks. It turns out that sensitivity to (adversarial) input
perturbations in the inference process can be avoided by ensuring certain
conditions on the spectral norms of the weight
matrices~\cite{cisse_parseval_2017, yoshida2017a}. Additionally, special
properties of the spectral norm of weight matrices mitigate instabilities
during the training of Generative Adversarial Networks~\cite{miyato2018a}.

Almost all successfully trained
VDNNs~\cite{hochreiter1997b, he2015b, srivastava2015, cho2014learning} share
the following core building block:
\begin{equation}
    \label{eq:resnet_unrolled}
    x(k+1) = x(k) + f\left( x(k), \theta(k) \right), 1 \leq k \leq K.
\end{equation}
That is, in order to compute a vector representation at layer $k+1$
(or time $k+1$ for recurrent networks), \emph{additively update} $x(k)$ with
some non-linear transformation $f(\cdot)$ of $x(k)$ which depends on
parameters $\theta(k)$.
The reason usual given for why Eq.~(\ref{eq:resnet_unrolled}) allows
VDNNs to be trained is that the explicit identity connections
avoid the vanishing gradient problem.

The semantics of the forward path are however still considered
unclear. A recent interpretation is that these feed-forward
architectures implement \emph{iterative inference}~\cite{greff2016,jastrzebski2017}. This view
is reinforced by observing that Eq.~(\ref{eq:resnet_unrolled}) is a
forward Euler discretization~\cite{ascher1998a} of the ordinary
differential equation (ODE) $\dot{{x}}(t) = f(x(t), \Theta)$ if
$\theta(k) \equiv \Theta$ for all $1 \leq k \leq K$ in
Eq.~(\ref{eq:resnet_unrolled}).
This connection between dynamical systems and
feed-forward architectures was recently also observed by several
other authors~\cite{weinan2017a}. This point of view leads to
a large family of new network architectures that are induced
by various numerical integration
methods~\cite{lu2018beyond}. Moreover, stability problems in
both the forward as well the backward path of VDNNs have been addressed
 by relying on well-known analytical approaches for
continuous-time ODEs~\cite{haber2017,chang2017multi}. In the present paper,
we instead address the problem directly in discrete-time, meaning that our
stability result is preserved by the network implementation.
 With the exception of ~\cite{liao_bridging_2016}, none of
this prior research considers time-invariant,
 non-autonomous
systems.

Conceptually, our work shares similarities with approaches
that build network
according to iterative
algorithms~\cite{gregor2010a,zheng2015a} and recent ideas investigating
pattern-dependent processing time~\cite{graves2016a,veit2017a,figurnov2017a}.

\section{Non-Autonomous Input-Output Stable Nets (NAIS-Nets)}
\label{sec:nonautonomousResNet}

This section provides stability conditions for both fully-connected
and convolutional NAIS-Net layers. We formally prove that NAIS-Net
provides a non-trivial input-dependent output for each iteration $k$ as
well as in the asymptotic case ($k \rightarrow \infty$). The following dynamical system:
\begin{equation}
    \label{ResNet_ode4}
    \begin{aligned}
    x({k+1})  = x({k}) + h f\left(x({k}), u, \theta \right), \ %
    x(0)  =0,
    \end{aligned}
\end{equation}
is used throughout the paper, where $x\in\mathbb{R}^{n}$ is the latent state, $u\in\mathbb{R}^{m}$
is the network input, and $h>0$.  For ease of notation, in the remainder of the
paper the explicit dependence on the parameters,
$\theta$, will be omitted.%

\paragraph{Fully Connected NAIS-Net Layer.}
Our fully connected layer is defined by
\begin{equation}
    \begin{aligned}
        x(k+1) %
        &= x(k) + h\sigma\bigg(Ax(k) + Bu + b\bigg),
    \end{aligned}\label{eq:DNN}
\end{equation}
where $A\in\mathbb{R}^{n\times n}$ and $B\in\mathbb{R}^{n\times m}$
are the state and input transfer matrices, and $b\in\mathbb{R}^{n}$ is
a bias.
The activation $\sigma\in\mathbb{R}^n$ is a
vector of (element-wise) instances of an activation function, denoted
as $\sigma_i$ with $i\in\{1,\dots,n\}$.  In this paper, we only consider the
hyperbolic tangent, $tanh$,  and Rectified Linear Units (ReLU) activation
functions.  Note that by setting $B=0$, and the step $h=1$ the original
ResNet formulation is obtained.

\paragraph{Convolutional NAIS-Net Layer.}
The architecture can be easily extended to Convolutional
Networks by replacing the matrix multiplications in Eq. (\ref{eq:DNN})
with a convolution operator:
\begin{equation}
    \label{eq:CNNmodel}
    \begin{aligned}
        X(k+1) %
              & = X(k) + h\sigma\bigg(C * X + D * U + E\bigg).
    \end{aligned}
\end{equation}
Consider the case of $N_C$ channels. The convolutional layer in Eq.
(\ref{eq:CNNmodel}) can be rewritten,  for each latent map
$c\in\{1,2,\dots,N_C\}$, in the equivalent form:
\begin{equation} \label{eq:CNNmodel_rewritten2}
   \begin{aligned}
       X^{c}(k+1) =X^{c}(k)+h\sigma\left(\sum_i ^{N_C}C^{c}_i*X^i(k)
         + \sum_j^{N_C} D^{c}_j*U^j+E^{c}\right),
   \end{aligned}
\end{equation}
where:
$X^i(k)\in\mathbb{R}^{n_X \times n_X }$ is the layer state matrix for
channel $i$, $U^j\in\mathbb{R}^{n_U \times n_U}$ is the layer input
data matrix for channel $j$ (where an appropriate zero padding has
been applied) at layer $k$, $C^{c}_i\in\mathbb{R}^{n_C \times n_C}$ is
the state convolution filter from state channel $i$ to state channel
$c$, $D^{c}_j$ is its equivalent for the input, and $E^{c}$ is a bias.
The activation, $\sigma$, is still applied element-wise.  The
convolution for $X$ has a fixed stride $s=1$, a filter size $n_C$ and
a zero padding of $p\in\mathbb{N}$, such that $n_C=2p+1$.\footnote{ If
  $s\geq0$, then $x$ can be extended with an appropriate number of
  constant zeros (not connected). }

Convolutional layers can be rewritten in the same form as fully
connected layers (see proof of Lemma \ref{lem:CNNconn} in the supplementary material).
Therefore, the stability results in
the next section will be formulated for the fully connected case, but apply
to both.

\paragraph{Stability Analysis.}
Here, the stability conditions for NAIS-Nets which were instrumental to their
design are laid out. We are interested in using a cascade of unrolled NAIS
blocks (see Figure~\ref{fig:nudenet}), where each block is described by either
Eq.~(\ref{eq:DNN}) or Eq.~(\ref{eq:CNNmodel}). Since we are dealing with a
cascade of dynamical systems, then stability of the entire network can be
enforced by having stable blocks~\cite{khalil2001}.

The state-transfer Jacobian for layer $k$ is defined as:
\begin{equation}\label{eq:jac_gen}
	J(x(k),u) = \frac{\partial x(k+1)}{\partial x(k)}=I+h\frac{\partial\sigma(\Delta x(k))}{\partial \Delta x(k)}A,
\end{equation}
where the argument of the activation function, $\sigma$, is denoted
as $\Delta x(k)$.
Take an arbitrarily small scalar $\underline{\sigma}>0$ and define the set of pairs $(x,u)$ for which the activations are not saturated as:
\begin{equation}
    \mathcal{P}=\left\{(x,u):\frac{\partial\sigma_{i}(\Delta x(k))}{\partial\Delta x_i(k)}\geq \underline{\sigma},\ \forall i\in[1,2,\dots,n]\right\}.
\end{equation}

Theorem~\ref{th:BIBO1} below proves that the non-autonomuous residual
network produces a bounded output given a bounded, possibly noisy, input, and
that the network state  converges to a constant value as the number of layers
tends to infinity, if the following stability condition holds:

\begin{condition}\label{assum:simple_jac}
For any $\underline{\sigma}>0$, the Jacobian
satisfies:
\begin{equation} \label{eq:2norm}
		\bar{\rho}=\sup_{(x,u)\in\mathcal{P}}\rho(J(x,u)),\ \text{s.t.}\ \bar{\rho}<1,
\end{equation}
    where $\rho(\cdot)$ is the spectral radius.
\end{condition}

For $\tanh$ activation, the steady states, $\bar{x}$, are determined by a \emph{continuous} function
of $u$. This means that a small change in $u$
cannot result in a very different $\bar{x}$. In particular, $\bar{x}$
depends linearly on $u$, therefore the block needs to be unrolled for a
 finite number of iterations, $K$, for the mapping to be non-linear.
  That is not the case for ReLU, which can
 be unrolled indefinitely and still provide a piece-wise affine mapping which is locally Lipschitz. 

In Theorem \ref{th:BIBO1}, the Input-Output (IO) gain function, $\gamma(\cdot)$, describes the effect
of norm-bounded input perturbations on the network trajectory. This
gain provides insight as to the level of robust invariance of the
classification regions to changes in the input data with respect to
the training set. In particular, as the gain is decreased, the
perturbed solution will be \emph{closer} to the solution obtained from
the training set.  This can lead to increased robustness and
generalization with respect to a network that does not statisfy
Condition~\ref{assum:simple_jac}.
Note that the IO gain, $\gamma(\cdot)$, is linear, and hence the block IO
map is \emph{Lipschitz} even for an \emph{infinite} unroll length. The IO gain
 depends directly on the norm of the
state transfer Jacobian, in Eq. (\ref{eq:2norm}), as indicated by the
term $\bar{\rho}$ in Theorem \ref{th:BIBO1}.\footnote{see supplementary material for additional details and
all proofs, where the untied case is also
covered.}

\setcounter{theorem}{0}%
\begin{theorem} (Asymptotic stability for shared weights)\\
If Condition~\ref{assum:simple_jac} holds, then NAIS-Net with
 $tanh$ activation is Asymptotically Stable with respect to
\emph{input dependent} equilibrium points. More formally:
\begin{equation}
    x(k)\rightarrow\bar{x} \in\mathbb{R}^n,\
    \forall x(0)\in\mathcal{X}\subseteq\mathbb{R}^n,\ u\in\mathbb{R}^m.
\end{equation}
The trajectory is described by $\|x(k)-\bar{x}\|\leq\bar{\rho}^k \|x(0)-\bar{x}\|$
, where $\|\cdot\|$ is a suitable matrix norm.%

\newpage
In particular:
\setlist[itemize]{leftmargin=*}
\begin{itemize}
\item The steady state $\bar{x}$ is independent of the initial
state, and it is a linear function of the input, namely, $\bar{x}=A^{-1}Bu$.
    The network is Globally Asymptotically Stable.

\item The network is Globally
    Input-Output (robustly) Stable for any additive input perturbation $w\in\mathbb{R}^m$.
    The trajectory is described by:
    \begin{equation}\label{eq:ISSgain_tied}
	      \|x(k)-\bar{x}\|\leq\bar{\rho}^{k} \|x(0)-\bar{x}\|+ \gamma(\|w\|),\ \text{with}\   \gamma(\|w\|)=h\frac{\|{B}\|}{(1-\bar{\rho})} \|w\|.
    \end{equation}
    where $\gamma(\cdot)$ is the input-output gain.
For any $\mu\geq0$, if $\|w\|\leq\mu$
then  the following set is robustly positively invariant ($x(k)\in\mathcal{X}, \forall k\geq0$):
    \begin{equation}
       {\mathcal{X}}=
       \left\{
           x\in\mathbb{R}^n : \|x-\bar{x}\|\leq %
\gamma(\mu)
       \right\}.
    \end{equation}
\end{itemize}
If Condition~\ref{assum:simple_jac} holds, and the activation is ReLU, then:
\begin{itemize}
\item The network is Globally
    \emph{incrementally} practically Stable ($\delta$-GpS). In other words, $\forall k\geq0$, given two initial conditions $\{x(0),\ \bar{x}(0)\}$ and the same input $u$, we have:
    \begin{equation}
    		\|x(k)-\bar{x}(k)\|\leq\bar{\rho}^k \|x(0)-\bar{x}(0)\|+  \zeta.
    \end{equation}
     The constant factor is  $\zeta=\frac{\|x(0)-\bar{x}(0)\|}{(1-\bar{\rho})}$.
  \item The network is Globally
    Input-Output \emph{incrementally} practically Stable ($\delta$-IOpS). In other words, given $\{x(0),\ \bar{x}(0)\}$ and two respective inputs $\{u,\  \bar{u}\}$, $\forall k\geq0$ we have:
    \begin{equation}
    		\|x(k)-\bar{x}(k)\|\leq\bar{\rho}^k \|x(0)-\bar{x}(0)\|+ \gamma(\|u-\bar{u}\|) + \zeta.
    \end{equation}
\end{itemize}\label{th:BIBO1}
\end{theorem}

\begin{figure}[t]
    \noindent\begin{minipage}{0.45\columnwidth}
        \begin{algorithm}[H]
        \small
          \caption{Fully Connected  Reprojection}
          \label{alg:fc_rep}
        \begin{algorithmic}
           \STATE {\bfseries Input:} $R\in\mathbb{R}^{\tilde{n}\times {n}}$, $\tilde{n}\leq n$, $\delta = 1-2\epsilon$, $\epsilon\in(0,0.5)$.\vspace{0.2cm}
                   \ \IF{$\|R^TR\|_F>\delta$}
                       \STATE
                       \STATE $\tilde{R}\leftarrow \sqrt{\delta}\frac{R}{\sqrt{\|R^TR\|_F}}$
                   \ \ELSE
                       \STATE $\tilde{R}\leftarrow{R}$
                   \ENDIF
           \STATE {\bfseries Output:} $\tilde{R}$
        \end{algorithmic}
        \end{algorithm}
        \vspace{0.52cm}
    \end{minipage}
    ~~~~~
    \noindent\begin{minipage}{0.45\columnwidth}
        \begin{algorithm}[H]
        \small
          \caption{CNN  Reprojection}
          \label{alg:cnn_rep}
        \begin{algorithmic}
          \STATE {\bfseries Input:} \!$\delta\!\in\mathbb{R}^{N_C}\!\!,C\!\in \mathbb{R}^{n_X\times n_X\times N_C\times N_C}$,
          and $0<\epsilon<\eta<1$.
                   \FOR{each feature map $c$}
                       \STATE $\tilde{\delta}_c \leftarrow \max\bigg(\min\big(\delta_c,1-\eta\big),-1+\eta\bigg)$
                       \STATE $\tilde{C}_{i_\text{centre}}^{c}\leftarrow -1-\tilde{\delta}_c$
                       \IF{$\sum_{j\neq {i_\text{centre}}} \left|C_j^{c}\right| >1-\epsilon-|\tilde{\delta}_c|$}
                           \STATE
                           \STATE $\tilde{C}^{c}_j\leftarrow \left(1-\epsilon-|\tilde{\delta}_c|\right)\frac{C_j^{c} }{\sum_{j\neq {i_\text{centre}}} \left|{C}_j^{c}\right|}$
                       \ENDIF
                   \ENDFOR
               \STATE {\bfseries Output:} $\tilde{\delta}$, $\tilde{C}$
        \end{algorithmic}
        \end{algorithm}
    \end{minipage}
    \vspace{0.2cm}
    \caption{\small Proposed algorithms for enforcing stability.}
\end{figure}
\vspace{-0.2cm}

\section{Implementation}\label{sec:implementation}
In general, an optimization problem with a spectral radius constraint as in
Eq.~(\ref{eq:2norm}) is hard \cite{kanai2017a}. One possible approach is to
relax the constraint to a singular value constraint~\cite{kanai2017a} which is
applicable to both fully connected as well as convolutional layer
types~\cite{yoshida2017a}. However, this approach is only applicable if the
identity matrix in the Jacobian (Eq.~(\ref{eq:jac_gen})) is scaled by a factor
$0 < c < 1$ \cite{kanai2017a}. In this work we instead fulfil the spectral
radius constraint directly.

The basic intuition for the presented algorithms is the fact that for a simple
Jacobian of the form $I + M$, $M \in \mathbb{R}^{ n \times n}$,
Condition~\ref{assum:simple_jac} is fulfilled, if $M$ has eigenvalues with real
part in $(-2, 0)$ and imaginary part in the unit circle. In the supplemental
material we prove that the following algorithms fulfill
Condition~\ref{assum:simple_jac} following this intuition. Note that, in the
following, the presented procedures are to be performed for each block of the
network.

\paragraph{Fully-connected blocks.}
In the fully connected case, we restrict the matrix $A$ to by symmetric and
negative definite by choosing the following parameterization for them:
\begin{equation}
   A = -R^TR-\epsilon I,
\end{equation}
where $R\in\mathbb{R}^{n \times n}$ is trained, and $0<\epsilon\ll1$ is a
hyper-parameter. Then, we propose a bound on the Frobenius
norm, $\|R^TR\|_F$. Algorithm~\ref{alg:fc_rep}, performed during training,
implements the following%
\footnote{The more relaxed condition $\delta\in(0,2)$
is sufficient for Theorem \ref{th:BIBO1}  to hold locally
(supplementary material).}:
\begin{theorem}(Fully-connected weight projection)\label{th:fully} \\
  Given $R \in \mathbb{R}^{n \times n}$, the projection
    $\tilde{R} = \sqrt{\delta}\frac{R}{\sqrt{\|R^TR\|_F}}$, with $\delta=1-2\epsilon \in(0,1)$,
    ensures that $A = -\tilde{R}^{T}\tilde{R} - \epsilon I$ is such that
    Condition \ref{assum:simple_jac} is satisfied for $h \leq 1$ and therefore
    Theorem \ref{th:BIBO1} holds.
\end{theorem}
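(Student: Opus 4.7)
The plan is to verify that the projection in Algorithm 1 enforces a Frobenius bound on $\tilde R^T\tilde R$, translate that bound (via the standard inequality $\|\cdot\|_2\le\|\cdot\|_F$) into a spectral bound on $A$, and then turn the resulting negative-definite structure of $A$ into a bound on $\rho(J)$ by exploiting a similarity transform that makes $DA$ conjugate to a symmetric matrix on the non-saturated set $\mathcal P$.

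First I would check the post-projection bound. If $\|R^TR\|_F\le\delta$, the algorithm leaves $R$ unchanged; otherwise $\tilde R=\sqrt\delta\,R/\sqrt{\|R^TR\|_F}$, so $\tilde R^T\tilde R=(\delta/\|R^TR\|_F)R^TR$ and $\|\tilde R^T\tilde R\|_F=\delta$. In either case $\|\tilde R^T\tilde R\|_F\le\delta=1-2\epsilon$. Since $\tilde R^T\tilde R$ is symmetric positive semi-definite, its spectral radius equals its largest singular value and is bounded by its Frobenius norm, so all eigenvalues of $\tilde R^T\tilde R$ lie in $[0,1-2\epsilon]$. Consequently $A=-\tilde R^T\tilde R-\epsilon I$ is symmetric negative definite with eigenvalues in $[-(1-\epsilon),-\epsilon]$, in particular $\|A\|_2\le 1-\epsilon$.

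Next I would analyse the Jacobian $J(x,u)=I+hDA$, where $D=\partial\sigma/\partial\Delta x$ is diagonal with entries in $[\underline\sigma,1]$ on $\mathcal P$ (for both $\tanh$ and ReLU in the active regime). Since $D$ is strictly positive definite, $D^{1/2}$ is invertible and
\begin{equation*}
DA \;=\; D^{1/2}\bigl(D^{1/2}AD^{1/2}\bigr)D^{-1/2},
\end{equation*}
so $DA$ is similar to the symmetric matrix $S:=D^{1/2}AD^{1/2}$. In particular the eigenvalues of $DA$ are real, and because $A\prec 0$ and $D^{1/2}$ is invertible, $S\prec 0$, giving strictly negative eigenvalues. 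The spectral norm estimate $\|S\|_2\le\|D^{1/2}\|_2^2\,\|A\|_2\le 1\cdot(1-\epsilon)$ then localises every eigenvalue $\mu$ of $DA$ in the interval $[-(1-\epsilon),\,-\epsilon\underline\sigma]$. For $h\le 1$, the eigenvalues of $J$ are $1+h\mu\in[\,1-h(1-\epsilon),\,1-h\epsilon\underline\sigma\,]\subset(-1,1)$, with both endpoints strictly inside $(-1,1)$ uniformly on $\mathcal P$. Taking the supremum yields $\bar\rho<1$, so Condition~\ref{assum:simple_jac} holds and Theorem~\ref{th:BIBO1} applies.

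The main obstacle is the eigenvalue analysis of the non-symmetric product $DA$: spectral norm bounds on $D$ and $A$ separately do not immediately give a spectral radius bound on $DA$, and one must use the similarity to $D^{1/2}AD^{1/2}$. This in turn requires $D\succ 0$ strictly, which is exactly why the non-saturation set $\mathcal P$ with $\underline\sigma>0$ appears in Condition~\ref{assum:simple_jac}; without it, a zero eigenvalue of $D$ could be paired with an eigenvalue of $A$ through a non-trivial Jordan block, and the clean similarity argument would collapse. Everything else is either routine linear algebra (Frobenius-vs-spectral norm) or direct verification of the algorithm's output.
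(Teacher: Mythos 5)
Your proposal is correct and follows essentially the same route as the paper's proof: bound $\|\tilde R^T\tilde R\|_F\le 1-2\epsilon$, pass to the spectral norm by symmetry to place the eigenvalues of $A$ in $[-(1-\epsilon),-\epsilon]$, and then use the similarity of $\sigma'(\Delta x)A$ to the symmetric negative definite matrix $\sigma'(\Delta x)^{1/2}A\,\sigma'(\Delta x)^{1/2}$ on $\mathcal P$ to conclude that the eigenvalues of $J=I+h\sigma'(\Delta x)A$ are real and lie in $[1-h(1-\epsilon),\,1-h\underline\sigma\epsilon]\subset(0,1)$. The only cosmetic difference is that the paper treats ReLU separately (where $\sigma'=I$ on $\mathcal P$, so $J=I+hA$ directly) and cites an external result for the similarity, whereas you derive the conjugation $DA=D^{1/2}(D^{1/2}AD^{1/2})D^{-1/2}$ explicitly and handle both activations uniformly.
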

Note that $\delta=2(1-\epsilon) \in(0,2)$ is also sufficient for stability, however,
the $\delta$ from Theorem \ref{th:fully}
makes the trajectory free from oscillations (critically damped), see Figure \ref{fig:sharedvsunshared}.
 This is further discussed in Appendix.

\paragraph{Convolutional blocks.}
\label{sec:proposedCNNsolution}
The symmetric parametrization assumed in
the fully connected case can not be used for a convolutional layer.
We will instead make use of the following result:
\begin{lemma}\label{lem:CNNconn}
    The convolutional layer Eq. (\ref{eq:CNNmodel}) with zero-padding $p\in\mathbb{N}$,
     and filter size $n_C=2p+1$ has
    a Jacobian of the form Eq.~(\ref{eq:jac_gen}).  with
    ${A} \in \mathbb{R}^{n_X^2N_C \times n_X^2N_C}$.
     The diagonal elements of this matrix, namely, ${A}_{n_X^2c+j, n_X^2c+j}$, $0 \leq c < N_C$,
    $0 \leq j < n_X^2$ are the central elements of the $(c+1)$-th convolutional
    filter mapping $X^{c+1}(k)$, into
     $X^{c+1}(k+1)$,
    denoted by $C_{i_\text{centre}}^{c}$. The other elements
    in row $n_X^2c+j$, $0 \leq c < N_C$, $0 \leq j < n_X^2$ are the remaining
     filter values mapping to $X^{(c+1)}(k+1)$.
\end{lemma}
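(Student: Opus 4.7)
The plan is to rewrite the convolutional recursion Eq.~(\ref{eq:CNNmodel_rewritten2}) as an equivalent fully--connected update by vectorising the feature maps, and then identify the entries of the resulting state--transfer matrix with the filter coefficients. Because convolution is a (multi)linear operation on its input tensor, this is essentially a change of basis: once the vectorisation is fixed, the state matrix $A$ and input matrix $B$ are uniquely determined, and the claim about diagonal versus off--diagonal entries becomes a direct reading of the filter coordinates.

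Concretely, first I would fix the spatial bookkeeping: with zero padding $p\in\mathbb{N}$ and odd filter size $n_C=2p+1$, stride $s=1$, the output of each channel convolution has the same spatial size $n_X\times n_X$ as the input, so stacking $X(k)=(X^1(k),\dots,X^{N_C}(k))$ into a vector $x(k)\in\mathbb{R}^{n_X^2 N_C}$ (say by concatenating channels, each flattened in row--major order) produces an object of the right dimensionality. I would then define $A\in\mathbb{R}^{n_X^2 N_C\times n_X^2 N_C}$ blockwise: the $(c,i)$ block $A^{c,i}\in\mathbb{R}^{n_X^2\times n_X^2}$ is the matrix representation of ``convolve with $C^c_i$ and zero--pad'', i.e.\ the entry of $A^{c,i}$ coupling spatial output position $(r,s)$ to spatial input position $(r',s')$ is $C^c_i(p+r'-r,\,p+s'-s)$ when this index lies in $[0,n_C)^2$ and zero otherwise. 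Doing the same vectorisation for $U$ gives matrices $B$ from the $D^c_j$, and stacking the biases $E^c$ gives $b$. This reduces Eq.~(\ref{eq:CNNmodel_rewritten2}) to exactly the fully--connected form Eq.~(\ref{eq:DNN}), whose Jacobian is already given by Eq.~(\ref{eq:jac_gen}).

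The final structural claim is now a direct inspection. The diagonal entry of $A$ in row $n_X^2 c+j$ corresponds to $(c,i)=(c,c)$ and $(r,s)=(r',s')$, for which my formula above evaluates the filter at offset $(p,p)$, which is by definition the central tap $C^{c}_{i_\text{centre}}$. The remaining entries in the same row lie either in the same block $A^{c,c}$ (at other spatial offsets of the self--channel filter $C^c_c$) or in cross--channel blocks $A^{c,i}$ with $i\neq c$, and in both cases they are exactly the non--central filter values mapping into $X^{c+1}(k+1)$, as claimed.

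The only step that requires care is the boundary: with zero padding $p=(n_C-1)/2$, the ``virtual'' input values outside $[0,n_X)^2$ are exactly zero, so they contribute no variable to $x(k)$ and hence no nonzero entry to $A$; this is what keeps $A$ square of size $n_X^2 N_C$ rather than growing to accommodate padded positions. I expect this indexing/padding bookkeeping to be the main (and only) obstacle to making the argument fully rigorous; everything else is the standard correspondence between discrete convolutions and their matrix representation.
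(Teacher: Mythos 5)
Your proposal is correct and follows essentially the same route as the paper: vectorise the stacked channel maps, identify the block structure of $A$ (block $(c,i)$ being the matrix of ``convolve with $C^c_i$''), and observe that the coefficient coupling a spatial position to itself is the filter tap at offset $(p,p)$, i.e.\ the central element. The paper builds the rows of $A$ patch-by-patch from explicit padded examples and then inducts, whereas you give the closed-form index formula $C^c_i(p+r'-r,\,p+s'-s)$ directly, but this is a presentational difference, not a different argument.
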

To fulfill the stability condition, the first step is to set $C_{i_\text{centre}}^{c}=-1-\delta_c$,
where $\delta_c$ is trainable parameter satisfying $|\delta_c|<1-\eta$, and
$0<\eta\ll1$ is a hyper-parameter. Then we will suitably bound the $\infty$-norm
of the Jacobian by constraining the remaining filter elements. The steps are
summarized in Algorithm~\ref{alg:cnn_rep} which is inspired by the Gershgorin
Theorem \cite{Horn:2012:MA:2422911}. The following result is obtained:
\begin{theorem}(Convolutional weight projection)\label{th:implem_conv}\\
 Algorithm~\ref{alg:cnn_rep} fulfils Condition~\ref{assum:simple_jac}
  for the convolutional layer, for $h\leq1$, hence Theorem
  \ref{th:BIBO1} holds.
    \end{theorem}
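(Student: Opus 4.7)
The plan is to exploit the structure of the Jacobian given by Lemma~\ref{lem:CNNconn} and show that, after Algorithm~\ref{alg:cnn_rep} is applied, each row of the Jacobian satisfies a Gershgorin-type bound that forces every eigenvalue into the open unit disk. By Lemma~\ref{lem:CNNconn}, the convolutional Jacobian has the form $J(x,u)=I+h D_\sigma(\Delta x)\, A$ where $D_\sigma$ is the diagonal matrix of activation derivatives (entries in $[0,1]$ for both $\tanh$ and ReLU, and in $[\underline{\sigma},1]$ on $\mathcal{P}$), and $A\in\mathbb{R}^{n_X^2 N_C\times n_X^2 N_C}$ has its diagonal entries in row $n_X^2c+j$ equal to $\tilde{C}^{c}_{i_{\text{centre}}}$ and its off-diagonal entries in that row drawn from the non-central filter values mapping into $X^{c+1}(k+1)$.

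After Algorithm~\ref{alg:cnn_rep}, for every output channel $c$ we have $\tilde{C}^{c}_{i_{\text{centre}}}=-1-\tilde{\delta}_c$ with $|\tilde{\delta}_c|\leq 1-\eta$, and $\sum_{j\neq i_{\text{centre}}}|\tilde{C}^{c}_j|\leq 1-\epsilon-|\tilde{\delta}_c|$. Because of zero padding, the $\ell_1$ sum of off-diagonal entries in row $n_X^2 c + j$ of $A$ is upper bounded by this filter sum (with equality at interior positions and strictly smaller at the boundary). I will then apply Gershgorin's theorem to $J$: the eigenvalues lie inside the union of disks centered at $1+h D_\sigma^{(i)}\,\tilde{C}^{c}_{i_{\text{centre}}}=1-hD_\sigma^{(i)}(1+\tilde{\delta}_c)$, each with radius at most $hD_\sigma^{(i)}(1-\epsilon-|\tilde{\delta}_c|)$. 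Writing $\alpha=hD_\sigma^{(i)}\in(0,1]$ (using $h\leq1$), it suffices to show $|\mathrm{center}|+\mathrm{radius}<1$ for each disk, which is equivalent to bounding the $\infty$-norm of $J$.

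The core computation is then a short case split on the sign of $1-\alpha(1+\tilde{\delta}_c)$. In the regime $\alpha(1+\tilde{\delta}_c)\leq 1$, the bound collapses to $1-\alpha(\epsilon+\tilde{\delta}_c+|\tilde{\delta}_c|)\leq 1-\alpha\epsilon<1$ since the parenthesis is $\epsilon$ for $\tilde{\delta}_c\leq0$ and $\epsilon+2\tilde{\delta}_c$ for $\tilde{\delta}_c\geq0$. In the opposite regime $\alpha(1+\tilde{\delta}_c)>1$, the bound becomes $\alpha(2+\tilde{\delta}_c-\epsilon-|\tilde{\delta}_c|)-1\leq\alpha(2-\epsilon)-1$, and because $\alpha\leq1$ and $\epsilon>0$ this is strictly less than $2-\epsilon-1<1$. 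In both cases the estimate is strict on $\mathcal{P}$ since $\alpha\geq h\underline{\sigma}>0$, so $\bar{\rho}=\sup_{(x,u)\in\mathcal{P}}\rho(J(x,u))\leq\sup\|J\|_\infty<1$, establishing Condition~\ref{assum:simple_jac} and hence Theorem~\ref{th:BIBO1}.

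The main obstacle is bookkeeping rather than technique: one must verify that the off-diagonal row sums of $A$ are truly majorised by the algorithm's per-channel filter sum (including the padding caveat and the fact that contributions from filters $C^{c}_i$ with $i\neq c$ to the row corresponding to channel $c$ are also captured by $\sum_{j\neq i_{\text{centre}}}|C^{c}_j|$ under the notation used in Algorithm~\ref{alg:cnn_rep}), and that the diagonal factor $D_\sigma$ does not spoil Gershgorin's bound, which it does not because $D_\sigma$ is a nonnegative diagonal multiplier and Gershgorin is applied row-wise directly to $J$.
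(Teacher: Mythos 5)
Your proposal is correct and follows essentially the same route as the paper's proof: it uses the row structure of $A$ from Lemma~\ref{lem:CNNconn}, the Gershgorin-style row-sum bound $\|J\|_\infty<1$ obtained from the algorithm's constraints $\tilde{C}^{c}_{i_\text{centre}}=-1-\tilde{\delta}_c$ and $\sum_{j\neq i_\text{centre}}|\tilde{C}^{c}_j|\leq 1-\epsilon-|\tilde{\delta}_c|$, and concludes via $\rho(J)\leq\|J\|_\infty$. The only cosmetic difference is that the paper replaces your sign case-split on the disk center by the single triangle-inequality bound $|1-h\sigma'_{ii}(1+\delta_c)|\leq 1-h\sigma'_{ii}+h\sigma'_{ii}|\delta_c|$ (valid since $h\sigma'_{ii}\leq 1$), arriving at the same estimate $\|J\|_\infty\leq 1-h\underline{\sigma}\epsilon<1$ on $\mathcal{P}$.
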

Note that the algorithm complexity scales with the number of filters.
A simple design choice for the layer is to set $\delta=0$, which
results in $C_{i_\text{centre}}^{c}$ being fixed at $-1$.\footnote{Setting $\delta=0$ removes
the need for hyper-parameter $\eta$ but does not necessarily reduce
conservativeness
as it will further constrain the remaining element of the filter bank.
This is further discussed in the supplementary.}%

\section{Experiments}
\label{sec:experiments}

Experiments were conducted comparing NAIS-Net with ResNet, and variants
thereof, using both fully-connected (MNIST,
section~\ref{sec:analysis}) and convolutional (CIFAR-10/100,
section~\ref{sec:classification}) architectures to quantitatively
assess the performance advantage of having a VDNN where stability
is enforced.

\begin{figure}[t]
\centering
    \noindent
    \begin{minipage}{0.48\columnwidth}
    \begin{figure}[H]
        \centering
            \includegraphics[width=\linewidth]{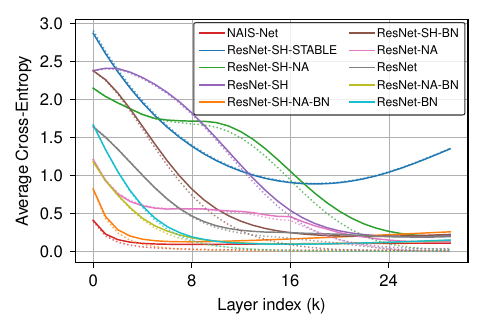}
    \end{figure}
    \end{minipage}
    \noindent
    \begin{minipage}{0.48\columnwidth}
    \begin{figure}[H]
     \includegraphics[width=\linewidth]{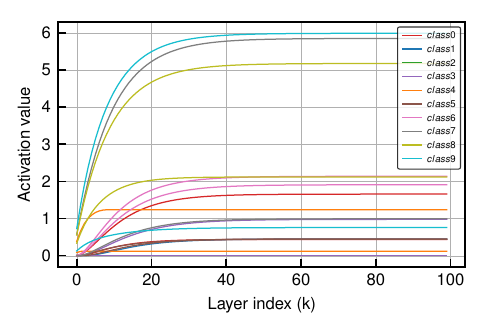}
     \end{figure}
    \end{minipage}
    \caption{
    \small{\bf Single neuron trajectory and convergence. (Left)}
    Average loss of NAIS-Net with different residual
    architectures over the unroll length.
    Note that both {\sc ResNet-SH-Stable} and NAIS-Net satisfy the stability conditions for
    convergence, but only NAIS-Net is able to learn, showing the importance of non-autonomy.
    \small{\bf Cross-entropy loss vs processing depth. (Right)}
    Activation of a NAIS-Net single neuron for
    input samples from each class on MNIST. Trajectories not only differ with respect to
    the actual steady-state but also with respect to the convergence time.}
    \label{fig:sharedvsunshared}
\end{figure}
\subsection{Preliminary Analysis on MNIST}\label{sec:analysis}

\setlist[enumerate]{topsep=0mm,itemsep=0pt,labelindent=0cm,rightmargin=1cm,leftmargin=.5cm}
\newcommand\netitem[1]{\item[]{\sc\bfseries #1:}}

For the MNIST dataset~\cite{lecun1998mnist} a
  single-block NAIS-Net was compared with 9 different $30$-layer ResNet
  variants %
  each with a
  different combination of the following features:
{\bf SH} (shared weights i.e. time-invariant),
{\bf NA} (non-autonomous i.e. input skip connections),
{\bf BN} (with Batch Normalization),
{\bf Stable} (stability enforced by Algorithm~\ref{alg:fc_rep}).
For example, {\sc ResNet-SH-NA-BN} refers to a 30-layer ResNet
that is time-invariant because weights are shared across all layers
(SH), non-autonomous because it has skip connections from the input to
all layers (NA), and uses batch normalization (BN).  Since NAIS-Net is
time-invariant, non-autonomous, and input/output stable
(i.e.\ {\sc SH-NA-Stable}), the chosen ResNet variants represent ablations of the
these three features.  For instance, {\sc ResNet-SH-NA} is a
NAIS-Net without I/O stability being enforced by the reprojection step
described in Algorithm~\ref{alg:fc_rep}, and {\sc ResNet-NA}, is a
non-stable NAIS-Net that is time-variant, i.e\ non-shared-weights, etc.
The NAIS-Net was unrolled for $K=30$ iterations for all input patterns.
All networks were trained using stochastic gradient descent with
momentum $0.9$ and learning rate $0.1$, for 150 epochs.

\paragraph{Results.}
Test accuracy for NAIS-NET was $97.28\%$, while
 {\sc ResNet-SH-BN}  was second best with $96.69\%$, but without
 BatchNorm ({\sc
  ResNet-SH}) it only achieved $95.86\%$  (averaged over
10 runs).

After training, the behavior of each network variant was analyzed
by passing the activation, $x(i)$, though the softmax classifier and
measuring the cross-entropy loss. The loss at each
iteration describes the trajectory of each sample in the latent
space: the closer the sample to the correct steady state the closer
the loss to zero (see Figure~\ref{fig:sharedvsunshared}).
 All variants initially refine their predictions at
each iteration since the loss tends to decreases at each layer,
but at different rates. However, NAIS-Net is the only one that does
so monotonically, not increasing loss as $i$ approaches $30$.
Figure~\ref{fig:sharedvsunshared} shows how neuron activations in NAIS-Net
converge to different steady state activations for different input
patterns instead of all converging to zero as is the case with {\sc
  ResNet-SH-Stable}, confirming the results of~\cite{haber2017}.
Importantly, NAIS-Net is able to
learn even with the stability constraint, showing that non-autonomy is
key to obtaining representations that are stable {\em and} good for learning the task.

NAIS-Net also allows training of unbounded processing depth without
any feature normalization steps.  Note that BN actually
speeds up loss convergence, especially for {\sc
  ResNet-SH-NA-BN} (i.e.\ unstable NAIS-Net). Adding BN makes the
behavior very similar to NAIS-Net because BN also implicitly normalizes
the Jacobian, but it does not ensure that its eigenvalues are in the
stability region.

\subsection{Image Classification on CIFAR-10/100}\label{sec:classification}
Experiments on image classification were performed on standard image
recognition benchmarks CIFAR-10 and CIFAR-100~\cite{krizhevsky2009cifar}.
These benchmarks are simple enough to allow for multiple runs to test for
statistical significance,  yet sufficiently complex to
require convolutional layers.

\paragraph{Setup.}
The following standard architecture was used to compare NAIS-Net with
ResNet\footnote{\url{https://github.com/tensorflow/models/tree/master/official/resnet}}:
three sets of $18$ residual blocks with $16$, $32$, and $64$ filters,
respectively, for a total of $54$ stacked blocks. NAIS-Net was tested in two versions: {\sc NAIS-Net1} where each block is
unrolled just once, for a total processing depth of 108, and {\sc
  NAIS-Net10} where each block is unrolled 10 times per block, for a
total processing depth of 540. The initial learning
rate of $0.1$ was decreased by a factor of $10$ at epochs $150$, $250$
and $350$ and the experiment were run for 450 epochs.  Note that each
block in the ResNet of~\cite{he2015a} has two convolutions (plus
BatchNorm and ReLU) whereas NAIS-Net unrolls with a single convolution.
Therefore, to make  the comparison  of the two architectures as fair
as possible by using the same number of parameters, a single
convolution was also used for ResNet.

\paragraph{Results.}
\begin{figure}[t]%
\noindent\begin{minipage}{0.44\columnwidth}
    \begin{table}[H]\label{tab:cifar}
        \begin{small}
        \begin{sc}
        \begin{tabular}{lcccr}
        \toprule
        Model & CIFAR-10 & CIFAR-100 \\
        & train/test & train/test \\
        \midrule
        \midrule
         ResNet   &99.86$ \pm $0.03  & 97.42 $\pm$ 0.06 \\
                  &91.72$ \pm $0.38  & 66.34 $\pm$ 0.82 \\
        \midrule
        NAIS-Net1  &99.37$ \pm $0.08  & 86.90 $\pm$ 1.47 \\
                   &91.24$ \pm $0.10   & 65.00 $\pm$ 0.52 \\
        NAIS-Net10 &99.50$ \pm $0.02  & 86.91 $\pm$ 0.42 \\
                   &91.25$ \pm $0.46  & 66.07 $\pm$ 0.24 \\
        \bottomrule
        \end{tabular}
        \end{sc}
        \end{small}
        \vspace{1.1cm}

    \end{table}
\end{minipage}
~~~~~~~~
\noindent\begin{minipage}{0.5\columnwidth}
\begin{figure}[H]
    \centering
    \includegraphics[width=\columnwidth]{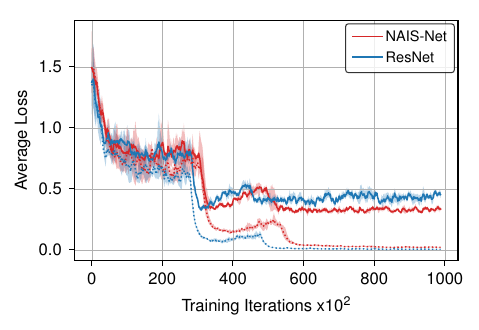}
    \end{figure}
\end{minipage}
    \vspace{-0.5cm}
    \caption{
        \small{\bf CIFAR Results. (Left)}{ Classification accuracy on the CIFAR-10 and CIFAR-100 datasets averaged over 5 runs}.
        \small{\bf Generalization gap on CIFAR-10. (Right)}  Dotted curves
      (training set) are
      very similar for the two networks but NAIS-Net has a
      considerably lower test curve (solid).
  }
  \label{fig:generalizationgap}
\end{figure}
Table~\ref{tab:cifar} compares the performance on the two datasets,
averaged over 5 runs.  For CIFAR-10, NAIS-Net and ResNet performed
similarly, and unrolling NAIS-Net for more than one iteration had
little affect.  This was not the case for CIFAR-100 where {\sc
  NAIS-Net10} improves over {\sc NAIS-Net1} by $1\%$.  Moreover,
although mean accuracy is slightly lower than ResNet, the variance is
considerably lower. Figure~\ref{fig:generalizationgap} shows that
NAIS-Net is less prone to overfitting than a classic ResNet, reducing
the generalization gap by 33\%.  This is a consequence of the stability
constraint which imparts a degree of robust invariance to input
perturbations (see Section \ref{sec:nonautonomousResNet}).  It is also
important to note that NAIS-Net can unroll up to $540$ layers, and still
train without any problems.

\subsection{Pattern-Dependent Processing Depth}\label{sec:depth}
For simplicity, the number of unrolling steps per block in the
previous experiments was fixed.  A more general and potentially more
powerful setup is to have the processing depth adapt automatically.
Since NAIS-Net blocks are guaranteed to converge to a pattern-dependent
steady state after an indeterminate number of iterations, processing
depth can be controlled dynamically by terminating the unrolling
process whenever the distance between a layer representation, $x(i)$,
and that of the immediately previous layer, $x(i-1)$, drops below a
specified threshold.
With this mechanism, NAIS-Net can determine the processing depth for
each input pattern.  Intuitively, one could speculate that similar
input patterns would require similar processing depth in order to be
mapped to the same region in latent space.  To explore this
hypothesis, NAIS-Net was trained on CIFAR-10 with an unrolling
threshold of $\epsilon=10^{-4}$.  At test time the network was
unrolled using the same threshold.

Figure~\ref{fig:histdepth} shows selected images from four
different classes organized according to the final network depth used
to classify them after training.  The qualitative differences seen from
low to high depth suggests that NAIS-Net is using processing depth as
an additional degree of freedom so that, for a given training run, the
network learns to use models of different complexity (depth) for
different types of inputs within each class.  To be clear, the hypothesis is not that
depth correlates to some notion of input complexity where the same images are
always classified at the same depth across runs.

\begin{figure*}[t]
  \centering
  \subfigure[frog]{\includegraphics[width=0.49\textwidth]{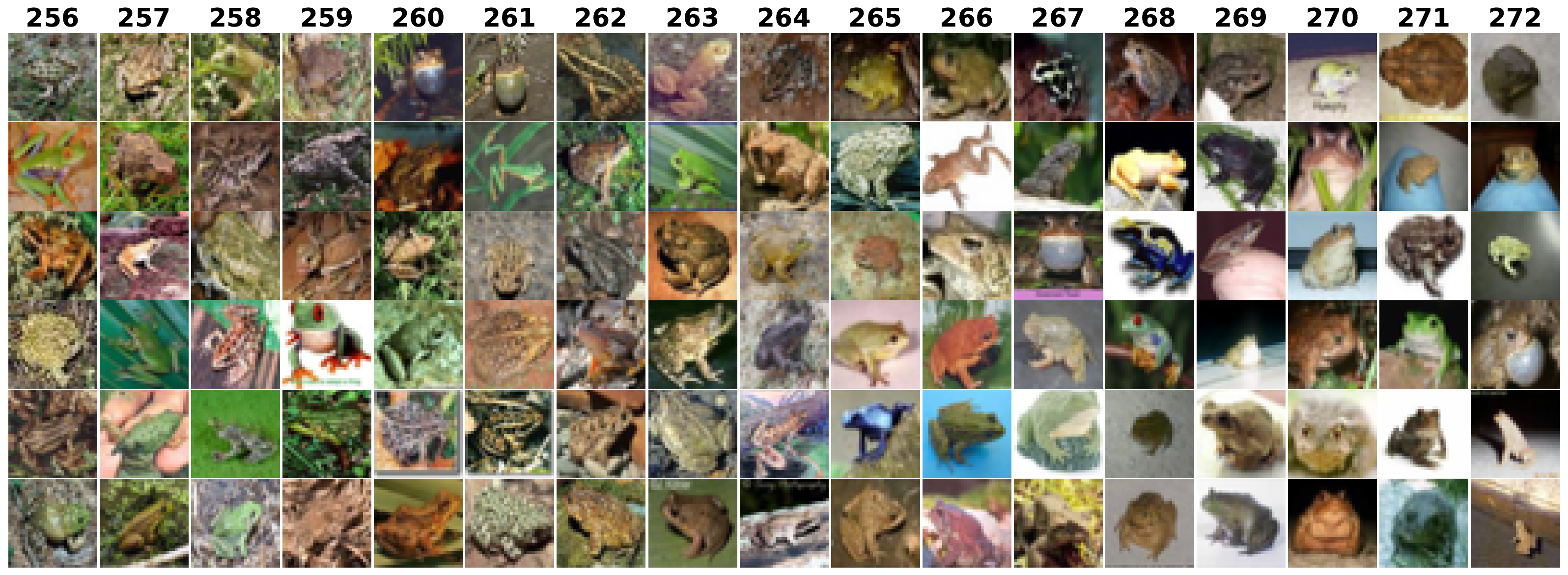}}
    \subfigure[bird]{\includegraphics[width=0.49\textwidth]{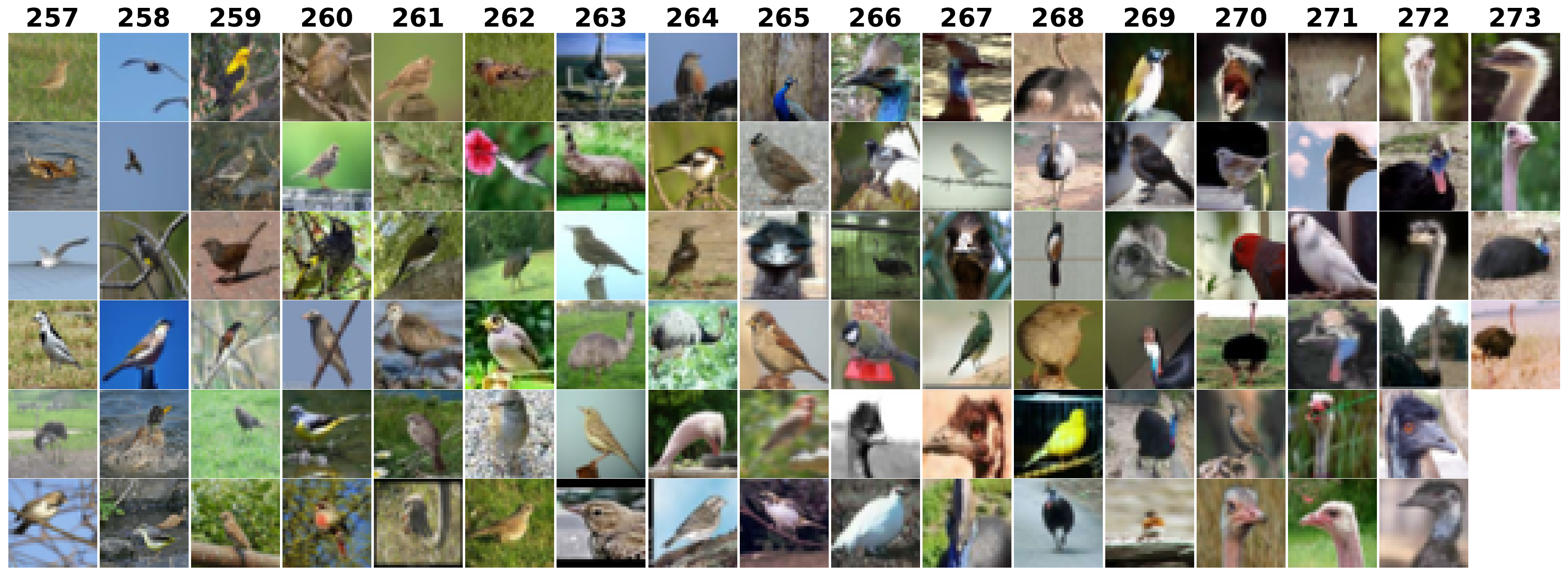}}
    \subfigure[ship]{\includegraphics[width=0.50\textwidth]{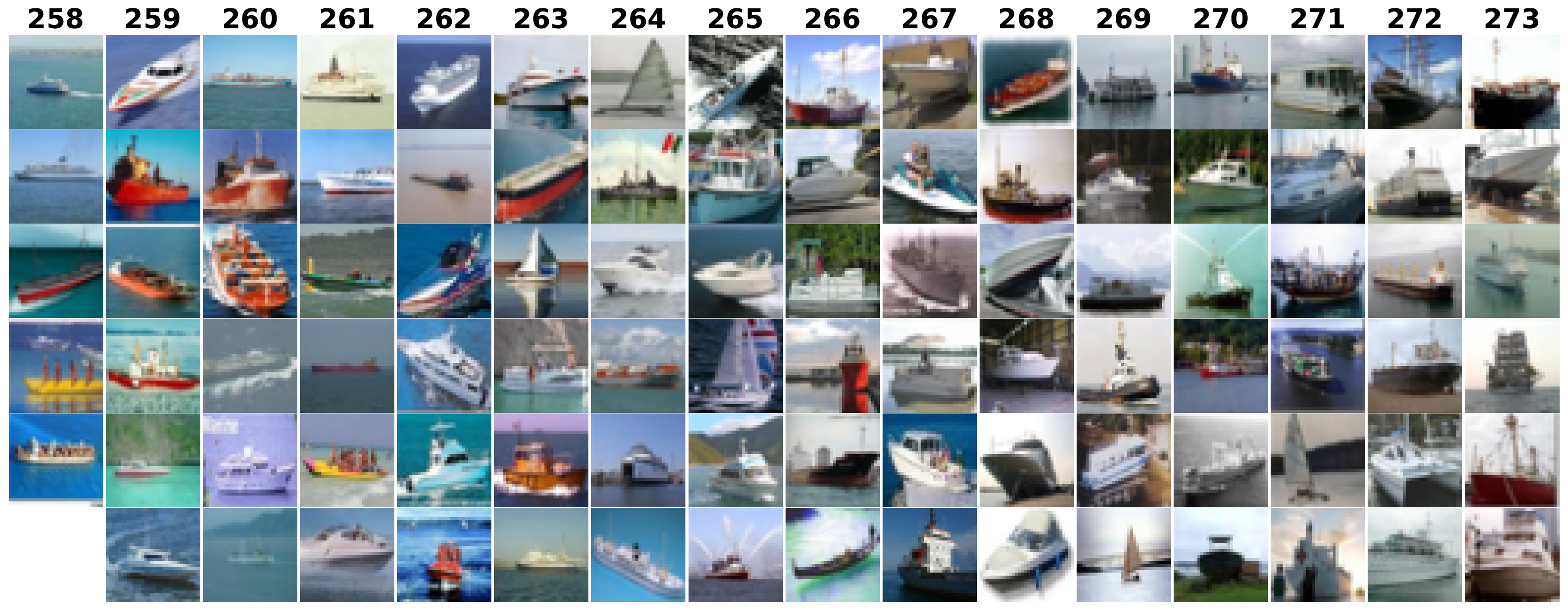}}
    \subfigure[airplane]{\includegraphics[width=0.48\textwidth]{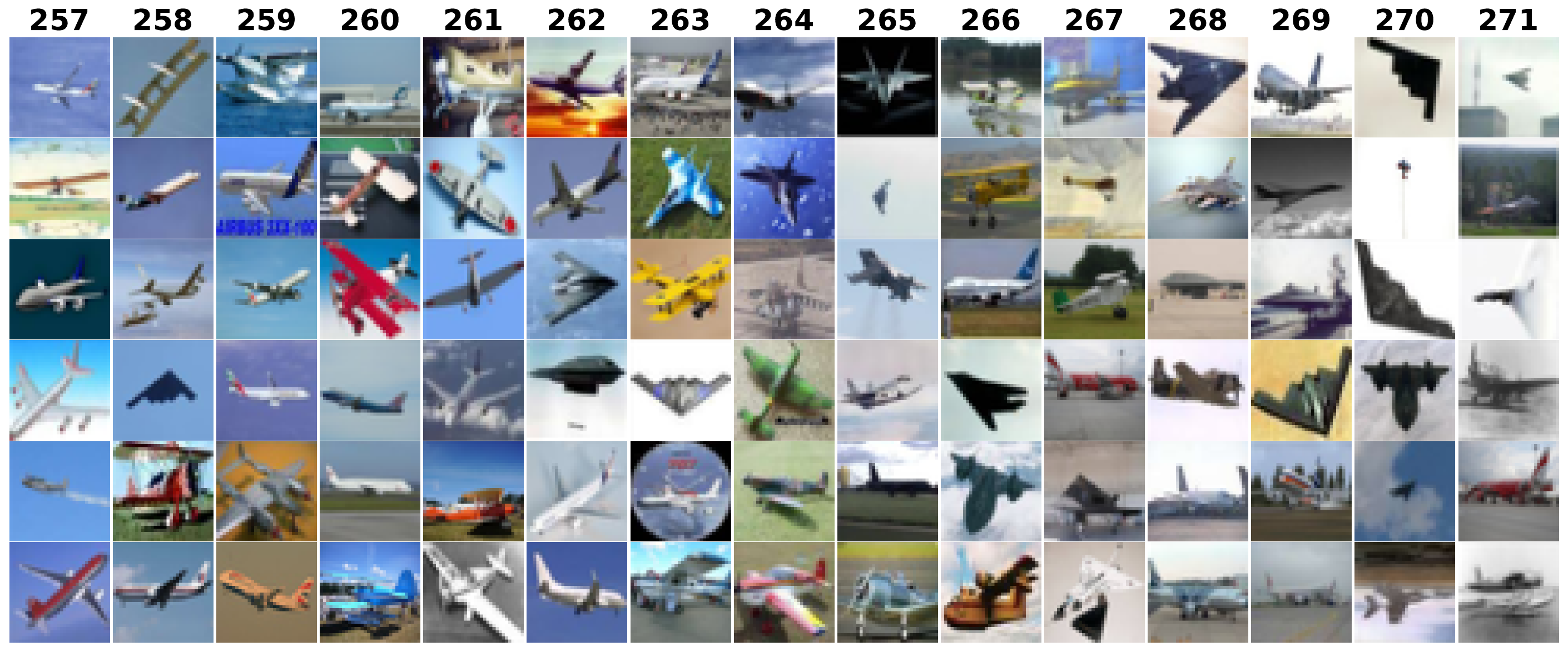}}
    \vspace{-3mm}
    \caption{\small{\bf Image samples with corresponding NAIS-Net depth.} The
      figure shows samples from CIFAR-10 grouped by final network
      depth, for four different classes. The qualitative differences
      evident in images inducing different final depths indicate that
      NAIS-Net adapts processing systematically according
      characteristics of the data.  For example, \emph{``frog''}
      images with textured background are processed with fewer
      iterations than those with plain background. Similarly,
      \emph{``ship''} and \emph{``airplane''} images having a
      predominantly blue color are processed with lower depth than
      those that are grey/white, and \emph{``bird''} images are grouped
      roughly according to bird size with larger species such as
  ostriches and turkeys being classified with greater processing depth.
  A higher definition version of the figure is made available in the supplementary materials.}
    \label{fig:histdepth}
\end{figure*}

\section{Conclusions}
\label{conclusions}

We presented NAIS-Net, a non-autonomous residual architecture that can be
unrolled until the latent space representation converges to a stable
input-dependent state. This is achieved thanks to stability and
non-autonomy properties.
We derived stability conditions for the model and proposed two efficient
reprojection algorithms, both for fully-connected and convolutional layers, to
enforce the network parameters to stay within the set of feasible solutions during
training.

NAIS-Net achieves asymptotic stability and, as consequence of that,
input-output stability. Stability makes the model more robust and we observe a
reduction of the generalization gap by quite some margin,
without negatively impacting performance. The question of scalability to benchmarks such as
ImageNet~\cite{deng2009a} will be a main topic of future work.

We believe that cross-breeding machine learning and control theory
will open up many new interesting avenues for research, and that more
robust and stable variants of commonly used neural networks, both
feed-forward and recurrent, will be possible.

\section*{Aknowledgements}
We want to thank Wojciech Jaśkowski, Rupesh Srivastava and the anonymous reviewers for their
comments on the idea and initial drafts of the paper.
\medskip

\small

\bibliography{urdeq.bib}
\bibliographystyle{plain}

\newpage
\rule[0pt]{\columnwidth}{3pt}
\begin{center}
\huge{\bf{\sc NAIS-Net}: Stable Deep Networks from Non-Autonomous  Differential Equations \\
\emph{Supplementary Material}}
\end{center}
\vspace*{3mm}
\rule[0pt]{\columnwidth}{1pt}%
\vspace*{1in}

\appendix

\section{Basic Definitions for the Tied Weight Case}
Recall, from the main paper, that the stability of a NAIS-Net block with fully connected or convolutional architecture can be analyzed by means of the following vectorised representation:
\begin{equation}\label{eq:DNN_tied}
    x(k+1) = f(x(k),u) = x(k)+h\sigma\bigg(Ax(k)+Bu+b\bigg),
\end{equation}
where $k$ is the unroll index for the considered block. Since the blocks are cascaded,
stability of each block implies stability of the full network. Hence, this supplementary material focuses on theoretical results for a single block.
\subsection{Relevant Sets and Operators}
\subsubsection{Notation}
Denote the slope of the activation function vector, $\sigma(\Delta x(k))$, as the diagonal matrix, $\sigma^{'}(\Delta x(k))$, with entries:
\begin{equation}
   \sigma^{'}_{ii}(\Delta x(k))=\frac{\partial\sigma_{i}(\Delta x(k))}{\partial\Delta x_i(k)}.
\end{equation}

The following definitions will be use to obtain the stability results, where $0<\underline{\sigma}\ll1$:
\begin{equation}
\begin{aligned}
    \mathcal{P}_i = \{(x,u)\in\mathbb{R}^{n}\times \mathbb{R}^{m}:
        \sigma^{'}_{ii}(x,u)\geq\underline{\sigma}\}, \\
    \mathcal{P} = \{(x,u)\in\mathbb{R}^{n}\times \mathbb{R}^{m}:
        \sigma^{'}_{ii}(x,u)\geq\underline{\sigma}, \forall i\}, \\
    \mathcal{N}_i = \mathcal{P} \cup \{(x,u)\in\mathbb{R}^{n}\times \mathbb{R}^{m}:
        \sigma^{'}_{ii}(x,u)\in [0,\underline{\sigma})\}, \\
    \mathcal{N} = \{(x,u)\in\mathbb{R}^{n}\times \mathbb{R}^{m}:
        \sigma^{'}_{ii}(x,u)\in [0,\underline{\sigma}), \forall i\}, \\
\end{aligned}
\end{equation}
In particular, the set $\mathcal{P}$ is such that the activation function is
not saturated as its derivative has a non-zero lower bound.

\subsubsection{Linear Algebra Elements}
The notation, $\|\cdot\|$ is used to denote a \emph{suitable} matrix norm.
This norm will
be characterized specifically on a case by case basis.
The same norm will be used consistently throughout definitions, assumptions and proofs.

We will often use the following:
\begin{lemma}\label{lemma:shift}(Eigenvalue shift)

    Consider two matrices $A\in\mathbb{C}^{n \times n}$,
    and $C=cI+A$ with $c$ being a complex scalar.
    If $\lambda$ is an eigenvalue of $A$ then $c+\lambda$
    is an eigenvalue of $C$.
\end{lemma}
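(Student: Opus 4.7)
The plan is to give the standard one-line eigenvector argument, since the claim is a textbook fact about how adding a scalar multiple of the identity shifts every eigenvalue by that scalar. First I would fix an eigenvalue $\lambda$ of $A$ and pick a corresponding (nonzero) eigenvector $v\in\mathbb{C}^{n}$, guaranteed to exist by definition of eigenvalue. Then I would simply compute $Cv = (cI+A)v = cv + Av = cv + \lambda v = (c+\lambda)v$, so $v$ is an eigenvector of $C$ with eigenvalue $c+\lambda$, which is exactly the claim.

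As a sanity check and to cover the opposite direction (not actually needed for the stated lemma, but useful context for its later use), I would also note the characteristic-polynomial proof: for any scalar $\mu\in\mathbb{C}$,
\begin{equation*}
\det(C-\mu I) = \det\bigl(A - (\mu-c)I\bigr),
\end{equation*}
so the spectrum of $C$ is exactly $\{c+\lambda : \lambda \in \mathrm{spec}(A)\}$, preserving algebraic multiplicities. This stronger form is convenient since later proofs (e.g.\ about Condition~\ref{assum:simple_jac}) apply the shift with $c=1$ to $hM$ where $M$ is the Jacobian-like matrix, and need to conclude that all eigenvalues of $I+hM$ lie inside the unit disc iff all eigenvalues of $hM$ lie in the disc of radius $1$ centered at $-1$.

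There is no real obstacle here; the only thing worth being careful about is not claiming more than the statement asserts (e.g.\ not inadvertently asserting that all eigenvalues of $C$ arise this way without invoking the determinant identity). I would therefore present the eigenvector computation as the main proof, and optionally append one sentence with the $\det$ identity to justify that no ``new'' eigenvalues are introduced, in case subsequent results rely on that stronger bijection.
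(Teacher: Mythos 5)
Your proof is correct and uses essentially the same argument as the paper: fix an eigenpair $(\lambda, v)$ of $A$ and compute $Cv=(cI+A)v=(c+\lambda)v$. The additional characteristic-polynomial remark showing that the spectrum of $C$ is exactly the shifted spectrum of $A$ is a harmless (and slightly stronger) supplement that the paper omits but implicitly relies on.
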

\begin{proof}
Given any eigenvalues $\lambda$ of $A$ with corresponding eigenvector $v$ we have that:
    \begin{equation}
        \lambda v=Av \Leftrightarrow (\lambda+c)v = \lambda v + c v=Av+cv=Av + cIv=(A+cI)v=Cv
    \end{equation}
\end{proof}

Throughout the material, the notation $A_{i\bullet}$ (or equivalently $A_{i}$) is used to denote the $i$-th row of a matrix $A$. %

\subsubsection{Non-autonomuous Behaviour Set}
The following set will be consider throughout the paper:

\begin{definition}(Non-autonomous behaviour set)
The set $\mathcal{P}$ is referred to as the set of \emph{fully non-autonomous}
behaviour in the extended state-input space, and its set-projection over
$x$, namely,
\begin{equation}
    \pi_x(\mathcal{P})=\{x\in\mathbb{R}^n: \exists u\in\mathbb{R}^m,\ (x,u)\in\mathcal{P}\},
\end{equation}
is the set of fully non-autonomous behaviour in the state space. This is the
only set in which every output dimension of the  ResNet with input skip
connection can be directly influenced by the input, given a
non-zero\footnote{The concept of controllability is not introduced here. In the
    case of deep networks we just need $B$ to be non-zero to provide input skip
    connections. For the general case of time-series identification and control,
    please refer to the definitions in \cite{sontag_book}.} matrix $B$.
\end{definition}

Note that, for a $\tanh$ activation, then we simply have that
$\mathcal{P}\subseteq\mathbb{R}^{n+ m}$ (with
$\mathcal{P}\rightarrow\mathbb{R}^{n+ m}$ for
$\bar{\sigma}^{'}\rightarrow0$). For a ReLU activation, on the other hand, for
each layer $k$ we have:
\begin{equation}
    \mathcal{P}=\mathcal{P}(k)=\{(x,u)\in\mathbb{R}^{n}\times \mathbb{R}^{m}:\ A(k)x+B(k)u+b(k)> 0\}.
\end{equation}

\subsection{Stability Definitions for Tied Weights}
This section provides a summary of definitions borrowed from control theory
that are used to describe and derive our main result. The following definitions
 have been adapted from \cite{gallieri_book} and
 refer to the general dynamical
system:
\begin{equation}\label{eq:system}
    x^{+}=f(x,u).
\end{equation}
Since we are dealing with a cascade of dynamical systems (see Figure 1 in (\emph{main paper})),
then stability of the entire network can be enforced by
having stable blocks
\cite{khalil2001}.
In the remainder of this material, we will therefore address a single unroll.
We will cover both the tied and untied weight case,
starting from the latter as it is the most general.

\subsubsection{Describing Functions}
The following functions are instrumental to describe the desired behaviour of
the network output at each layer or time step.

\begin{definition}($\mathcal{K}$-function)
    A continuous function
    $\alpha:\mathbb{R}_{\geq0}\rightarrow\mathbb{R}_{\geq0}$ is said to be a
    $\mathcal{K}$-function ($\alpha\in\mathcal{K}$) if it is strictly
    increasing, with $\alpha(0)=0$.
\end{definition}

\begin{definition}($\mathcal{K}_\infty$-function)
    A continuous function
    $\alpha:\mathbb{R}_{\geq0}\rightarrow\mathbb{R}_{\geq0}$ is said to be a
    $\mathcal{K}_\infty$-function ($\alpha\in\mathcal{K}_\infty$) if it is a
    $\mathcal{K}$-function and if it is radially unbounded, that is $\alpha(r)
    \rightarrow\infty$ as $r\rightarrow\infty$.
\end{definition}

\begin{definition}($\mathcal{KL}$-function)
    A continuous function $\beta:\mathbb{R}^2_{\geq0}\rightarrow\mathbb{R}_{\geq0}$
    is said to be a $\mathcal{KL}$-function ($\beta\in\mathcal{KL}$) if it is a
    $\mathcal{K}$-function in its first argument, it is positive definite and
    non- increasing in the second argument, and if $\beta(r,t)\rightarrow0$ as
    $t\rightarrow\infty$.
\end{definition}

The following definitions are given for time-invariant RNNs, namely DNN with
tied weights. They can also be generalised to the case of untied weights DNN
and time-varying RNNs by considering worst case conditions over the layer
(time) index $k$. In this case the properties are said to hold \emph{uniformly}
for all $k\geq0$. This is done in Section \ref{app:untied}. The tied-weight case follows.

\subsubsection{Invariance, Stability and Robustness}
\begin{definition}(Positively Invariant Set)
    A set $\mathcal{X}\subseteq \mathbb{R}^n$ is said to be positively
    invariant (PI) for a dynamical system under an input
    $u\in\mathcal{U}\subseteq \mathbb{R}^n$ if
    \begin{equation}
        f(x,u)\in\mathcal{X},\ \forall x\in\mathcal{X}.
    \end{equation}
\end{definition}

\begin{definition}(Robustly Positively Invariant Set)
    The set $\mathcal{X}\subseteq \mathbb{R}^n$ is said to be \emph{robustly
    positively invariant} (RPI) to additive input perturbations
    $w\in\mathcal{W}$ if $\mathcal{X}$ is PI for any input $\tilde{u}=u+w,\
    u\in\mathcal{U}, \forall w\in\mathcal{W}$.
\end{definition}

\begin{definition}(Asymptotic Stability)
    \label{def:stability}
    The system Eq. (\ref{eq:system}) is called Globally Asymptotically Stable
    around its equilibrium point $\bar{x}$ if it satisfies the following two
    conditions:
    \begin{enumerate}
        \item \bf{Stability}.
            \normalfont Given any $\epsilon > 0$, $\exists \delta_1 > 0$
            such that if $\|x(t_0)-\bar{x}\| < \delta_1$,
            then $\|x(t)-\bar{x}\| < \epsilon, \forall t > t_0$.
        \item \bf{Attractivity}.
            \normalfont  $\exists \delta_2 > 0$
            such that if $\|x(t_0)-\bar{x}\| < \delta_2$,
            then $x(t) \rightarrow \bar{x}$ as $t \rightarrow \infty$.
    \end{enumerate}
    If only the first condition is satisfied, then the system is \emph{globally
    stable}. If both conditions are satisfied only for some $\epsilon(x(0))>0$
    then the stability properties hold only locally and the system is said to
    be \emph{locally asymptotically stable}.

    Local stability in a PI set $\mathcal{X}$ is equivalent to the existence of
    a $\mathcal{KL}$-function $\beta$ and a finite constant $\delta\geq0$ such
    that:
    \begin{equation}
    	\|x(k)-\bar{x}\|\leq\beta(\|x(0)-\bar{x}\|,k)+\delta,\
        \forall x(0)\in\mathcal{X},\ k\geq0.
    \end{equation}
    If $\delta=0$ then the system is \emph{asymptotically} stable. If the
    positively-invariant set is $\mathcal{X}=\mathbb{R}^n$ then stability holds
    \emph{globally}.
\end{definition}

Define the system output as $y(k)=\psi(x(k))$, where $\psi$ is a continuous,
Lipschitz function. Input-to-Output stability provides a natural extension of
asymptotic stability to systems with inputs or additive
uncertainty\footnote{Here we will consider only the simple case of $y(k)=x(k)$,
therefore we can simply use notions of Input-to-State Stability (ISS).}.

\begin{definition}(Input-Output (practical) Stability) \label{def:IOpS}
    Given an RPI set $\mathcal{X}$, a constant nominal input $\bar{u}$ and a
    nominal steady state $\bar{x}(\bar{u})\in\mathcal{X}$ such that
    $\bar{y}=\psi(\bar{x})$, the system Eq. (\ref{eq:system}) is said to be
    \emph{input-output (practically) stable} to \emph{bounded additive input
    perturbations} (IOpS) in $\mathcal{X}$ if there exists a
    $\mathcal{KL}$-function $\beta$ and a $\mathcal{K}_\infty$ function
    $\gamma$ and a constant $\zeta>0$:

    \begin{eqnarray}
        \|y(k)-\bar{y}\|\leq \beta(\|y(0)-\bar{y}\|,k)+\gamma(\|w\|)+\zeta,\
        \forall x(0)\in \mathcal{X},  \\
        u = \bar{u}+w,\  \bar{u}\in\mathcal{U},\ \forall w\in\mathcal{W},\
        \forall k\geq0. \nonumber
    \end{eqnarray}
\end{definition}

\begin{definition}(Input-Output (Robust) Stability) \label{def:IOS}
    Given an RPI set $\mathcal{X}$, a constant nominal input $\bar{u}$ and a
    nominal steady state $\bar{x}(\bar{u})\in\mathcal{X}$ such that
    $\bar{y}=\psi(\bar{x})$, the system Eq. (\ref{eq:system}) is said to be
    \emph{input-output (robustly) stable} to \emph{bounded additive input
    perturbations} (IOS) in $\mathcal{X}$ if there exists a
    $\mathcal{KL}$-function $\beta$ and a $\mathcal{K}_\infty$ function
    $\gamma$ such that:

    \begin{eqnarray}
        \|y(k)-\bar{y}\|\leq \beta(\|y(0)-\bar{y}\|,k)+\gamma(\|w\|),\
        \forall x(0)\in \mathcal{X},  \\
        u = \bar{u}+w,\  \bar{u}\in\mathcal{U},\ \forall w\in\mathcal{W},\
        \forall k\geq0. \nonumber
    \end{eqnarray}
\end{definition}

\begin{definition}(Input-Output \emph{incremental} Stability) \label{def:deltaIOS}
    Given a pair of initial conditions $\{x(0),\ \bar{x}(0)\}$ and constant inputs $\{u,\ \bar{u}\}$, with $y=x$, system Eq.  (\ref{eq:system}) is said to be Globally 
    \emph{input-output incrementally stable} ($\delta$-IOS) if there exists a
    $\mathcal{KL}$-function $\beta$ and a $\mathcal{K}_\infty$ function
    $\gamma$ such that:

    \begin{eqnarray}
        \|y(k)-\bar{y}(k)\|\leq \beta(\|y(0)-\bar{y}(0)\|,k)+\gamma(\|u-\bar{u}\|), \\ 
        \forall \{x(0),\ \bar{x}(0)\} \in\mathbb{R}^{2n},\   
           \{u,\ \bar{u}\}\in\mathcal{U}^2, 
        \forall k\geq0. \nonumber
    \end{eqnarray}
\end{definition}

\subsection{Jacobian Condition for Stability}
We prove stability by means of the network Jacobian. The $1$-step state transfer Jacobian for tied weights is:
\begin{equation}\label{eq:jactied}
    J(x(k),u)=\frac{\partial f(x(k),u)}{\partial x(k)}=I+h\frac{\partial\sigma(\Delta x(k))}{\partial \Delta x(k)}A=I+h\sigma^{'}\left(\Delta x(k)\right)A.
\end{equation}
Recall, from the main paper, that the following stability condition is introduced:
\begin{condition}(Condition 1 from \emph{main paper})\label{assum:simple_jac_sup}

For any $\underline{\sigma}>0$, the Jacobian
satisfies:
\begin{equation} \label{eq:2norm_sup}
		\bar{\rho}=\sup_{(x,u)\in\mathcal{P}}\rho(J(x,u))<1,
\end{equation}
    where $\rho(\cdot)$ is the spectral radius.
\end{condition}

\subsection{Stability Result for Tied Weights}
Stability of NAIS-Net is described in Theorem 1 from the main paper. For the sake of completeness, a longer version of this theorem is introduced here and will be proven in Section \ref{sec:stab_proof}. Note that proving the following result is equivalent to proving Theorem 1 in the main paper.

\setcounter{theorem}{0}%
\begin{theorem}\label{th:stab1} (Asymptotic stability for shared weights)

If Condition~\ref{assum:simple_jac_sup} holds, then NAIS-Net with
 $tanh$ activation is Asymptotically Stable with respect to
\emph{input dependent} equilibrium points. More formally:
\begin{equation}
    x(k)\rightarrow\bar{x} \in\mathbb{R}^n,\
    \forall x(0)\in\mathcal{X}\subseteq\mathbb{R}^n,\ u\in\mathbb{R}^m.
\end{equation}
The trajectory is described by:
\begin{equation}
		\|x(k)-\bar{x}\|\leq\bar{\rho}^k \|x(0)-\bar{x}\|
\end{equation}
where $\|\cdot\|$ is a suitable matrix norm and $\bar{\rho}<1$ is given in Eq. (\ref{eq:2norm_sup}).

In particular:
\begin{enumerate}
\item The steady state is independent of the initial
state, namely,
$x(k)\rightarrow\bar{x} \in\mathbb{R}^n,\ \forall x(0)\in\mathbb{R}^n$.
The steady state is given by:
    $$\bar{x}=-A^{-1}{(B{u}+b)}. $$
    The network is Globally Asymptotically Stable with respect to $\bar{x}$.

\item The network is Globally
    Input-Output (robustly) Stable for any input perturbation $w\in\mathbb{R}^m$.
    The trajectory is described by:
    \begin{equation}
	      \|x(k)-\bar{x}\|\leq\bar{\rho}^{k} \|x(0)-\bar{x}\|+ \gamma(\|w\|),
    \end{equation}
    with input-output gain
    \begin{equation}\label{eq:ISSgain_tied_sup}
        \gamma(\|w\|)=h\frac{\|{B}\|}{(1-\bar{\rho})} \|w\|.
    \end{equation}
If $\|w\|\leq\mu$,
then the following set is robustly positively invariant:
    \begin{equation}
       {\mathcal{X}}=
       \left\{
           x\in\mathbb{R}^n : \|x-\bar{x}\|\leq %
\gamma(\mu)
       \right\}.
    \end{equation}
      In other words:
      \begin{equation}
	      x(0)\in\mathcal{X}\Rightarrow x(k)\in\mathcal{X},\  \forall k>0.
       \end{equation}
\end{enumerate}

 In the case of ReLU activation:
 \begin{enumerate}
\item The network is Globally
    \emph{incrementally} practically Stable ($\delta$-GpS). In other words, $\forall k\geq0$, given two initial conditions $\{x(0),\ \bar{x}(0)\}$ and the same input $u$, we have:
    \begin{equation}
    		\|x(k)-\bar{x}(k)\|\leq\bar{\rho}^k \|x(0)-\bar{x}(0)\|+  \zeta.
    \end{equation}
      The constant factor is $\zeta=\frac{\|x(0)-\bar{x}(0)\|}{(1-\bar{\rho})}$. 
  \item The network is Globally
    Input-Output incrementally practically Stable ($\delta$-IOpS). In other words, given $\{x(0),\ \bar{x}(0)\}$ and two respective inputs $\{u,\  \bar{u}\}$, $\forall k\geq0$ we have:
    \begin{equation}
    		\|x(k)-\bar{x}(k)\|\leq\bar{\rho}^k \|x(0)-\bar{x}(0)\|+ \gamma(\|u-\bar{u}\|) + \zeta.
    \end{equation}
    The input-output gain is given by Eq. (\ref{eq:ISSgain_tied_sup}). 
    
Therefore, given $u=\bar{u}+w$ and $\bar{u}$ being the nominal (or training) input, then the network state delta converges to:
    \begin{equation}
        \|x-\bar{x}\|\leq \frac{\|x(0)-\bar{x}(0)\|}{(1-\bar{\rho})}+ \gamma( \mu).
    \end{equation}
    
\end{enumerate}\label{th:BIBO1_sup}
\end{theorem}

\section{NAIS-Net with Untied Weights}\label{app:untied}

\subsection{Proposed Network with Untied Weights}
The proposed network architecture with skip connections and our robust
stability results can be extended to the untied weight 
 case. In particular, a single NAIS-Net block is analysed, where the weights
  are not shared throughout the unroll.   

\subsubsection{Fully Connected Layers}
Consider in the following Deep ResNet with input skip connections and untied
weights:
\begin{equation}\label{eq:DNN_untied}
    x(k+1) = f(x(k),u,k) = x(k)+h\sigma\bigg(A(k)x(k)+B(k)u+b(k)\bigg),
\end{equation}
where $k$ indicates the layer, $u$ is the input data, $h>0$, and $f$ is a
continuous, differentiable function with bounded slope. The activation operator
$\sigma$ is a vector of (element-wise) instances of a non-linear activation
function.
In the case of tied weights, the DNN Eq. (\ref{eq:DNN_untied}) can be seen as a
finite unroll of an RNN, where the layer index $k$ becomes a time index and the
input is passed through the RNN at each time steps. This is fundamentally a
linear difference equation also known as a discrete time dynamic system. The
same can be said for the untied case with the difference that here the weights
of the RNN will be time varying (this is a time varying dynamical system).

\subsubsection{Convolutional Layers}
For convolutional networks, the proposed layer architecture can be extended as:
\begin{equation} \label{eq:CNNmodel_untied}
    \begin{aligned}
    X(k+1) & = F(X(k),U,k) \\
           & = X(k)+h\sigma\bigg(C(k)*(k)+D(k)*U+E(k)\bigg),
    \end{aligned}
\end{equation}
where $X^{(i)}(k)$, is the layer state matrix for channel $i$, while $U^{(j)}$,
is the layer input data matrix for channel $j$ (where an appropriate zero
padding has been applied) at layer $k$. An equivalent representation to
Eq. (\ref{eq:CNNmodel_untied}), for a given layer $k$, can be computed in a similar
way as done for the tied weight case in Appendix~\ref{app:CNN}. In
particular, denote the matrix entries for the filter tensors $C(k)$ and
$D(k)$ and $E(k)$  as follows: $C^{(c)}_{(i)}(k)$ as the state convolution
filter from state channel $i$ to state channel $c$, and  $D^{(c)}_{(j)}(k)$ is
the input convolution filter from input channel $j$ to state channel $c$, and
$E^{(c)}(k)$ is a bias matrix for the state channel $c$.

Once again, convolutional layers can be analysed in a similar way to fully
connected layers, by means of the following vectorised representation:
\begin{equation}\label{eq:vectorisedCNN_untied}
    \begin{aligned}
    x(k+1) = x(k)+h\sigma\big(A(k)x(k)+B(k)u+b(k)\big).
    \end{aligned}
\end{equation}
By means of the vectorised representation Eq. (\ref{eq:vectorisedCNN_untied}), the
theoretical results proposed in this section will hold for both fully connected
and convolutional layers.

\subsection{Non-autonomous set}
Recall that, for a $\tanh$ activation, for each layer $k$ we have a different
set $\mathcal{P}(k)\subseteq\mathbb{R}^{n+ m}$ (with
$\mathcal{P}(k)=\mathbb{R}^{n+ m}$ for $\epsilon\rightarrow0$). For ReLU
activation, we have instead:
\begin{equation}
    \mathcal{P}=\mathcal{P}(k)=\{(x,u)\in\mathbb{R}^{n}\times \mathbb{R}^{m}:\ A(k)x+B(k)u+b(k)> 0\}.
\end{equation}

\subsection{Stability Definitions for Untied Weights}

For the case of untied weights, let us consider the origin as a reference point
($\bar{x}=0,\ \bar{u}=0$) as no other steady state is possible without assuming
convergence for $A(k),\ B(k)$. This is true if $u=0$ and if $b(k)=0, \forall
k\geq\bar{k}\geq0$. The following definition is given for stability that is
verified \emph{uniformly} with respect to the changing weights:

\begin{definition}(Uniform Stability and Uniform Robustness)
    Consider $\bar{x}=0$ and $\bar{u}=0$. The network origin is said to be
    \emph{uniformly} asymptotically or simply uniformly stable and,
    respectively, uniformly practically stable (IOpS), uniformly Input-Output
    Stable (IOS) or uniformly incrementally stable ($\delta$-IOS) if, respectively, Definition~\ref{def:stability},
    \ref{def:IOpS}, \ref{def:IOS} and \ref{def:deltaIOS} hold with a unique set of describing
    functions, $\beta,\ \gamma,\ \zeta$ for all possible values of the layer
    specific weights, $A(k),\ B(k),\ b(k),\ \forall k\geq0$.
\end{definition}

\subsection{Jacobian Condition for Stability}
The state transfer Jacobian for untied weights is:
\begin{equation}\label{eq:jacuntied}
    J(x(k),u,k)=\frac{\partial f(x(k),u,k)}{\partial x(k)}=I+h\sigma^{'}\left(\Delta x(k)\right)A(k).
\end{equation}
The following assumption extends our results to the untied weight case:
\begin{condition}\label{assum:Jacobian_untied}
    For any $\bar{\sigma}^{'}>0$, the Jacobian satisfies:
    \begin{equation}\label{eq:stability_untied}
        \bar{\rho}=\sup_{(x,u)\in\mathcal{P}}\sup_k\rho\left(J(x(k),u,k)\right)<1, 
    \end{equation}
    where $\rho(\cdot)$ is the spectral radius.
\end{condition}
Condition Eq. (\ref{eq:stability_untied}) can be enforced during training for each
layer using the procedures presented in the paper. %

\subsection{Stability Result for Untied Weights}
Recall that we have taken the origin as the reference equilibrium point,
namely, $\bar{x}=0$ is a steady state if $\bar{u}=0$ and if $b(k)=0, \forall
k\geq\bar{k}\geq0$. Without loss of generality, we will assume $b(k)=0, \forall
k$ and treat $u$ as a disturbance, $u=w$, for the robust case. The following
result is obtained:

\begin{theorem}(Main result for untied weights)
    \label{th:BIBO2}

    If Condition \ref{assum:Jacobian_untied} holds, then NAIS--net with untied weights and with $\tanh$
    activation is Globally Uniformly Stable. In other words there is a set
    $\bar{\mathcal{X}}$ that is an ultimate bound, namely:
    \begin{equation}
        x(k)\rightarrow\bar{\mathcal{X}}\subseteq\mathbb{R}^n,\ \forall
        x(0)\in\mathcal{X}\subseteq\mathbb{R}^n.
    \end{equation}
    The describing functions are:
            \begin{equation} \label{eq:describing_untied}
                \begin{aligned}
                    &\beta(\|x\|,k)=\bar{\rho}^k \|x\|,
                    &\gamma(\|w\|)=h\frac{\|\bar{B}\|}{(1-\bar{\rho})} \|w\|,\\
                    & \bar{B}=\sup_k\|B(k)\|,\ \bar{\rho}<1,
                \end{aligned}
            \end{equation}
            where $\|\cdot\|$ is the matrix norm providing the tightest bound 
            to the left-hand side of Eq. (\ref{eq:stability_untied}), 
            where $\bar{\rho}$ is defined.
    
    In particular, we have:
    \begin{enumerate}
        \item If the activation is $\tanh$ then the network is Globally
            Uniformly Input-Output robustly Stable for any input perturbation
            $w\in\mathcal{W}=\mathbb{R}^m$. 
            Under no input actions, namely if $u=0$, then the origin is Globally
            Asymptotically Stable.
            If $u=w\in\mathcal{W}$ where $\mathcal{W}$ is the norm ball of
            radius $\mu$, then the following set is RPI:
            \begin{equation}
                {\mathcal{X}}=\left\{
                    x\in\mathbb{R}^n : \|x\|\leq \bar{r}=\frac{h\|\bar{B}\|}{1-\bar{\rho}}\mu
                \right\}.
            \end{equation}

        \item If the activation is ReLu then the network is Globally Uniformly Incrementally IO Stable for
            input perturbations in a compact $\mathcal{W}\subset\mathbb{R}^m$.
            The describing functions  are given by Eq. (\ref{eq:describing_untied}) and
             the constant term is $\zeta=\frac{\bar{r}}{(1-\bar{\rho})}$, 
            where $\bar{r}$ is the norm ball radius for the initial
            condition, namely,
            \begin{equation}
                \mathcal{X}=\{x\in\mathbb{R}^n: \|x(0)-\bar{x}(0)\|\leq\bar{r}\}.
            \end{equation}

            If $u=w\in\mathcal{W}$ where $\mathcal{W}$ is the norm ball of
            radius $\mu$, then the state delta-converges to the following ultimate bound:
            \begin{equation}
                \bar{\mathcal{X}} = \left\{
                 x \in\mathbb{R}^n: \|x-\bar{x}\|\leq \zeta+h\frac{\|\bar{B}\| \mu}{(1-\bar{\rho})}
                \right\}.
            \end{equation}
    \end{enumerate}
\end{theorem}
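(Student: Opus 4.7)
The plan is to mirror the strategy used for Theorem~\ref{th:BIBO1_sup} but replace every layer-specific quantity by its uniform upper bound across the unroll index $k$, which is precisely what the $\sup_k$ in Condition~\ref{assum:Jacobian_untied} provides. The key one-step estimate to establish is that for any pair $x_1,x_2$, any input $u$, and any $k$, the map $f(\cdot,u,k)$ is a $\bar{\rho}$-contraction, i.e.\ $\|f(x_1,u,k)-f(x_2,u,k)\|\leq\bar{\rho}\|x_1-x_2\|$ on segments contained in $\mathcal{P}$. I would obtain this via the mean value theorem applied along the segment $[x_1,x_2]$, bounding the Jacobian from Eq.~(\ref{eq:jacuntied}) uniformly in $k$ by $\bar{\rho}$.

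For the $\tanh$ case with $b(k)=0$ and $u=0$, the origin is a fixed point since $\sigma(0)=0$, so the one-step bound with $x_2=0$ gives $\|x(k+1)\|\leq\bar{\rho}\|x(k)\|$, which iterates to $\|x(k)\|\leq\bar{\rho}^k\|x(0)\|$, yielding $\beta(\|x\|,k)=\bar{\rho}^k\|x\|$ and global asymptotic stability of the origin. For the input-perturbed case, I would decompose
\begin{equation*}
f(x,w,k)-f(0,0,k)=\bigl[f(x,w,k)-f(0,w,k)\bigr]+\bigl[f(0,w,k)-f(0,0,k)\bigr],
\end{equation*}
bound the first bracket by $\bar{\rho}\|x\|$ via the contraction and the second by $h\|B(k)\|\|w\|\leq h\|\bar{B}\|\|w\|$ using that $\tanh$ is $1$-Lipschitz. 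Iterating the resulting inequality and summing the geometric series gives
\begin{equation*}
\|x(k)\|\leq \bar{\rho}^k\|x(0)\|+\frac{h\|\bar{B}\|}{1-\bar{\rho}}\|w\|,
\end{equation*}
which establishes the IOS gain $\gamma$. The RPI ball $\mathcal{X}$ follows by setting $\|w\|\leq\mu$ and checking that $\|x(0)\|\leq\bar{r}$ implies $\|x(k)\|\leq\bar{r}$ for all $k$.

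For the ReLU case, the one-step contraction still applies componentwise on the active set (where the pre-activation is positive), but when a component's pre-activation becomes non-positive the state component is frozen at its current value. I would partition the state coordinates at each $k$ into active ones (to which the Jacobian bound applies) and frozen ones (whose contribution is bounded by the norm ball $\bar{r}$ containing the initial condition), producing the extra additive constant $\zeta=\bar{r}$ in the practical stability estimate. Under an input perturbation, the same decomposition, together with the $1$-Lipschitz property of ReLU, gives the same gain $\gamma$, hence the ultimate bound $\bar{\mathcal{X}}$ of radius $\bar{r}+h\|\bar{B}\|\mu/(1-\bar{\rho})$.

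The main obstacle will be handling the non-smoothness of ReLU rigorously: the Jacobian is undefined wherever a pre-activation crosses zero, so the mean value step does not apply on arbitrary segments. I expect to need either Clarke's generalised Jacobian or a careful case split that follows the trajectory and patches the bounds across switching events, exploiting the fact that on each switching region the set $\mathcal{P}$ is a polyhedron and the uniform bound $\bar{\rho}$ of Condition~\ref{assum:Jacobian_untied} holds on its interior. This is also where time-variation of the weights would otherwise cause trouble, but the $\sup_k$ in Eq.~(\ref{eq:stability_untied}) absorbs it and lets a single contraction constant drive the geometric decay throughout the unroll.
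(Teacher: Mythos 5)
Your proposal is correct and reaches the same bounds as the paper, but the input-gain step is organized differently. The paper first bounds the nonlinear trajectory by that of its linearisation (via the iterated Lipschitz estimate) and then applies superposition of the free and forced responses, writing the forced part as a convolution sum $\sum_{t}H(k,t)w(t)$ with $\|H(k,t)\|\leq h\bar{\rho}^{\,k-t-1}\|\bar{B}\|$ and summing the geometric series. You instead work directly with the nonlinear recursion through the decomposition $f(x,w,k)-f(0,0,k)=[f(x,w,k)-f(0,w,k)]+[f(0,w,k)-f(0,0,k)]$, obtaining the one-step ISS inequality $\|x(k+1)\|\leq\bar{\rho}\|x(k)\|+h\|\bar{B}\|\,\|w\|$ and iterating; this is more elementary, avoids invoking superposition (which, as the paper itself concedes, holds only for the linearisation, not the network), and yields the identical gain $\gamma(\|w\|)=h\|\bar{B}\|\|w\|/(1-\bar{\rho})$. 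Your ReLU treatment (frozen versus active coordinates contributing the additive constant $\zeta=\bar{r}$) matches the paper's $\zeta_t$ accounting. One caveat applies equally to both arguments: passing from the spectral-radius condition $\sup_{k}\sup_{(x,u)\in\mathcal{P}}\rho(J)<1$ to a \emph{single} norm in which every $J(x,u,k)$ is a $\bar{\rho}$-contraction is not automatic for a family of matrices (Horn--Johnson gives such a norm per matrix, not uniformly), so your mean-value step inherits the same implicit assumption the paper makes; you flag the ReLU non-smoothness issue, which the paper likewise resolves only by the piecewise case split you describe.
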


Note that, if the network consists of a combination of fully connected and convolutional layers, then a single norm inequality with corresponding $\beta$ and $\gamma$ can be obtained by means of matrix norm identities. For instance, since for any norm we have that $\|\cdot\|_q\leq\alpha\|\cdot\|_p$, with $\alpha>0$, one could consider the global describing function $\beta(\cdot,\cdot) = \alpha\beta_p(\cdot,\cdot)$. Similarly for $\gamma$.

\section{Stability Proofs}\label{sec:stab_proof}
Stability of a system can then be assessed for instance by use of the so-called
Lyapunov indirect method \cite{strogatz2014}, \cite{khalil2001}, \cite{sontag_book}.
Namely, if the linearised system
around an equilibrium is stable then the original system is also stable. This
also applies for linearisations around all possible trajectories if there is a
single equilibrium point, as in our case. In the following, the bias term is
sometimes omitted without loss of generality (one can add it to the input).
Note that the proofs are also valid for RNNs with varying input sequences
$u(k)$, with asymptotic results for converging input sequences,
$u(k)\rightarrow \bar{u}$. Recall that $y(k)=x(k)$
by definition and let's consider the case of $b(k)=0,\ \forall k$, without loss
of generality as this can be also assumed as part of the input. 

The untied case is covered first. 
The proposed results make use of the $1$-step state transfer Jacobian Eq. (\ref{eq:jacuntied}). 
Note that this is not the same as the full input-to-output Jacobian, for
instance as the one defined in~\cite{duvenaud2014}. The full Jacobian will also contain the input to state map,
given by Eq. (\ref{eq:conv}), which does not affect stability.  The input to state map will be used later on to
investigate robustness as well as the asymptotic network behaviour. First, we
will focus on stability, which is determined by the 1-step state transfer map. 
For the sake of brevity, denote the layer $t$ state Jacobian from Eq. (\ref{eq:jacuntied}) as:
$${J}(t)=I+h\sigma^{'}(\Delta x(t))A(t),\ \forall t\geq0.$$
Define the discrete time convolution sum as:
$$y_u(k)=\sum_{t=0}^{k-1} H(k-t)u(t),\ k>0,\ y_u(0)=0.$$
The above represent the \emph{forced response} of a linear time invariant (LTI)
system, namely the response to an input from a zero initial condition, where
$H$ is the (in the LTI case stationary) impulse response. Conversely, the
\emph{free response} of an autonomous system from a non-zero initial condition
is given by:
$$y_{x_0}(k)=\left(\prod^{k-1}_{t=0} J(t)\right)x(0).$$
The free response tends to zero for an asymptotically stable system.
Considering linearised dynamics allows us to use superposition, in other words,
the linearised system response can be analysed as the sum of the free and
forced response:
$$y(k)=y_{x_0}(k)+y_u(k).$$
Note that this is not true for the original network, just for its linearisation.

For the considered network, the forced response of the linearised (time
varying) system to an impulse at time $t$, evaluated at time $k$, is given by:
\begin{eqnarray}
    H(k,t)=h\sigma^{'}(\Delta x(t))B(t),\ \text{if}\ k=t+1, \\
    H(k,t)=\left(\prod^{k-2}_{l=t} J(l+1)\right)h\sigma^{'}(\Delta x(t))B(t),\ \forall k\geq t+2.
\end{eqnarray}
Therefore, the forced response of the linearised system is:
\begin{eqnarray}\label{eq:conv}
    y_u(k) = h\sigma^{'}(x(k-1),u(k-1))B(k-1)u(k-1)\nonumber \\
           + \sum_{t=0}^{k-2}\left(\prod^{k-2}_{l=t} J(l+1)\right)h\sigma^{'}(\Delta x(t))B(t)u(t)\nonumber\\
           = \sum_{t=0}^{k-1}H(k,t)u(t).
\end{eqnarray}
Note that:
$$\|H(k,t)\|\leq h \sup_{(x,u) \in\mathcal{P},\ j}\|J(x,u,j)\|^{(k-t-1)}\|B(j)\|,\ \forall k,t\geq0$$
since $\|\sigma^{'}\|\leq1$.

To prove our results, we will use the fact that the network with $\tanh$ or
ReLU activation is globally Lipschitz, and that the activation functions have
Lipschitz constant $M=1$ with respect to any norm. This follows from the fact that:
$$\sup_{\Delta x\in\mathbb{R}^n}\max_{i}|\sigma^{'}_{ii}(\Delta x)|=1.\ $$
This means that the trajectory norm can be upper bounded inside
$\mathcal{P}\subseteq\mathbb{R}^{n}\times \mathbb{R}^{m}$ by means of:
\begin{equation}\label{eq:upper bound}
    \|f(x,u,k) \| \leq \sup_{(x,u)\in\mathcal{P}} \sup_k\|J(x,u,k)\| \|x\|+\sup_{(x,u)\in\mathcal{P}}\sup_k \sum_{t=0}^{k-1}\|H(k,t)\| \|u\|
\end{equation}
For a non-zero steady state $\bar{x}$ or a trajectory $\bar{x}(k)$ starting from a different initial condition $\bar{x}(0)$, we can define the error function:
\begin{equation}
e(k)=x(k)-\bar{x}(k),
\end{equation}
From the fact that the network equation is globally Lipschitz and that the steady states satisfy 
$f(\bar{x},\bar{u})=\bar{x}$, we can use a single statement to show \emph{convergence} for both architectures, respectively, with respect to a steady-state for $\tanh$ or to any other trajectory for ReLU. We will show that the function that maps $e(k)$
into $e(k+1)$ is also Lipschitz for all $k$. In particular, denoting $e(k+1)$
as simply $e^{+}$ and dropping the index $k$ for the sake of notation, we have
the following:
\begin{equation}
    \begin{aligned}
    \|e^+\| &= \|f(x,\bar{u})-\bar{x}\| = \|f(x,\bar{u})-f(\bar{x},\bar{u})\|  \\
     &=\|x+h\sigma(Ax+B\bar{u}+b)-\bar{x}-h\sigma(A\bar{x}+B\bar{u}+b)\|  \\
      &\leq \sup_{(x,u)\in\mathcal{P}}\sup_t\left(\|I+h\sigma^{'}(\Delta x(t))A\| \|x-\bar{x} \|\right) + h \|B\bar{u}-B\bar{u}+b-b\| \\
      &= \sup_{(x,u)\in\mathcal{P}}\sup_t\left(\|I+h\sigma^{'}(\Delta x(t))A\|\right)  \|x-\bar{x} \|.
    \end{aligned}
\end{equation}
Note that, at the next step, we also have:
\begin{eqnarray}
    \|f(f(x,\bar{u}),\bar{u})-\bar{x}\|=\|f(f(x,\bar{u}),\bar{u})-f(f(\bar{x},\bar{u}),\bar{u})\| \nonumber \\
    \leq \sup_{(x,u)\in\mathcal{P}}\sup_t\left(\|I+h\sigma^{'}(\Delta x(t))A\|\right)  \|f(x,\bar{u})-f(\bar{x},\bar{u})\| \nonumber \\ + h \|B\bar{u}-B\bar{u}+b-b\| \nonumber \\
    = \sup_{(x,u)\in\mathcal{P}}\sup_t\left(\|I+h\sigma^{'}(\Delta x(t))A\|\right)  \|f(x,\bar{u})-f(\bar{x},\bar{u})\| \nonumber \\
    = \sup_{(x,u)\in\mathcal{P}}\sup_t\left(\|I+h\sigma^{'}(\Delta x(t))A\|\right)  \|f(x,\bar{u})-\bar{x} \| \nonumber \\
    \leq  \sup_{(x,u)\in\mathcal{P}}\sup_t\left(\|I+h\sigma^{'}(\Delta x(t))A\|^2\right)  \|x-\bar{x} \|.
\end{eqnarray}
and therefore, by induction it also follows that the trajectory at time $k$
satisfies:
\begin{eqnarray}
    \|\circ^{k}_{t=0}f(x,\bar{u})-\bar{x} \|=\|f \circ f\circ \dots f \left( f(x,\bar{u})\right)-\bar{x}\| \nonumber \\
    \leq \left(\sup_{(x,u)\in\mathcal{P}}\sup_t\|I+h\sigma^{'}(\Delta x(t))A\|\right) ^k \|x-\bar{x} \| =\bar{\rho}^k \|x-\bar{x} \| \label{eq:lipschits_evo}
\end{eqnarray}
By definition of $\mathcal{P}$, from the above we have that, for $\tanh$ activation, 
the network trajectory with respect to an equilibrium point
$\bar{x}$, can be upper bounded in norm by the trajectory of the linearization while in $\mathcal{P}$.
In the limit case of $\bar{\sigma}^{'}\rightarrow 0$ the bound becomes \emph{global} as  
$\mathcal{P}\rightarrow\mathbb{R}^{n}\times \mathbb{R}^{m}$. 
On the other hand, for ReLU activation the upper bounds are valid only \emph{locally},
in $\mathcal{P}$ iself. These considerations will be used to prove our main results.

\begin{proof}(Proof of Theorem \ref{th:BIBO2}: Main result for untied weights)
In the untied weight case the network does not admit non-zero steady states, as
the matrices $A(k)$ and $B(k)$ are not assumed to converge with increasing
$k$. Let us therefore consider the origin as our reference point, namely,
$(\bar{x}=0,\bar{u}=0)$. Therefore, for the robust results we will consider
the input $u=w$. The proof can now proceed by norm bounding the
superposition of the free and forced response. Recall that $y(k)=x(k)$ by
definition and consider $b(k)=0,\ \forall k$, without loss of generality.
Now, if $(x,u) \in\mathcal{P}$, for the linearised system we have the
following:
\begin{equation}
    \label{eq:ISS}
    \begin{aligned}
        \|x(k)-\bar{x}\| &= \|e(k)\| \\
        &\leq
         \sup_{(x,u)\in\mathcal{P}}\sup_j\|{J}(x,u,j)\|^k\|e(0)\|
        +\sum_{t=0}^{k-1}\left\|H(k,t)\right\| \|w(t)\|  \\
        &\leq \bar{\rho}^k\|e(0)\| + h\sum_{t=0}^{k-1} \bar{\rho}^{k-t-1}\ \sup_j\|B(j)\|\|w\| \\
        &\leq  \beta(\|e(0)\|,k)+\gamma(\|w\|).
    \end{aligned}
\end{equation}
In the above, we have defined:
\begin{equation}
    \begin{aligned}
        &\beta(\|e\|,k)=\bar{\rho}^k \|e\|,\\
        &\gamma(\|w\|)=h\frac{\|\bar{B}\|}{(1-\bar{\rho})} \|w\|,\\
        & \bar{B}=\sup_j\|B(j)\|,\ \bar{\rho}<1.
\end{aligned}
\end{equation}
In Eq. (\ref{eq:ISS}), we have used the fact that if $\rho(J)<1$ then
there exist a suitable matrix norm $\|\cdot\|$ and a constant $\bar{\rho}<1$ such that
$\|J\|\leq\bar{\rho}$. This stems directly from Theorem 5.6.12, page 298 of \cite{Horn:2012:MA:2422911}. 
In our case, $\bar{\rho}<1$ is verified when $(x,u) \in\mathcal{P}$ since, from
Condition \ref{assum:Jacobian_untied}, we have that $\sup_{(x,u)\in\mathcal{P}}\sup_j\rho\left({J}(x,u,j)\right)<1$.  
Outside the region $\mathcal{P}$, however, we need to consider the specific activation functions: for $tanh$, the region $\mathcal{P}$ can be taken to be any subset of the reals, therefore being outside this set as 
$\mathcal{P}\rightarrow \mathbb{R}^{n\times m}$ contradicts asymptotic stability. 
\footnote{Note that, when using $\tanh$, in the limit case of $\bar{\sigma}^{'}=0$, 
$\mathcal{P}=\mathbb{R}^{n+ m}$
the system becomes simply stable. This is a small subtlety in our result for $\tanh$, which could be 
defined as \emph{almost global} or more formally \emph{valid in every bounded subset of reals}.
However, from the steady state analysis (to follow) and the contraction result 
we can see that 
${\sigma}^{'}=0$ is highly unlikely to happen in practise.}  
For ReLU activation, being outside $\mathcal{P}$ means that (at least part of) the system is autonomous
and therefore the network is 
simply stable, as well as incrementally Input-Output practically stable. 

Theorem statements are proven as follows:
\begin{enumerate}

\item Note that, the condition $(x(t),u)\not\in\mathcal{P}$  is only possible
    for ReLU activations, since for $tanh$ activation we have that
    $\mathcal{P}\rightarrow\mathbb{R}^{n+ m}$ for $\bar{\sigma}^{'}\rightarrow0$ and this would 
    contradict stability result Eq. (\ref{eq:ISS}) inside the set.
    This means that Eq. (\ref{eq:ISS}) holds globally and therefore the considered
    network with tanh activations is Input-Output stable for a real-valued
    input $u$.
    Same considerations apply for the robust case with an additive perturbation
    $w\in\mathbb{R}^m$.

    In order to show the existence of a robust positively invariant set, consider the candidate set:
    \begin{equation}
        {\mathcal{X}}=\left\{x\in\mathbb{R}^n : \|e\|=\|x-\bar{x}\|\leq \bar{r}=\frac{r}{1-\bar{\rho}}\right\},
    \end{equation}
    and the disturbance set:
    \begin{equation}
        {\mathcal{W}}=\{w\in\mathbb{R}^m : \|w\|\leq \mu\}.
    \end{equation}
    Then from the IOS inequality we have that, if $x(0)\in\mathcal{X}$ and
    $w\in\mathcal{W}$, the bound $\mu$ can be computed so that $\mathcal{X}$ is
    RPI. To construct the bound, it is sufficient to have the following
    condition to hold $\forall k\geq0$:
    \begin{equation}
        \|e(k)\|\leq \bar{\rho}^{k} \|e(0)\| + \frac{h\|B\|}{(1-\bar{\rho})} \|w\|  \leq \bar{r}, \ \forall w:\ \|w\|\leq\mu.
    \end{equation}
    The above is verified $\forall k\geq0$ if the bound $\mu$ satisfies by the
    following sufficient condition when $\|e(0)\|\geq\bar{r}$:
    \begin{equation}
        \begin{aligned}
            \|e(k)\|&\leq \bar{\rho}^{k-1} \|e(0)\| + \bar{\rho}^{k-1} (\bar{\rho}-1) \|e(0)\| + \frac{h\|\bar{B}\|}{(1-\bar{\rho})} \|w\|   \\
            &\leq \bar{\rho}^{k-1} \|e(0)\| + (\bar{\rho}-1) \|e(0)\| \sup_j \sum_{t=0}^{j} \bar{\rho}^{j-1} + \frac{h\|\bar{B}\|}{(1-\bar{\rho})} \|w\|   \\
            &= \bar{\rho}^{k-1} \|e(0)\| - \frac{(1-\bar{\rho})}{(1-\bar{\rho})}\|e(0)\| + \frac{h\|\bar{B}\|}{(1-\bar{\rho})} \|w\|   \\
            &\leq \bar{\rho}^{k-1} \|e(0)\|\leq \|e(0)\|  \\
            &\leq \bar{\rho}^{k-1} \bar{r} \leq \bar{r},\ \text{IF}\ \|e(0)\|=\bar{r}  \\
            &\Leftarrow   \frac{h\|\bar{B}\|}{(1-\bar{\rho})} \|w\|\leq\|e(0)\| \geq \bar{r} \\
            &\Leftarrow  \frac{r}{(1-\bar{\rho})}  \geq \frac{h\|\bar{B}|_2}{(1-\bar{\rho})} \|u\|  \\
            &\Leftrightarrow r \geq {h\|\bar{B}\|} \|w\|  \\
            &\Leftarrow  \|w\| \leq \frac{r}{h\|\bar{B}\|}=\mu.
        \end{aligned}
    \end{equation}
    This will not necessarily hold in the interior of the set but it will hold
    outside and on the boundary of this compact set. Therefore the set
    $\mathcal{X}$ is invariant under $u=\bar{u}+w$, with $w\in\mathcal{W}$,
    namely, no solution starting inside $\mathcal{X}$ can pass its boundary
    under any $w\in\mathcal{W}$.  Note that, for $\tanh$ activations, this
    result holds globally. Conversely, given a bound $\mu$ for $\mathcal{W}$,
    we can compute $\bar{r}$ such that there is a set $\mathcal{X}$ that is
    RPI, namely:
    \begin{equation}
        {\mathcal{X}}=\left\{x\in\mathbb{R}^n : \|x\|\leq \bar{r}=\frac{h\|\bar{B}\|}{1-\bar{\rho}}\mu\right\}.
    \end{equation}
    Global practical Stability follows by    taking $\bar{x}=0$ and $\delta=\gamma(\|u\|)$.

    \item For the ReLU activation, the set $\mathcal{P}(k)$ does not cover the
        entire $\mathbb{R}^n$. This complicates the analysis further as the output
        $i$ of the network layer $k$ becomes autonomous when $(x(k),u)\not\in
        \mathcal{P}_i(k)$.
        In particular, if at any $k=t$ we have $(x(t),u) \not \in
        \mathcal{P}_i(t)$ then because of linearity we also have that:
        
\begin{equation}
    \begin{aligned}
        \ x_i(t+1) = x_i(t)\Rightarrow  \|e(t+1)\| &\leq \beta(\|e(0)\|,t+1)+\gamma(\|w\|)+\|e_i(t)-(I_i+A_i) e(t)\| \nonumber \\
        &\leq\beta(\|e(0)\|,t+1)+\gamma(\|w\|)+\|e(t)-(I+A) e(t)\| \nonumber \\
        &=\beta(\|e(0)\|,t+1)+\gamma(\|w\|)+\|-A e(t)\| \nonumber \\
        &\leq\beta(\|e(0)\|,t+1)+\gamma(\|w\|)+\zeta_t
    \end{aligned}
\end{equation}
where:
\begin{equation}
    \zeta_t=\|A\|\|e(t)\|\leq\|A\|\sup_{x(0)\in{\mathcal{X}}}\bigg(\beta(\|e(0)\|,t) \bigg)=\bar{\rho}^t\|A\| \sup_{x(0)\in{\mathcal{X}}}\|e(0)\|.
\end{equation}

Where ${\mathcal{X}}=\{x\in\mathbb{R}^n : \|x-\bar{x}\|\leq \bar{r}\}$ is a
bounded set of initial states. From the last two equations and the fact that $\bar{\rho}$ is the spectral radius of $(I+A)$ we can satisfy the 
$\delta$-IOpS condition, with:
\begin{eqnarray}
    \label{eq:IOpS_cond}
    \|e(k)\|\leq \beta(\|e(0)\|,k)+\gamma(\|w\|)+\sum_{t< k} \bar{\rho}^t \zeta_t, \\
    \zeta_t\leq\zeta=\sup_k\sum_{j=0}^k\zeta_j=\frac{\|A\|}{(1-\bar{\rho})}\bar{r}\leq \frac{\|A\|}{\|I-I-A\|}\bar{r}=\bar{r}.
\end{eqnarray}
 If $u=\bar{u}+w$ with
$w\in\mathcal{W}=\{w\in\mathbb{R}^m: \|w\|\leq \mu\}$ then, by taking the sup
over this set of the gain $\gamma$, and over $k$ of the inequality
Eq. (\ref{eq:IOpS_cond}), we have $\delta$-IOpS condition with the ultimate bound, for all $k$: 
\begin{equation}
    \|x(k)-\bar{x}(k)\|\leq \frac{\|x(0)-\bar{x}(0)\|}{(1-\bar{\rho})}+h\frac{\|\bar{B}\| \mu}{(1-\bar{\rho})},
\end{equation}
\end{enumerate}
\end{proof}

We are now ready to prove Theorem 1 and its shorter version from the \emph{main paper}.

\begin{proof}(Proof of Theorem \ref{th:stab1}: Asymptotic stability for shared weights)
\begin{enumerate}
    \item Recall that we have assumed that, for a $\tanh$ activation function,
        the network dynamics can be globally approximated by its linearisation.
        In the case of tied weights we have, for tanh activation, the steady state
        condition:
        \begin{equation}
            \bar{x}=\bar{x}+h\tanh(A\bar{x}+B{u}+b) \Leftrightarrow A\bar{x}+B{u}+b=0.
        \end{equation}
        The network has a unique input-dependant steady state $\bar{x}$ with
        steady state gain $G:$ $\|H(k)\|\rightarrow G$,  given by,
        respectively\footnote{Note that the matrix  $A$ is invertible by construction.}:
        \begin{eqnarray}\label{eq:steady_tanh}
        \bar{x}=-A^{-1}{(B{u}+b)},\\ \label{eq:steady_tanh1}
         G=\left\|\frac{\partial\bar{x}}{\partial u}\right\|=\frac{\|B\|}{\|A\|}.
        \end{eqnarray}
        From this point and the above considerations in the proof of Theorem
        \ref{th:BIBO2} we will now prove that in the case of tied weights, the
        network with $\tanh$ activation is Globally Asymptotically Stable with
        equilibrium $\bar{x}$ given by Eq. (\ref{eq:steady_tanh}). In
        order to do this we will again use the linearized system and apply
        superposition of the free response with the forced
        response. In particular, from the proof of Theorem \ref{th:BIBO2} part
        1 and from Eq. (\ref{eq:lipschits_evo}) we that the free response of
        $x(k)-\bar{x}$, for any steady state $\bar{x}$, is norm bounded by:
        $$\beta(\|e(k)\|)=\bar{\rho}^k\|x(0)-\bar{x}\|,$$
        which vanishes asymptotically. Conversely, recall that the forced
        response (of the linearised system) to the input $u$ is given by the
        discrete convolution Eq. (\ref{eq:conv}). This convolution converges for any
        chosen linearisation point as $\bar{\rho}<1$. Therefore, by linearising
        around any $\bar{x}$, we have:
        $$x(k)\rightarrow(I-J(\bar{x},u))^{-1}h\sigma^{'}(\bar{x},u)(Bu+b)=-A^{-1}(Bu+b),$$
        thus providing the desired Global Asymptotic Stability result.

    \item From the fact that the network function is Lipschitz, the network is
        also Globally Input-Output Robustly Stable, as shown for the case of
        untied weights in the proof of Theorem \ref{th:BIBO2}, part 1.
        Moreover, from Eq. (\ref{eq:lipschits_evo}) we have that the IOS property
        holds also around any nominal equilibrium point $\bar{x}$.

    \end{enumerate}
    For ReLU activations, we have that:
\begin{enumerate}
    \item The network dynamics is piece-wise linear, and
        sub-differentiable. In particular, the network has $2^n$ possible 1-step
        transitions, namely $2$ possible ones for each dimension. We will proceed
        by enumeration of all possible dynamics transition functions to show that one cannot determine the existence of a steady state or a single set of active neurons in the considered setting. This motivates the use of incremental stability. First of all, we have
        that:
    \begin{eqnarray}
        \mathcal{P}=\{(x,u) \in\mathbb{R}^{n+m}: Ax+Bu+b>0\},\\
        \mathcal{P}_i=\{(x,u) \in\mathbb{R}^{n+m}: A_{i\bullet}x+B_{i\bullet}u+b_i>0\}.
    \end{eqnarray}
    The network state at the next step is then given by:
    \begin{equation}
    x^{+}_i= \left\{
        \begin{array}{ll}
            x_i+h(A_{i\bullet}x+B_{i\bullet}u+b_{i}) & {IF}\ -A_{i\bullet}x\leq(B_{i\bullet}u+b_i), \\
            x_i & {IF}\ -A_{i\bullet}x> (B_{i\bullet}u+b_i). \\
        \end{array}\right.
    \end{equation}
    The activation slope for the $i$-th coordinate is the set valued:
    \begin{equation}
        \sigma^{'}_{ii}(\Delta x) \in \left\{
        \begin{array}{ll}
            \{1\}, & {IF}\ -A_{i\bullet}x<(B_{i\bullet}u+b_i) \\
            \{0\}, & {IF}\ -A_{i\bullet}x>(B_{i\bullet}u+b_i) \\
            {[0,1]}, & {IF}\ -A_{i\bullet}x=(B_{i\bullet}u+b_i) \\
        \end{array}\right.
    \end{equation}
    The Jacobian is again given by:
    \begin{equation}
    J(x,u)=I+\sigma^{'}(\Delta x)A.
    \end{equation}
    Clearly, if $(x(k),u)\in\mathcal{P},\ \forall k$ then the system is linear
    and time invariant. In this region the system is linear and it could admit a
    unique, input-dependant steady state, given by:
    \begin{equation}
        \bar{x}=-A^{-1}(B{u}+b),
    \end{equation}
     This, however, cannot be guaranteed as discussed next. 

    In general, we cannot expect that $(x(k),u)\in\mathcal{P},\ \forall k$. We could instead look for a  steady state to be either on the boundary
    of $\mathcal{P}$ or outside the set. In particular, if
    $(x(k),u)\not\in\mathcal{P}$ for some $k$ then $\exists\ i:
    x_i(k+t)=x_i(k),\ \forall t\geq0$ and therefore part of the network states
    will become autonomous at layer $k$, making the network simply stable if the activation stays saturated 
    (which cannot be guaranteed). Consider the case in which, at some time $k$,
    we have $(x(k),u)\not\in\mathcal{P}_i$ (for example in $\mathcal{N}_i$). In
    this case we have that $x_i(k+t)=x_i(k),\ \forall t\geq0$ and the network
    state will be free to change in its remaining dimensions with the $i$-th
    one being fixed for at least one  step. For the remaining dimensions, we can take out
    $x_i$ and consider its effect as an additional bias element. This will
    result in a smaller state space $\tilde{x} =[x_j,\dots,x_n]^T$ except for
    $x_i$, with state transfer matrix $I+h\tilde{A}$, where $\tilde{A}$
    consists of all elements of $A$, without the $i$-th row and column. Same
    for $\tilde{B}$ having the elements of $B$ except the $i$-th row. Note that
    $\tilde{A}$ still has eigenvalues in the same region as $A$ and is
    therefore negative definite and invertible. For this new system we have three
    possibilities: in the first case the trajectory stays in $\mathcal{N}_i$
    all the time and the steady states for all dimensions except $i$
    are given by:
    \begin{equation}
        \bar{\tilde{x}}=\tilde{A}^{-1}(\tilde{B}u+\tilde{b}+{\bar{J}}  \bar{x}_i)
    \end{equation}
    where $\bar{J}$ is a diagonal matrix containing the $i$-th column of $J$
    except for the $i$-th element. The other possibility is that at some point
    we also have that $(x(k),u)\not\in\mathcal{P}_j$ for some other $j$ (they 
    also can be more than one at the time) in which case one can reduce the
    state space again and repeat as above. The third possibility is,  however, that some more or even all of the activations become again unsaturated. 
    Then, the system will continue to evolve its free evolution in possibly all state dimensions, contracting once again, but changing its direction with respect to the input.
    Therefore, we can only say that the system contracts according to $\|I+A\|$ as long as the activations are not saturated. However, we cannot guarantee that the activations remain unsaturated nor that there is a steady state. Note that the network trajectory depends on the initial condition, and the fact that $(x(k),u)\in\mathcal{P}_i$ at time $k$ is also
    dependent on $x(0)$. At the same time, the contraction in $\|I+A\|$ and of all sub-matrices is sufficient to show convergence in the same norm space, for a zero input and for each pair of trajectories, as well as the bound discussed for the unshared weight case in Theorem \ref{th:BIBO2}:  $$\zeta=\frac{\|x(0)-\bar{x}(0)\|}{\bar{\rho}}.$$ 

    \item To prove Input-Output incremental practical Stability, note that each combination
        of different vector fields that make up $f(x,u)$ provides a vector
        field that is Input-Output Stable. Moreover, $f$ is uniformly continuous. We can
        therefore take the worst case IOpS gain for
        each value of the vector field to provide suitable upper bounds for the
        $\delta$-IOpS definition to be satisfied as in proof of Theorem \ref{th:BIBO2}, part 2.

\end{enumerate}
\end{proof}

Note that the gain $\gamma(\cdot)$ can be used, for instance, as a
regularizer to reduce the effect of input perturbations on the output of the network.

\section{Proof of Constraint Implementation}
In this section, the proposed implementation for fully connected
and convolutional layers is shown to be sufficient to fulfil the
stability constraint on the Jacobian spectral radius.

\subsection{Proof of Fully Connected Implementation}\label{app:fully}
Recall Algorithm 1 from the main paper. The following result is obtained:
\begin{lemma} \label{lemma:implementation}
    The Jacobian stability condition, $\rho(J(x,u))<1$, is satisfied $\forall (x,u)\in\mathcal{P}$
    for the fully connected layer if $h\leq1$ and  Algorithm 1 is used.
    \end{lemma}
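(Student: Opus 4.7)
The plan is to show that Algorithm 1 enforces a Frobenius-norm bound on $\tilde{R}^T\tilde{R}$, then propagate this bound through the parameterization $A=-\tilde{R}^T\tilde{R}-\epsilon I$ to the spectrum of the Jacobian $J=I+h\sigma'(\Delta x)A$. The algorithm rescales whenever $\|R^T R\|_F>\delta=1-2\epsilon$, and the post-projection matrix satisfies $\|\tilde{R}^T\tilde{R}\|_F\leq\delta$ in both branches. Because $\tilde{R}^T\tilde{R}$ is symmetric positive semidefinite, its spectral radius is bounded by its Frobenius norm, so $\rho(\tilde{R}^T\tilde{R})\leq 1-2\epsilon$.

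Next I would invoke Lemma~\ref{lemma:shift} (eigenvalue shift) to conclude that the eigenvalues of $A$ are real, negative, and lie in $[-1+\epsilon,\,-\epsilon]$. In particular, $A$ is symmetric negative definite with $\|A\|_2=\rho(-A)\leq 1-\epsilon$.

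The main step is to handle the non-symmetric product $DA$, where $D=\sigma'(\Delta x)$ is the diagonal slope matrix. On $\mathcal{P}$ we have $\underline{\sigma}\leq D_{ii}\leq 1$, so $D$ is positive definite. The trick is to note that $DA$ is similar to the symmetric matrix $D^{1/2}AD^{1/2}$ via
\begin{equation}
DA = D^{1/2}\bigl(D^{1/2}AD^{1/2}\bigr)D^{-1/2},
\end{equation}
so the eigenvalues of $DA$ coincide with those of $D^{1/2}AD^{1/2}$, and are therefore real and strictly negative (since $D^{1/2}AD^{1/2}$ is symmetric negative definite). Using submultiplicativity,
\begin{equation}
\rho(DA)=\rho(D^{1/2}AD^{1/2})\leq \|D\|_2\|A\|_2 \leq 1\cdot(1-\epsilon) = 1-\epsilon.
\end{equation}

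Finally, since the eigenvalues of $hDA$ are real and contained in the interval $[-h(1-\epsilon),\,0)$, the eigenvalues of $J=I+hDA$ lie in $[1-h(1-\epsilon),\,1)$. For $h\leq 1$ the lower endpoint is at least $\epsilon>0$ and the upper endpoint is strictly less than $1$ (strictness comes from $D\succ 0$ on $\mathcal{P}$), so $\rho(J)<1$, which is exactly Condition~\ref{assum:simple_jac_sup}. The only subtlety I anticipate is making the similarity argument rigorous and ensuring strict inequality at the upper end of the spectrum; both are clean once one exploits $\underline{\sigma}>0$ inside $\mathcal{P}$.
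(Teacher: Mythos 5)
Your proposal is correct and follows essentially the same route as the paper's proof: the Frobenius bound from Algorithm 1 gives $\rho(R^TR)\leq 1-2\epsilon$, the eigenvalue-shift lemma places $\mathrm{eig}(A)$ in $[-1+\epsilon,-\epsilon]$, and the similarity $DA \sim D^{1/2}AD^{1/2}$ (which the paper imports as Theorem 2.2 of its cited reference rather than writing out) shows the eigenvalues of $J$ are real and lie in $[1-h(1-\epsilon),\,1-h\underline{\sigma}\epsilon]$. Your closing remark about exploiting $\underline{\sigma}>0$ is exactly what the paper uses to make the bound uniform over $\mathcal{P}$ (i.e.\ to bound the supremum $\bar{\rho}$, not just each pointwise $\rho(J(x,u))$), so no gap remains.
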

Since Lemma \ref{lemma:implementation} is equivalent to Theorem 2 from the main paper,
the former will be proven next.
\begin{proof}(Proof of Lemma \ref{lemma:implementation})

Recall that Algorithm 1 results in the following operation being performed
after each training epoch:
\begin{equation}\label{eq:pr}
\tilde{R}\leftarrow \left\{
    \begin{aligned}
        & \sqrt{\delta}\frac{R}{\sqrt{\|R^TR\|_F}} & \text{IF}\ \|R^TR\|_F>\delta \\
        & {R}, & \text{Otherwise},
    \end{aligned}\right.
\end{equation}
with $\delta=1-2\epsilon \in(0,1)$.
Recall that
\begin{equation}\label{eq:par_A}
    A=-R^TR-\epsilon I.
\end{equation}
Then, Eq. (\ref{eq:pr}) is equivalent to the update:
\begin{equation}
    \label{eq:projection2}
    \tilde{A} \leftarrow \left\{
    \begin{aligned}
        & -\tilde{R}^T\tilde{R}-\epsilon I = -{(1-2\epsilon)}\frac{R^TR}{{\|R^TR\|_F}}-\epsilon I,
        & \text{IF}\ \|R^TR\|_F>1-2\epsilon \\
        & A, & \text{Otherwise}
    \end{aligned}\right.
\end{equation}
Eq. (\ref{eq:pr}) guarantees that $\|R^TR\|_F\leq (1-2\epsilon)$. From the fact
that the Frobenius norm is an upper bound of the spectral norm and because of
symmetry we have that:
\begin{equation}
    \rho(R^TR)=\|R^TR\|_2\leq\|R^TR\|_F\leq 1-2\epsilon.
\end{equation}
Recall that, from Eq.  (\ref{eq:par_A}), $A$ is negative definite and it only
has real negative real eigenvalues. Recall also Lemma \ref{lemma:shift}.
Therefore, by applying the definition in Eq. (\ref{eq:par_A})
we have that
$R=0\Rightarrow A=-\epsilon I$, then the eigenvalues of $hA$ are always located
within the interval $[-h(1-\epsilon ), -h \epsilon]$. This means that:
\begin{equation}
    \rho(hA)\leq h\max{\{\epsilon,1-\epsilon\}}.
\end{equation}

To complete the proof, recall that the network Jacobian is:
$$J(x,u)=I+h\sigma^{'}(\Delta x)A.$$
We will now look at the specific activation functions:
\begin{enumerate}
    \item For ReLU activation, we simply have that $J(x,u)=I+hA$ in the set
        $\mathcal{P}$. Lemma \ref{lemma:shift} implies that $I+hA$ has only
         positive real eigenvalues located in $[1-h(1-\epsilon),1-h\epsilon]$ which, when $h\in(0,1]$, implies that:
         \begin{equation}
         	\bar{\rho}\leq \max{\{1-h(1- \epsilon),1-h\epsilon\}}<1.
         \end{equation}

    \item For $\tanh$ activations, since the matrix
        \begin{equation}
           \bar{A} = \frac{1}{2}\left(\sigma^{'}(\Delta x)A+A^T\left(\sigma^{'}(\Delta x)\right)^T\right)
        \end{equation}
        is symmetric, $A$ is negative definite and $\sigma^{'}(\cdot)$ is diagonal
        with entries $\sigma^{'}_{ii}(\cdot)\in[\underline{\sigma},1]$ with $0<\underline{\sigma}\ll1$
         when $(x,u)\in\mathcal{P}$, then
        $\bar{A}$ is also negative definite in this set. Therefore, in virtue
        of the observations at page 399-400 of \cite{Horn:2012:MA:2422911}, we
        have that the real part of the eigenvalues of $\sigma^{'}(\Delta x)A$
        is always less than zero in $\mathcal{P}$. Namely,
        \begin{equation}\label{eq:eig}
            \text{RE}(\text{eig}(\sigma^{'}(\Delta x)A))<0.
        \end{equation}
        At the same time, by construction of $A$ and again thanks to
         $\sigma^{'}_{ii}(\cdot)\in[\underline{\sigma},1]$ and $\sigma^{'}_{ij}(\cdot)=0$ if $i\neq j$,
        we have that:
        \begin{equation}
            \begin{aligned}\label{eq:eig1}
            \rho(\sigma^{'}(\Delta x)A)\leq \|\sigma^{'}(\Delta x)A\|_2
            \leq \|\sigma^{'}(\Delta x)\|_2\|A\|_2\leq \|A\|_2 \leq 1-\epsilon
            \end{aligned}
        \end{equation}
        From the above considerations the real part of the eigenvalues of $h\sigma^{'}(\Delta x)A$ is in the interval
        $[-h(1-\epsilon ), -h \underline{\sigma} \epsilon]$.
        Assume $h\leq1$.

        Finally, we show that the Jacobian has only positive real eigenvalues.
        From Theorem 2.2 in \cite{semidef_prod}, we know that $\sigma^{'}(\Delta x)A$ is \emph{similar} to:
        \begin{eqnarray}
        		\sigma^{'}(\Delta x)^{-1/2}\sigma^{'}(\Delta x)A\sigma^{'}(\Delta x)^{1/2} = \sigma^{'}(\Delta x)^{1/2}A\sigma^{'}(\Delta x)^{1/2} \\
		=\sigma^{'}(\Delta x)^{1/2}A{(\sigma^{'}(\Delta x)^{1/2})}^T
        \end{eqnarray}
        which is symmetric, negative definite and therefore has only negative real eigenvalues, provided that $(x,u)\in\mathcal{P}$.
         Since similarity implies eigenvalue equivalence, then $\sigma^{'}(\Delta x)A$
         has only negative real eigenvalues.
          Then, from Lemma \ref{lemma:shift}, and from Eq. (\ref{eq:eig}) and Eq.
        (\ref{eq:eig1}) we have that the eigenvalues of
        $J(x,u)$ are positive real and contained in $[1-h(1-\epsilon ), 1-h \underline{\sigma} \epsilon]$.
        Therefore we have that
        $\bar{\rho}\leq \max{\{1-h(1-\epsilon ),1-h \underline{\sigma} \epsilon\}}$,
        which is less than $1$ as $\underline{\sigma}>0$ by definition.
\end{enumerate}
\end{proof}
The less restrictive bound $\delta=2(1-\epsilon)$ with $h\leq1$ is also sufficient for
 stability but it can result in trajectories that oscillate since it it does not constrain the eigenvalues to be positive real.
  Practically speaking the bound $\delta=2(1-\epsilon)$ has has proven
sufficient for our MNIST experiments to be successful, however,  we believe it is important to stress the difference between
this and our proposed bound. In particular, our solution for fully connected layers
leads to a critically damped system, i.e. to a monotonic trajectory for the 1D case.
This means that we can expect the activations to behave monotonically both in time and in space.
This behaviour is demonstrated in Section \ref{sec:monotonicity}.
The additional regularity of the resulting function acts as a stronger regularisation on the network.

Note also that in the above proof we have shown that, in the case
of ReLU, the $2$-norm is suitable to prove stability. Moreover, if $h=1$ and
 $\epsilon\leq 0.5$ then we have $\bar{\rho}=1-\epsilon$.

\subsection{Derivation of Convolutional Layer Implementation}\label{app:CNN}

\subsubsection{Mathematical derivation of the proposed algorithm}
Denote the convolution operator in NAIS-Net,
 for each latent map $c$, as the following:
\begin{equation} \label{eq:CNNmodel_rewritten2_sup}
   \begin{aligned}
       X^{(c)}(k+1) =X^{(c)}(k)+h\sigma\left(\Delta X^{(c)}(k)\right), \\
   \end{aligned}
\end{equation}
where:
\begin{equation}
\Delta X^{(c)}(k) = \sum_i C^{(c)}_{(i)}*X^{(i)}(k)
         + \sum_j D^{(c)}_{(j)}*U^{(j)}+E^{(c)},
\end{equation}
and where $X^{(i)}(k)$, is the layer state matrix for channel $i$, $U^{(j)}$,
is the layer input data matrix for channel $j$ (where an appropriate zero
padding has been applied) at layer $k$.
The activation function, $\sigma$, is again applied element-wise.
Recall also Algorithm 2 from the main paper.
The following Lemma is obtained (equivalent to Theorem 3 from the main paper):
\begin{lemma} \label{lemma:CNN}

  If Algorithm 2 is used, then the Jacobian stability condition, $\rho(J(x,u))<1$,
  is satisfied $\forall (x,u)\in\mathcal{P}$ for the convolutional layer with $h\leq1$.
\end{lemma}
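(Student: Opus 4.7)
The plan is to bound the spectral radius via the induced $\infty$-norm, using the Gershgorin disc theorem as the hint in the paper suggests. By Lemma~\ref{lem:CNNconn}, the Jacobian can be written as $J(x,u) = I + h\,\sigma'(\Delta x)\,A$ with a specific block-structured $A \in \mathbb{R}^{n_X^2 N_C \times n_X^2 N_C}$ whose diagonal entries in the block corresponding to feature map $c$ are exactly the centre filter coefficients $C^{c}_{i_\text{centre}}$, and whose off-diagonal entries in each such row are the remaining filter coefficients mapping into that output position.

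After Algorithm~2 runs, for each channel $c$ we have $A_{rr} = -1 - \tilde{\delta}_c$ with $|\tilde{\delta}_c| \leq 1-\eta$, and $\sum_{j\neq r}|A_{rj}| \leq 1-\epsilon-|\tilde{\delta}_c|$ on every row $r$ in the block of channel $c$. Since $\sigma'(\Delta x)$ is diagonal with entries $\sigma'_{rr} \in [\underline{\sigma},1]$ inside $\mathcal{P}$, the row $r$ of $J$ has diagonal $1 - h\sigma'_{rr}(1+\tilde{\delta}_c)$ and off-diagonal absolute-sum $h\sigma'_{rr}\sum_{j\neq r}|A_{rj}|$. I would then bound
\begin{equation}
\|J\|_\infty \leq \max_{r}\bigl(\,|1 - h\sigma'_{rr}(1+\tilde{\delta}_c)| + h\sigma'_{rr}(1-\epsilon-|\tilde{\delta}_c|)\,\bigr),
\end{equation}
and use $\rho(J) \leq \|J\|_\infty$ to conclude.

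The core of the proof is a short case split on the sign of the expression inside the absolute value. In the case $h\sigma'_{rr}(1+\tilde{\delta}_c) \leq 1$, the absolute value opens with a plus sign and the bound simplifies to $1 - h\sigma'_{rr}\bigl(\tilde{\delta}_c + |\tilde{\delta}_c| + \epsilon\bigr)$, which is at most $1 - h\sigma'_{rr}\epsilon < 1$ since $\tilde{\delta}_c + |\tilde{\delta}_c| \geq 0$. In the complementary case, one necessarily has $\tilde{\delta}_c > 0$, so $|\tilde{\delta}_c| = \tilde{\delta}_c$ and the bound becomes $-1 + h\sigma'_{rr}(2-\epsilon)$, which is strictly less than $1$ because $h \leq 1$, $\sigma'_{rr} \leq 1$, and $\epsilon > 0$.

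The only subtle step, and what I expect to be the main obstacle, is making sure Lemma~\ref{lem:CNNconn} is invoked correctly so that the row-wise constraint produced by Algorithm~2 on the filter coefficients translates exactly to a row-wise constraint on $A$. Concretely, I would need to verify that for each row index $n_X^2 c + j$ of $A$, the diagonal entry lines up with $C^{c}_{i_\text{centre}}$ (the clipped $-1-\tilde{\delta}_c$) and the off-diagonal entries in that row are a subset of the other filter taps $\{C^{c}_j\}_{j\neq i_\text{centre}}$ across all input channels contributing to channel $c$, taking the zero-padding into account. Once this indexing bookkeeping is settled, the Gershgorin/$\infty$-norm argument above delivers $\bar{\rho} < 1$ uniformly over $(x,u)\in\mathcal{P}$, so Condition~\ref{assum:simple_jac} holds and Theorem~\ref{th:BIBO1} applies.
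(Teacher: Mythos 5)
Your proposal is correct and follows essentially the same route as the paper: invoke Lemma~\ref{lem:CNNconn} for the structure of $A$, then bound $\rho(J)\leq\|J\|_\infty$ row-by-row using the constraints $A_{rr}=-1-\tilde{\delta}_c$ and $\sum_{j\neq r}|A_{rj}|\leq 1-\epsilon-|\tilde{\delta}_c|$ enforced by Algorithm~2. The only cosmetic difference is that the paper dispatches the term $|1-h\sigma'_{rr}(1+\tilde{\delta}_c)|$ with a single triangle inequality (using $h\sigma'_{rr}\leq1$) to get the uniform bound $1-h\sigma'_{rr}\epsilon$, where you use a two-case sign split that reaches the same conclusion.
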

The first step to obtain the result is to prove that the convolutional layer can
 be expressed as a suitable fully connected layer as in Lemma 1 from the main
 paper. This is shown next.

\begin{proof}(Proof of Lemma 1 from the main paper)

For layer $k$, define $X(k)$ as a tall matrix containing all the state channel
matrices $X^{(c)}(k)$ stacked vertically, namely:
\begin{equation}
X = \left(\begin{array}{c}
	X^{(1)}\\ X^{(2)} \\ \vdots\\ X^{(N_c)}
	\end{array}\right).
\end{equation}
Similar considerations apply to the input matrix $U$
and the bias matrix $E$. Then, convolutional layers can
be analysed in a similar way to fully connected layers, by means of the
following vectorised representation:
\begin{equation}\label{eq:vectorisedCNN}
    \begin{aligned}
        x(k+1) = x(k)+h\sigma\big(Ax(k)+Bu+b\big),
    \end{aligned}
\end{equation}
where $x(k)\in\mathbb{R}^{n_X^2 \cdot N_c}$,
$u=\mathbb{R}^{n_U^2\cdot N_c}$, where $N_c$ is
the number of channels, $n_X$ is the size of the latent space matrix for a single
channel, while $n_U$ is the size of the input data matrix for a single channel.
In eq. (\ref{eq:vectorisedCNN}), the matrices $A$ and $B$ are made of blocks
containing the convolution filters elements in a particular structure, to be
characterised next. First, in order to preserve dimensionality of $x$, the
convolution for $x$ will have a fixed stride of $1$, a filter size $n_C$ and a
zero padding of $p\in\mathbb{N}$, such that $n_C=2p+1$. If a greater stride is
used, then the state space can be extended with an appropriate number of
constant zero entries (not connected). Let's then consider, without loss of
generality, a unitary stride. The matrix $A$ is all we need to define
in order to prove the Lemma.

In eq. (\ref{eq:vectorisedCNN}), the vector $x$ is chosen to be the vectorised
version of $X$. In particular,
\begin{equation}\label{eq:smallx}
	x=\left(\begin{array}{c}
	x^{(1)}\\ x^{(2)} \\ \vdots\\ x^{(N_c)}
	\end{array}\right),
\end{equation}
where $x^{(c)}$ is the vectorised version of $X^{(c)}$. More specifically,
these objects are defined as:
\begin{equation}
	X^{(c)} = \left(
        \begin{array}{cccc}
            X^{(c)}_{1,1} & X^{(c)}_{1,2} & \cdots & X^{(c)}_{1, n} \\
            X^{(c)}_{2,1} & X^{(c)}_{2,2} & \cdots & X^{(c)}_{2, n} \\
            \vdots & \vdots & \ddots & \vdots \\
            X^{(c)}_{n,1} & X^{(c)}_{n, 2}  & \cdots & X^{(c)}_{n, n}
        \end{array}
    \right) \in \mathbb{R}^{n_X\times n_X},
\end{equation}
and
\begin{equation}
	x^{(c)}=\left(\begin{array}{c}
	X^{(c)}_{1,1} \\ X^{(c)}_{1,2} \\ \vdots\\ X^{(c)}_{n,n}
	\end{array}\right)\in\mathbb{R}^{n_X^2}.
\end{equation}
Similar considerations apply to $u$.
The matrix $A$ in Eq. (\ref{eq:vectorisedCNN})
has the following structure:
\begin{equation}
    A =
    \left(
        \begin{array}{cccc}
          A^{(1)}_{(1)} & A^{(1)}_{(2)}  & \dots & A^{(1)}_{(Nc)} \\
          \\
          A^{(2)}_{(1)} & A^{(2)}_{(2)}  & \dots & A^{(2)}_{(Nc)} \\
          \vdots &  \vdots   &   \ddots &  \vdots   \\
            A^{(Nc)}_{(1)} & A^{(Nc)}_{(2)}  & \dots & A^{(Nc)}_{(Nc)} \\
        \end{array}
    \right)\in\mathbb{R}^{(n^2 \cdot N_c) \times (n^2 \cdot N_c)},
\end{equation}
where $Nc$ is the number of channels and $A^{(c)}_{(i)}$ corresponds to the
filter $C^{(c)}_{(i)}$. In particular, each row of ${A}$ contains in fact the
elements of the filter $C^{(c)}_{(i)}$, plus some zero elements, with the
central element of the filter $C^{(c)}_{(i)}$ on the diagonal. The latter point
is instrumental to the proof and can be demonstrated as follows. Define the
single channel filters as:
\begin{equation}
    C^{(c)}_{(i)} = \left(
        \begin{array}{cccc}
        {C^{(c)}_{(i)}}_{1,1} & {C^{(c)}_{(i)}}_{1,2} & \cdots & {C^{(c)}_{(i)}}_{1, n_C} \\
        {C^{(c)}_{(i)}}_{2,1} & {C^{(c)}_{(i)}}_{2,2} & \cdots & {C^{(c)}_{(i)}}_{2, n_C} \\
        \vdots & \vdots & \ddots & \vdots \\
        {C^{(c)}_{(i)}}_{n_C,1} & {C^{(c)}_{(i)}}_{n_C, 2} & \cdots & {C^{(c)}_{(i)}}_{n_C, n_C}
        \end{array}
    \right).
\end{equation}
Consider now the output of the single channel convolution
$Z^{(c)}_{(i)}=C^{(c)}_{(i)}*X^{(i)}$ with the discussed padding and stride.
The first element of the resulting matrix, ${Z^{(c)}_{(i)}}_{1,1}$ is
determined by applying the filter to the first patch of $X^{(i)}$, suitably
padded. For instance, for a $3$-by-$3$ filter ($p=1$), we have:
\begin{equation}
	\text{patch}_{1,1}\left({X^{(i)}}\right) =  \left(\begin{array}{cccccc}
    \tikzmark{left}{0} & 0 & 0 & 0 & \cdots & 0 \\
	0 &  X^{(c)}_{1,1} & X^{(c)}_{1,2}  & \cdots & X^{(c)}_{1, n} & 0 \\
    0 & X^{(c)}_{2,1} & \tikzmark{right}{X^{(c)}_{2,2}}  & \cdots & X^{(c)}_{2, n}  & 0\\
	\vdots &   \vdots & \vdots & \ddots & \vdots & \vdots \\
	   0& X^{(c)}_{n,1} & X^{(c)}_{n, 2}  & \cdots & X^{(c)}_{n, n} & 0
	   \end{array}\right).
\end{equation}
\Highlight
The first element of ${Z^{(c)}_{(i)}}$ is therefore given by:
\begin{equation}
	{Z^{(c)}_{(i)}}_{1,1}= X^{(i)}_{1,1}\ {C^{(c)}_{(i)}}_{i_\text{centre},
        i_\text{centre}} + X^{(i)}_{1,2}\ {C^{(c)}_{(i)}}_{i_\text{centre},
        i_\text{centre}+1} + \dots + X^{(i)}_{n_C-p,n_C-p}\ {C^{(c)}_{(i)}}_{n_C, n_C},
\end{equation}
where $i_\text{centre}$ denotes the central row (and column) of the filter.

The second element of the first row can be computed by means of the following
patch (again when $p=1$, for illustration):
\begin{equation}
	\text{patch}_{1,2}\left({X^{(i)}}\right) =  \left(\begin{array}{cccccc}
    0 & \tikzmark{left}{0} & 0 & 0 & \cdots & 0 \\
	0 &  X^{(c)}_{1,1} & X^{(c)}_{1,2}  & \cdots & X^{(c)}_{1, n} & 0 \\
    0 & X^{(c)}_{2,1} & X^{(c)}_{2,2}  & \tikzmark{right}{\cdots} & X^{(c)}_{2, n}  & 0\\
	\vdots &   \vdots & \vdots & \ddots & \vdots & \vdots \\
	   0& X^{(c)}_{n,1} & X^{(c)}_{n, 2}  & \cdots & X^{(c)}_{n, n} & 0
	   \end{array}\right).
\end{equation}
\Highlight
The element is therefore given by:
\begin{equation}
	{Z^{(c)}_{(i)}}_{1,2}= X^{(i)}_{1,1}\ {C^{(c)}_{(i)}}_{i_\text{centre},
        i_\text{centre}-1} + X^{(i)}_{1,2}\ {C^{(c)}_{(i)}}_{i_\text{centre},
        i_\text{centre}} + \dots + X^{(i)}_{n_C-p,n_C}\ {C^{(c)}_{(i)}}_{n_C, n_C}.
\end{equation}
The remaining elements of ${Z^{(c)}_{(i)}}$ will follow a similar rule, which
will involve (in the worst case) all of the elements of the filter.
In particular, one can notice for $Z_{ij}$ the corresponding element of $X$,
$X_{ij}$, is always multiplied by
${C^{(c)}_{(i)}}_{i_\text{centre},  i_\text{centre}}$.
In order to produce the matrix $A$, we can consider the Jacobian of $Z$. In particular, for the first two rows of ${A^{(c)}_{(i)}}$
we have:
\begin{equation} \label{eq:A1}
    {A^{(c)}_{(i)}}_{1,\bullet} = \frac{\partial {Z^{(c)}_{(i)}}_{1,1}}{\partial  X^{(i)}}
	= \left[
        \begin{array}{ccccc}
        {C^{(c)}_{(i)}}_{i_\text{centre},  i_\text{centre}} &
        {C^{(c)}_{(i)}}_{i_\text{centre}+1, i_\text{centre}+1}& \dots& {C^{(c)}_{(i)}}_{n_C, n_C}
		& 0
	    \end{array}
      \right],
\end{equation}
where $[\cdot]$ is used to define a row vector, and where
${A^{(c)}_{(i)}}_{1,\bullet}$ has an appropriate number of zeros at the end,
and
\begin{equation} \label{eq:A2}
    {A^{(c)}_{(i)}}_{2,\bullet} = \frac{\partial{Z^{(c)}_{(i)}}_{1,2}}{\partial  X^{(i)}}
    =  \left[
        \begin{array}{ccccc}
            {C^{(c)}_{(i)}}_{i_\text{centre},  i_\text{centre}-1} &
            {C^{(c)}_{(i)}}_{i_\text{centre},  	i_\text{centre}} &
            \dots &   {C^{(c)}_{(i)}}_{n_C, n_C}
            & 0
        \end{array}\right].
\end{equation}
Note that, the vectors defined in eq. (\ref{eq:A1}) and eq. (\ref{eq:A2}),
contain several zeros among the non-zero elements.
By applying the filter $C_{(i)}^{(c)}$ to the remaining patches of $X^{(i)}$
 one can inductively construct
the matrix $A_{(i)}^{(c)}$. It can also be can noticed that each
row ${A^{(c)}_{(i)}}_{j,\bullet}$ contains \emph{at most} all of the elements
of the filter, with the central element of the filter in position $j$. By
stacking together the obtained rows ${A^{(c)}_{(i)}}_{j,\bullet}$ we obtain a
matrix, ${A^{(c)}_{(i)}}$, which has ${C^{(c)}_{(i)}}_{i_\text{centre},
i_\text{centre}}$ on the diagonal. Each row of this matrix contains, in the
worst case, all of the elements of ${C^{(c)}_{(i)}}$.

Define the vectorised version of ${Z^{(c)}_{(i)}}$ as:
\begin{equation}
	z^{(c)}_{(i)} =
     \left(
        \begin{array}{c}
	        {Z^{(c)}_{(i)}}_{1,1} \\
            {Z^{(c)}_{(i)}}_{1,2} \\
            \vdots\\
            {Z^{(c)}_{(i)}}_{n,n}
	    \end{array}
     \right) \in \mathbb{R}^{n^2},
\end{equation}
which, by linearity, satisfies:
\begin{equation}
	z^{(c)}_{(i)} = {A^{(c)}_{(i)}}x^{(i)}.
\end{equation}
By summing over the index $i$ we obtain the vectorised output of the
convolution $C*X$ for the channel $c$:
\begin{equation}\label{eq:zchan}
	z^{(c)} = \sum_{i=1}^{N_c}{A^{(c)}_{(i)}}x^{(i)}=
	\left[\begin{array}{cccc}
		{A}^{(c)}_{(1)} & {A}^{(c)}_{(2)}  & \dots & {A}^{(c)}_{(N_c)}
	\end{array}
	\right] x=A^{(c)}x,
\end{equation}
where the matrices ${A}^{(c)}_{(i)}$ are stacked horizontally and where we have
used the definition of $x$ given in eq. (\ref{eq:smallx}). The full matrix $A$
is therefore given by:
\begin{equation}\label{eq:Alines}
{A}=\left(
\begin{array}{c}
  {A}^{(1)} \\
  {A}^{(2)} \\
  \vdots  \\
    {A}^{(N_c)} \\
\end{array}
\right)=
\left(
\begin{array}{cccc}
  {A}^{(1)}_{(1)} & {A}^{(1)}_{(2)}  & \dots & {A}^{(1)}_{(N_c)} \\ \hline
  {A}^{(2)}_{(1)} & {A}^{(2)}_{(2)}  & \dots & {A}^{(2)}_{(N_c)} \\ \hline
  \vdots &  \vdots   &   \ddots &  \vdots   \\ \hline
    {A}^{(N_c)}_{(1)} & {A}^{(N_c)}_{(2)}  & \dots & {A}^{(N_c)}_{(N_c)} \\
\end{array}
\right),
\end{equation}
where we have conveniently separated the long blocks used to produce the single channels result,
 $z^{(c)}$.  By defining the vector
 \begin{equation}\label{eq:smallz}
	z=\left(\begin{array}{c}
	z^{(1)}\\ z^{(2)} \\ \vdots\\ z^{(N_c)}
	\end{array}\right),
\end{equation}
and by means of eq. (\ref{eq:zchan}) we have that $z=Ax$. This is a vectorised
representation of $C*X$.
\end{proof}
We are now ready to prove Lemma \ref{lemma:CNN} and consequently
Theorem 3 from the main paper.
\begin{proof}(Proof of Lemma \ref{lemma:CNN})

Similar to what done for the fully connected layer, we will first show that the
Algorithm 2 places the eigenvalues of $I+A$ strictly inside the unit circle.
Then, we will also show that $J(x,u)$ enjoys the same property in
$\mathcal{P}$.

We will now derive the steps used in Algorithm 2 to enforce that
$\rho(I+A)\leq1-\epsilon$. Recall that, from Lemma \ref{lemma:shift}, we need
the eigenvalues $A$ to be lying inside the circle, $\mathcal{S}$, of the
complex plane centered at $(-1, 0\jmath)$ with radius $1-\epsilon$. More formally:
\begin{equation}
	\mathcal{S}=\{\lambda \in\mathbb{C}: |\lambda + 1|\leq 1-\epsilon\}.
\end{equation}
The Gershgorin theorem offers a way to locate the eigenvalues of ${A}$ inside
$\mathcal{S}$.  Consider the $c$-th long long block in eq. (\ref{eq:Alines}),
$A^{(c)}$.  Recall the particular structure of  ${A}_{(c)}^{(c)}$ as
highlighted in eq. (\ref{eq:A1}) and eq. (\ref{eq:A2}). For each long block
$A^{(c)}$ we have that all associated Gershgorin disks must satisfy:
\begin{equation}\label{eq:Gershgorin}
  \left|\lambda-C_{i_\text{centre}}^{(c)}\right|\leq\sum_{i\neq{i_\text{centre}}}\left|C_i^{(c)}\right|,
\end{equation}
where $\lambda$ is any of the corresponding eigenvalues,
$C_{i_\text{centre}}^{(c)}$ is the \emph{central element} of the filter
${C}_{(c)}^{(c)}$, namely, $C_{i_\text{centre}}^{(c)}={C^{(c)}_{(c)}}_{i_\text{centre}, i_\text{centre}}$,
and the sum is performed over all of the remaining elements of ${C}_{(c)}^{(c)}$
plus all the elements of the remaining filters used for,
$z^{(c)}$, namely ${C}^{(c)}_{(j)}$, $\forall j\neq c$. Therefore, one can act
directly on the kernel $C$ without computing $A$.

By means of Algorithm 2, for each channel $c$ we have that:
\begin{equation}
	C_{i_\text{centre}}^{(c)} = -1 -\delta_c,
\end{equation}
and
\begin{equation}
	\sum_{i\neq{i_\text{centre}}}\left|C_i^{(c)}\right|\leq 1-\epsilon -|\delta_c|
\end{equation}
where
\begin{equation}\label{eq:hyper}
|\delta_c|<1-\eta,\ 0<\epsilon<\eta<1.
\end{equation}
This means that for every $c$, the $c$-th block  of $A$ has the largest
Gershgorin region bounded by a disk, $\mathcal{S}^c$. This disk is centred at
$(-1-\delta_c, 0\jmath)$ with radius $1-\epsilon-|\delta_c|$. More formally, the
disk is defined as:
 \begin{equation}
	\mathcal{S}^c=\{\lambda \in\mathbb{C}: |\lambda + 1+\delta_c|\leq 1-\epsilon-|\delta_c|\}.
\end{equation}
Thanks to eq. (\ref{eq:hyper}), this region is not empty and its center can be
only strictly inside $\mathcal{S}$. Clearly, we have that,
$\mathcal{S}^c\subseteq\mathcal{S}, \forall c$. This follows from convexity and
from the fact that, when $|\delta_c|=1-\eta$, thanks to eq. (\ref{eq:hyper}) we have
\begin{equation}\label{eq:gerhdiskdelta}
    \mathcal{S}^c=\{\lambda \in\mathbb{C}: |\lambda +1\pm (1-\eta)|\leq 1-\epsilon-(1-\eta)\}\subset\mathcal{S}
\end{equation}
while on the other hand
$\delta_c=0 \Rightarrow \mathcal{S}^c=\mathcal{S}$.

We will now show that the eigenvalue condition also applies to the network
state Jacobian. From Algorithm 2 we have that each row $j$ of the matrix
$\underbar{J}=I+A$ satisfies (for the corresponding channel $c$):
\begin{equation}
    \begin{aligned}
        \sum_i \left|\underbar{J}_{ji}\right| &\leq \left|1+C_{i_\text{centre}}^{(c)}\right|
        + \sum_{i\neq{i_\text{centre}}}\left|C_i^{(c)}\right|\\
                &\leq |\delta_c| + 1-\epsilon - |\delta_c|=1-\epsilon.
    \end{aligned}
\end{equation}
Thus $\|\underbar{J}\|_\infty=\|I+A\|_\infty\leq 1-\epsilon<1$.
\footnote{Note that this also implies that $\rho(I+A)\leq1-\epsilon$, by means
    of the matrix norm identity $\rho(\cdot)\leq\|\cdot\|$. This was however
    already verified by the Gershgorin Theorem.}
Recall that, in the vectorised representation, the state Jacobian is:
 $$J(x,u)=I+h\sigma^{'}(\Delta x)A.$$
 Then, if $h\leq1$,  we have that $\forall (x,u)\in\mathcal{P}$:
\begin{equation}
    \begin{aligned}
        \|J(x,u)\|_\infty &= \max_i |1-h\sigma^{'}_{ii}(\Delta x) (1+\delta_c)| + h\sigma^{'}_{ii}(\Delta x)\sum_{j\neq i} |A_{ij}|\\
        &\leq\max_i  \{1-h\sigma^{'}_{ii}(\Delta x) +h\sigma^{'}_{ii}(\Delta x)|\delta_c| + h\sigma^{'}_{ii}(\Delta x)(1-\epsilon)-h\sigma^{'}_{ii}(\Delta x)|\delta_c|\} \\
        &= \max_i \{1-h\sigma^{'}_{ii}(\Delta x)\epsilon\}<1-h\underline{\sigma}\epsilon\leq1.
    \end{aligned}
\end{equation}
The proof is concluded by means of the identity $\rho(\cdot)\leq\|\cdot\|$ .
\end{proof}
Note that in the above we have also showed that in the convolutional case the infinity norm
is suitable to prove stability.
Moreover, for ReLU we have that $\bar{\rho}\leq 1-h\epsilon$.

The above results can directly be extended to the untied weight case providing
that the projections are applied at each stage of the unroll.

In the main paper we have used $\delta_c=0,\ \forall c$ (equivalently, $\eta=1$).
This means that one filter weight per latent channel is fixed at $-1$ which
 might seem conservative. It is however worth noting that this choice provides the biggest Gershgorin disk
  for the matrix $A$ as defined in Eq. (\ref{eq:gerhdiskdelta}). Hence $\delta_c=0,\ \forall c$ results in the least
   restriction for the remaining elements of the filter bank.
   At the same time, we have $N_C$ less parameters to train.
    A tradeoff is however
   possible if one wants to experiment with different values of $\eta\in(\epsilon,1)$.

\subsubsection{Illustrative Example}
Consider, for instance, $3$ latent channels $X^{(1)},\dots,X^{(3)}$, which
results in $3$ blocks of $3$ filters, $1$ block per channel $X^{(c)}(k+1)$.
Each filter has the same size $n_C$, which again needs to be chosen such that
the channel dimensionality is preserved. This corresponds to the following
matrix:
\begin{equation}\label{eq:toeplitz_explained}
    \begin{aligned}
    {A} & =
    \left(
    \begin{array}{c|c|c}
      {A}^{(1)}_{(1)} & {A}^{(1)}_{(2)} & {A}^{(1)}_{(3)} \\ \hline
      {A}^{(2)}_{(1)} & {A}^{(2)}_{(2)} & {A}^{(2)}_{(3)} \\ \hline
      {A}^{(3)}_{(1)} & {A}^{(3)}_{(2)} & {A}^{(3)}_{(3)} \\
    \end{array}
    \right) = \\ & =
    \left(
    \begin{array}{cc|cc|cc}
        C_{i_\text{centre}}^{(1)} &  C_j^{(1)} & C_{n_C-1}^{(1)}  & \dots & \dots   & C_{3 \cdot n_C-1}^{(1)} \\
        \vdots & \vdots  &  \ddots & \dots  & \dots & \vdots \\ \hline
        \tilde C_1^{(2)} &  \dots  & C_{i_\text{centre}}^{(2)} & \dots & \dots & C_{3 \cdot c_C-1}^{(2)} \\
        \vdots & \vdots  &  \ddots & \dots & \dots & \vdots \\ \hline
        \tilde C_1^{(3)} &  \dots  & \dots & \dots & C_{i_\text{centre}}^{(3)} & \vdots  \\
        \vdots & \vdots  &  \ddots & \dots & \dots & \ddots \\
    \end{array}
    \right),
    \end{aligned}
\end{equation}
where the relevant rows include all elements of the filters (in the worst case)
together with a large number of zeros, the position of which does not matter for our
results, and the diagonal elements are known
and non-zero. Therefore, according to eq. (\ref{eq:toeplitz_explained}), we have
to repeat Algorithm 2 from \emph{main paper} for each of
the $3$ filter banks.

\section{Analysis of 1D Activations}\label{sec:monotonicity}
This paper has discussed the formulation and implementation of stability conditions for a non-autonomous dynamical system, inspired by the ResNet, with the addition of a necessary input skip connection.
This system's unroll, given a constant input, results in a novel architecture for supervised learning.
In particular, we are interested in using the system's trajectory in two ways. The first consists in using the last point of an unroll of fixed length $K$. The second one uses an unroll of varying length less or equal to $K>0$, which depends on a stopping criterion $\|\Delta x(x,u)\|<\epsilon$, with $\epsilon>0$.
In the former case, this produces an input-output map (a NAIS-Net block) that is Lipschitz for any $K$ (even infinity) and significantly better behaved than a standard ResNet without the use regularisation or batch normalisation. The latter case, defines a \emph{pattern-dependent processing depth} where the network results instead in a piecewise Lipschitz function for any $K>0$ and $\epsilon>0$.

In order to clarify further the role of stability, the behaviour of a scalar NAIS-Net block and its unstable version, \emph{non-autonomous ResNet}, are compared. Recall that the NAIS-Net block is defined by the unroll of the dynamical system:
\begin{equation}\label{eq:DNN_tied_again}
    x(k+1) = x(k) + \Delta x(x(k),u(k)) = x(k)+h\sigma\bigg(Ax(k)+Bu+b\bigg).
\end{equation}
In this Section, since the scalar case is considered, $A$, $B$ and $b$ are scalar. In particular, we investigate different values of $A$ that satisfy or violate our stability conditions, namely $\|I+A\|<1$, which in this special case translate into $A \in (-2, 0)$. Whenever this condition is violated we denote the resulting network as \emph{non-autonomous ResNet} or, equivalently, \emph{unstable NAIS-Net}.
The input parameter $B$ is set to $1$ and the bias $b$ is set to zero for all experiments. Note that, in this setting, varying the input is equivalent to varying the bias.
\subsection{Fixed number of unroll steps}
Figure \ref{sec:1d_unstable_1} shows a couple of pathological cases in which the unstable non-autonomous ResNet produces unreliable or uninformative input-output maps. In particular, the top graphs present the case of positive unstable eigenvalues. The top left figure shows that in this case the tanh activated network presents bifurcations which can make gradients explode \cite{pascanu2013a}. The slope is nearly infinite  around the origin and finally, for large inputs, the activation collapses into a flat function equal to $k A$, where $k$ is the iteration number. The ReLU network (top right) has an exponential gain increase per step $k$ and the gain for $k=100$ reaches $10^{30}$ (see red box and pointer). Bottom graphs present the case of large negative eigenvalues. In the bottom left figure, the tanh activation produces a map that has limited slope but is also quite irregular, especially around the origin. This can also result into large gradients during training. The ReLU activated network (bottom right) instead produces an uninformative map, $\max(u,0)$, which is (locally) independent from the parameter $A$ and the unroll length $K$.

\begin{figure}[t!]
\centering
    \noindent
    \begin{minipage}{0.48\columnwidth}
    \begin{figure}[H]
        \centering
            \includegraphics[width=\linewidth, clip]{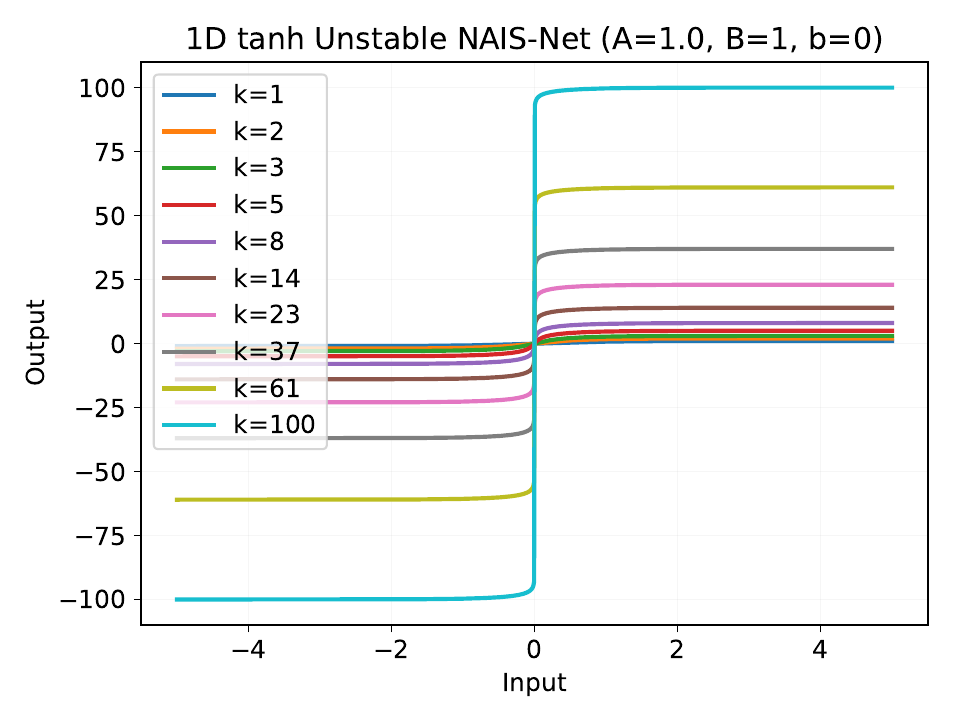}
    \end{figure}
    \end{minipage}
    \noindent
    \begin{minipage}{0.48\columnwidth}
    \begin{figure}[H]
     \includegraphics[width=\linewidth, trim={0cm, 7cm, 0cm, 7cm}, clip]{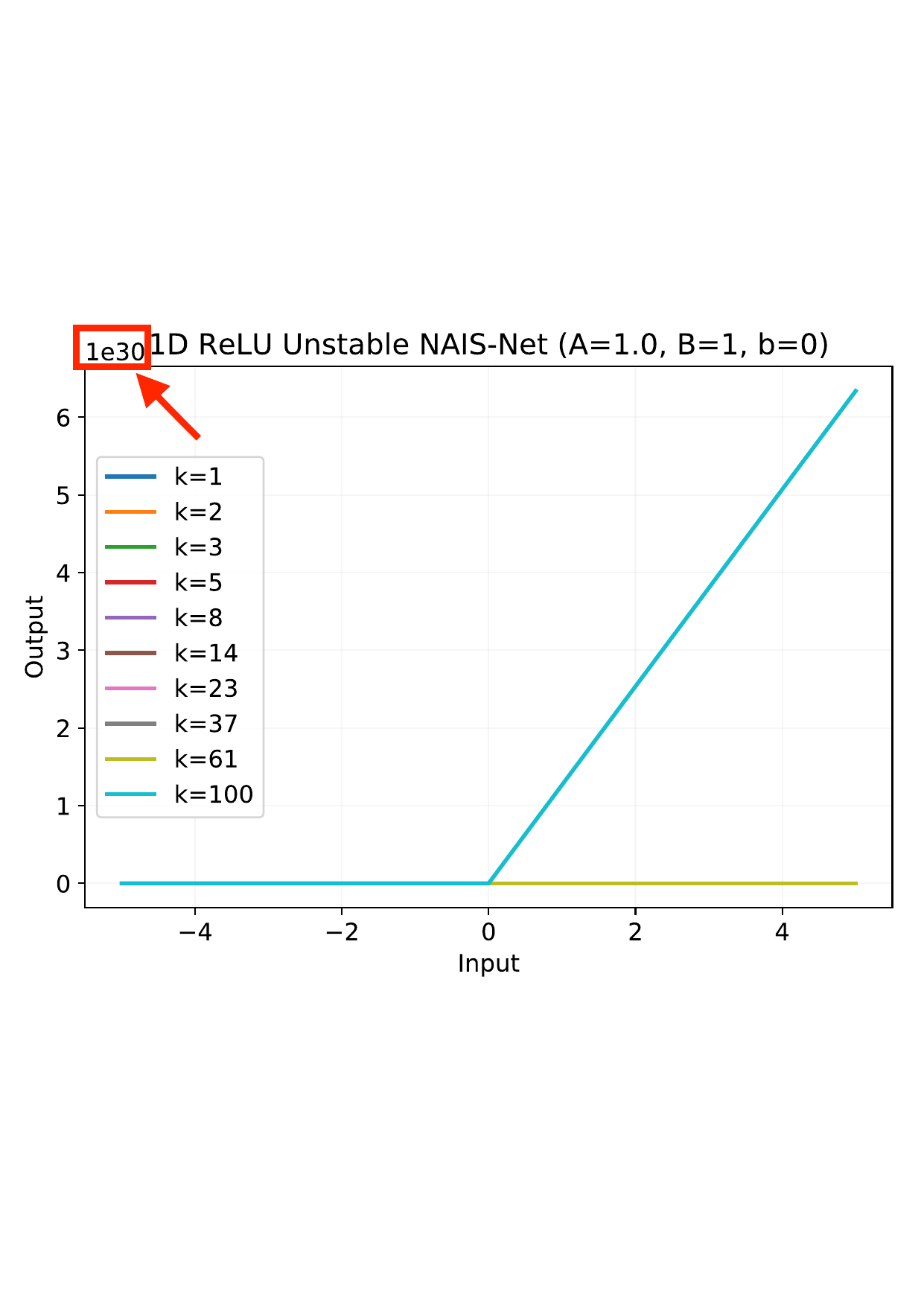}
     \end{figure}
    \end{minipage}
     \begin{minipage}{0.48\columnwidth}
    \begin{figure}[H]
        \centering
            \includegraphics[width=\linewidth, clip]{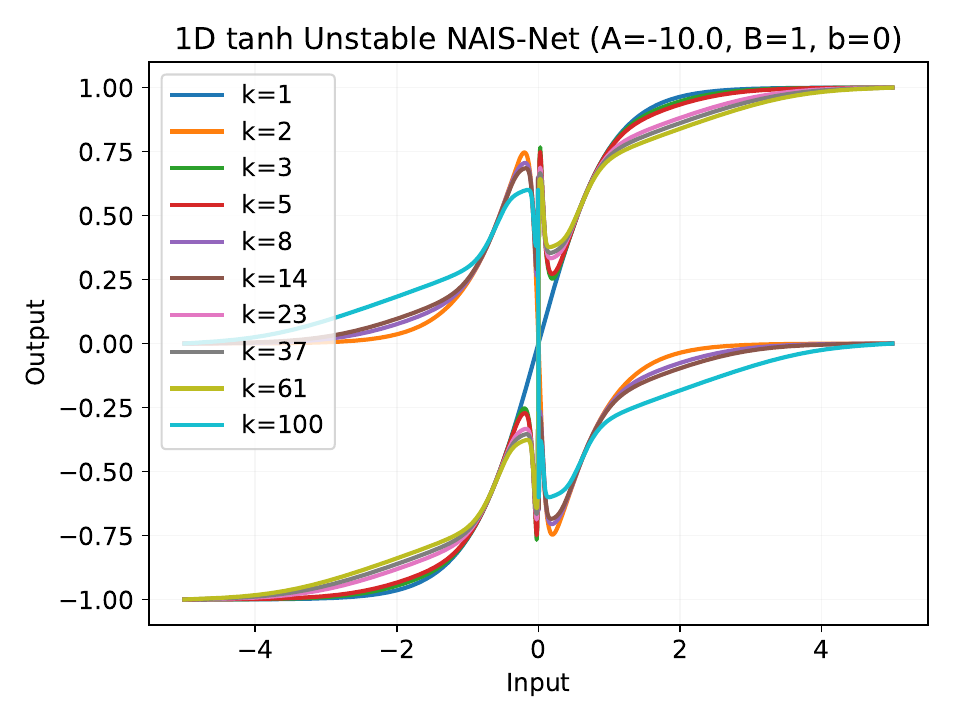}
    \end{figure}
    \end{minipage}
    \noindent
    \begin{minipage}{0.48\columnwidth}
    \begin{figure}[H]
     \includegraphics[width=\linewidth]{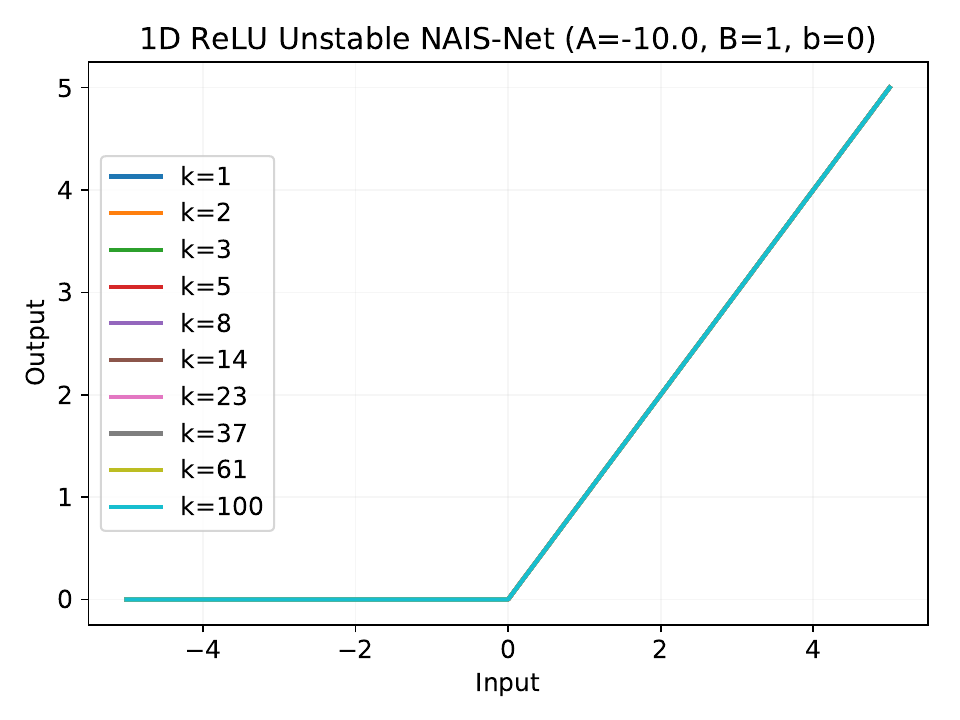}
     \end{figure}
    \end{minipage}

    \caption{
    \small{\bf Single neuron unstable NAIS-Net (non-autonomous ResNet). Input-output map for different unroll length for tanh (Left) and ReLU (Right) activations.}
    Pathological cases in which the unstable non-autonomous ResNet produces unreliable or uninformative input-output maps. In particular, the top graphs present the case of positive unstable eigenvalues. The top left figure shows that in this case the tanh activated network presents bifurcations which can make gradients explode \cite{pascanu2013a}. The slope is also nearly infinite  around the origin. Elsewhere, the activation collapses into a flat function equal to $k A$, where $k$ is the iteration number. The ReLU activations (top right) has an exponential gain increase per step $k$ and the gain for $k=100$ reaches $10^{30}$ (see red box and pointer). Bottom graphs present the case of large negative eigenvalues. In the bottom left figure, the tanh activation produces a map that has limited slope but it is quite irregular, especially around the origin. This can also result large gradients during training. The ReLU activated network (bottom right) instead produces an uninformative map, $\max(u,0)$, locally independent from $A$ and the unroll length $K$.
    }
    \label{sec:1d_unstable_1}
\end{figure}

Figure \ref{sec:1d_stable_1} shows the input-output maps produced by stable NAIS-Net with our proposed reprojection for fully connected architectures. In particular, the top and bottom graphs present the case of positive real stable eigenvalues with different magnitude. This means that the resulting trajectories are critically damped, namely, oscillation-free. The left figures shows that as a result the stable tanh activated networks have monotonic activations (strictly increasing around the origin) that tend to a straight line as $k\rightarrow\infty$. This is confirmed by the theoretical results. Moreover, the map is Lipschitz with Lipschitz constant equal to the steady state gain presented in the main paper, $\|A^{-1}\|\ \|B\|$. Figures on the right present the ReLU case where the maps remain of the same form but change in slope until the theoretical gain $\|A^{-1}\|\ \|B\|$ is reached. This means that one cannot have an unbounded slope or the same map for different unroll length. The rate of change of the map as a function of $k$ is also determined by the parameters but it is always under control.
Although this was not proven in our results, the behaviour of the resulting input-output maps suggests that gradient should be well behaved during training and not explode nor vanish.  Finally, one could make the Lipschitz constant unitary by multiplying the network output by $\|A|\//\|B\|$ once the recursion is finished. This could be used as an alternative to batch normalization and it will be investigated in future.

\begin{figure}[t!]
\centering
    \noindent
    \begin{minipage}{0.48\columnwidth}
    \begin{figure}[H]
        \centering
            \includegraphics[width=\linewidth, clip]{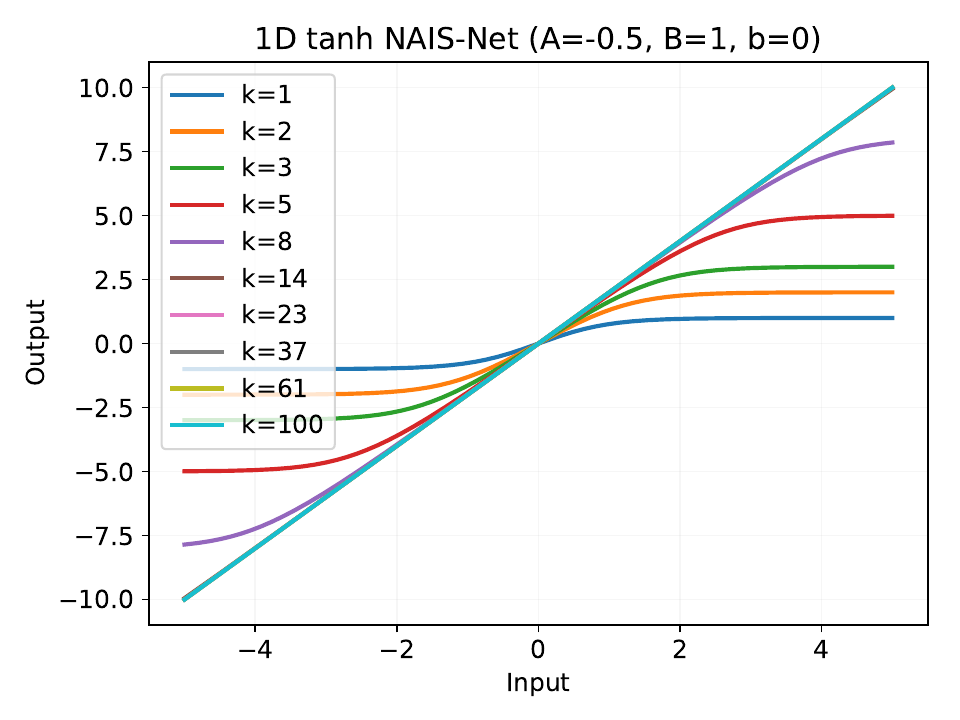}
    \end{figure}
    \end{minipage}
    \noindent
    \begin{minipage}{0.48\columnwidth}
    \begin{figure}[H]
     \includegraphics[width=\linewidth]{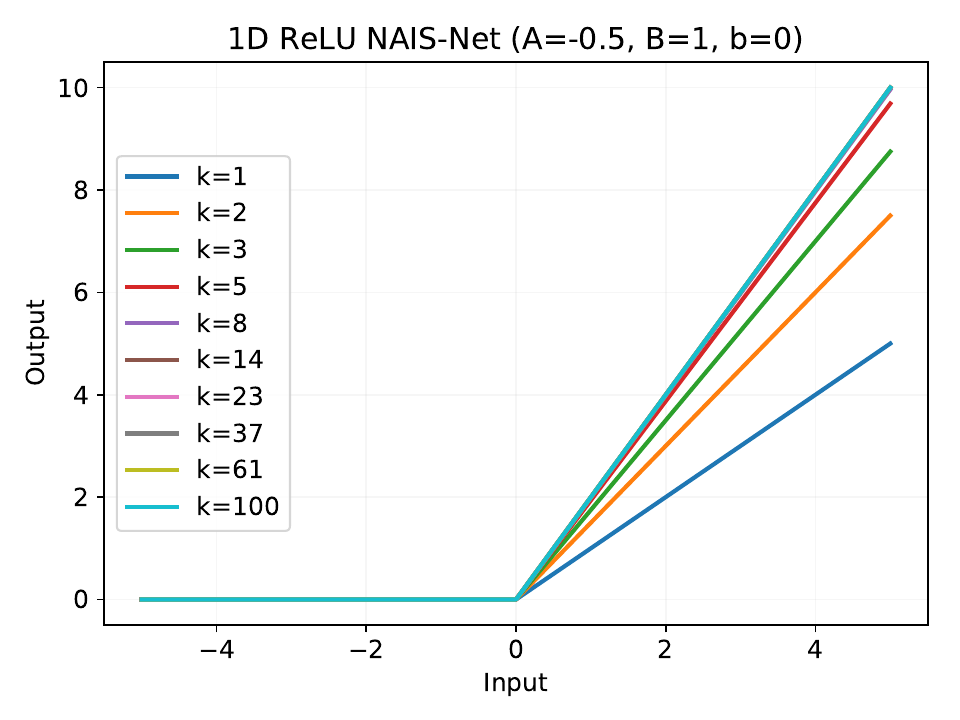}
     \end{figure}
    \end{minipage}
     \begin{minipage}{0.48\columnwidth}
    \begin{figure}[H]
        \centering
            \includegraphics[width=\linewidth, clip]{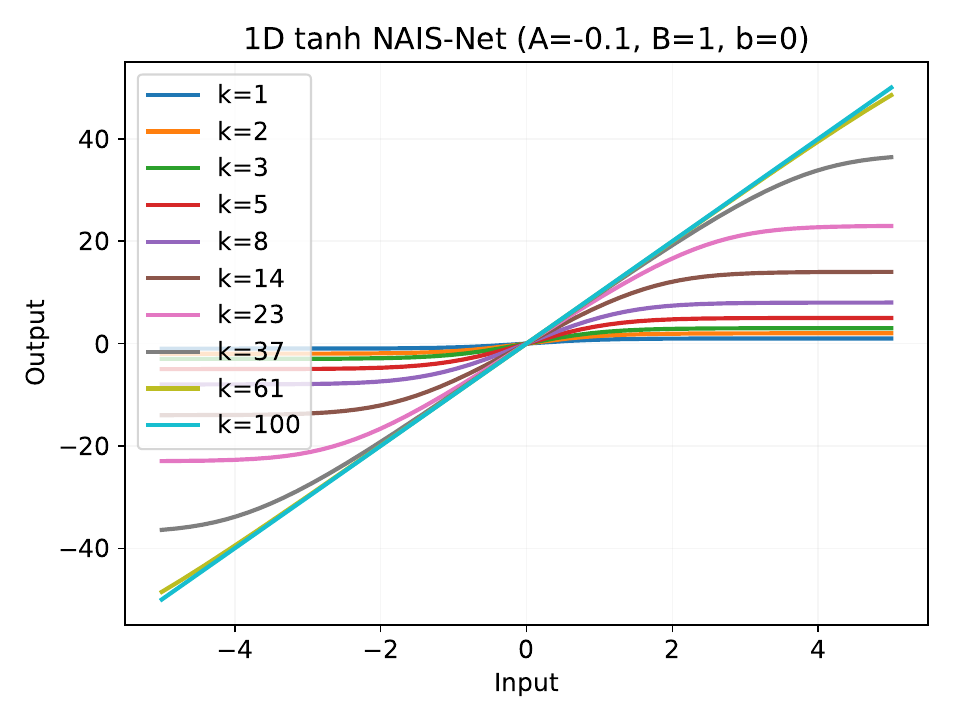}
    \end{figure}
    \end{minipage}
    \noindent
    \begin{minipage}{0.48\columnwidth}
    \begin{figure}[H]
     \includegraphics[width=\linewidth]{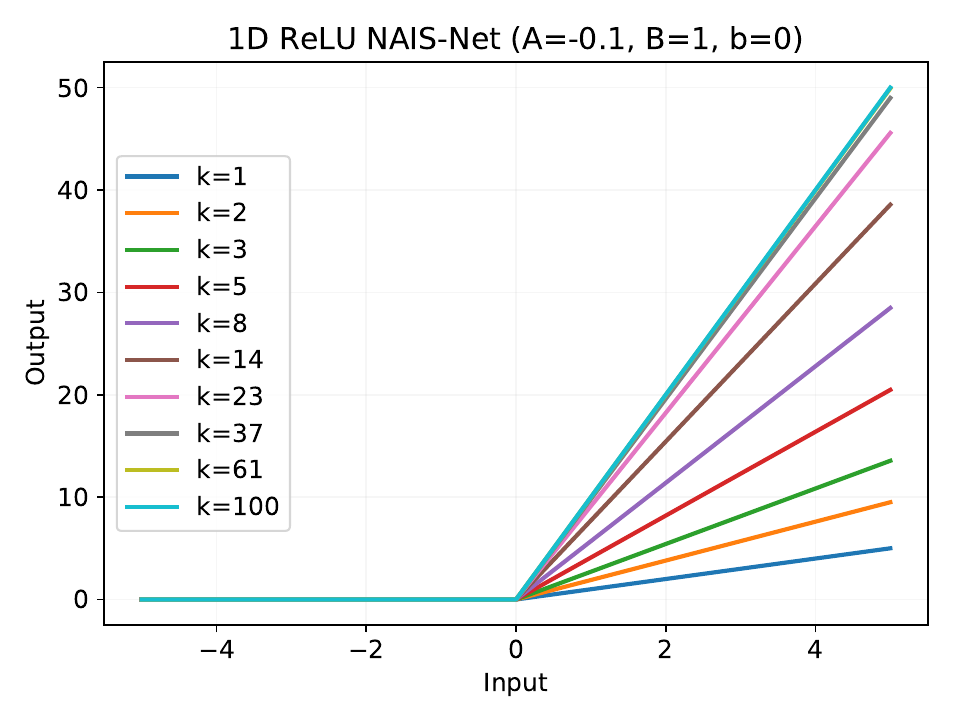}
     \end{figure}
    \end{minipage}

    \caption{
    \small{\bf Single neuron NAIS-Net. Input-output map for different unroll length for tanh (Left) and ReLU (Right) activations.}
    The input-output maps produced by stable NAIS-Net with our proposed reprojection for fully connected architectures. In particular, the top and bottom graphs present the case of positive stable eigenvalues with different magnitude. The left figures shows that the stable tanh activated networks have   monotonic activations (strictly increasing around the origin) that tend to a straight line as $k\rightarrow\infty$ as in our theoretical results. Morever, the map is Lipschitz with Lipschitz constant equal to the steady state gain presented in the main paper, $\|A^{-1}\|\ \|B\|$. Figures on the right present the ReLU case where the maps remain of the changes in slope up to $\|A^{-1}\|\ \|B\|$. This means that one cannot have an unbounded slope or the same map for different unroll length. The rate of change of the map changes as a function of $k$ is also determined by the parameters.
}
    \label{sec:1d_stable_1}
\end{figure}

Figure \ref{sec:1d_stable_2} shows the maps produced by stable NAIS-Net with eigenvalues that are outside the region of our proposed reprojection for fully connected layers but still inside the one for convolutional layers. In particular, the top and bottom graphs present the case of negative real stable eigenvalues with different magnitude.  The left figures shows that as a result the stable tanh activated networks have monotonic activations (not strictly in this case) that still tend to a straight line as expected but present intermediate decaying oscillations as $k\rightarrow\infty$. The map is still Lipschitz but the Lipschitz constant is not always equal to the steady state gain because of the transient oscillations. Figures on the right present the ReLU case where the maps remain identical independently of the parameter $A$. This means that this parameter becomes uninformative and one could argue that in this case gradients would vanish. Note that our reprojection for convolutional layers can theoretically allow for this behaviour while the fully connected version does not. Training with Algorithm 2 has been quite successful on our experiments, however, the above points are of interest and will be further investigated in follow-up work.

\begin{figure}[t!]
\centering
   \begin{minipage}{0.48\columnwidth}
    \begin{figure}[H]
        \centering
            \includegraphics[width=\linewidth, clip]{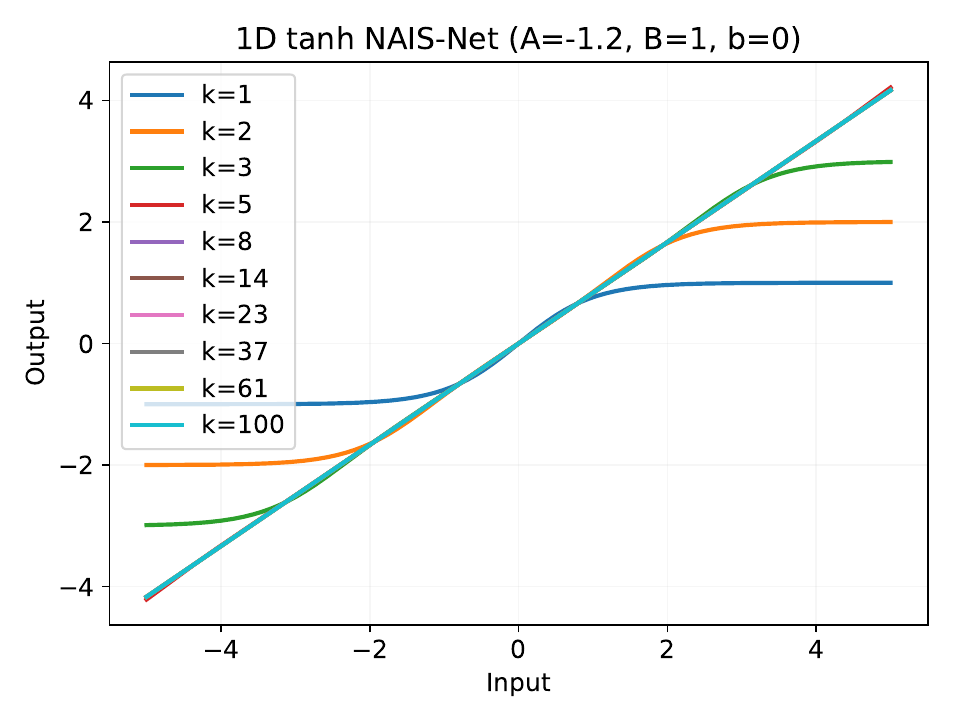}
    \end{figure}
    \end{minipage}
       \noindent
    \begin{minipage}{0.48\columnwidth}
    \begin{figure}[H]
     \includegraphics[width=\linewidth]{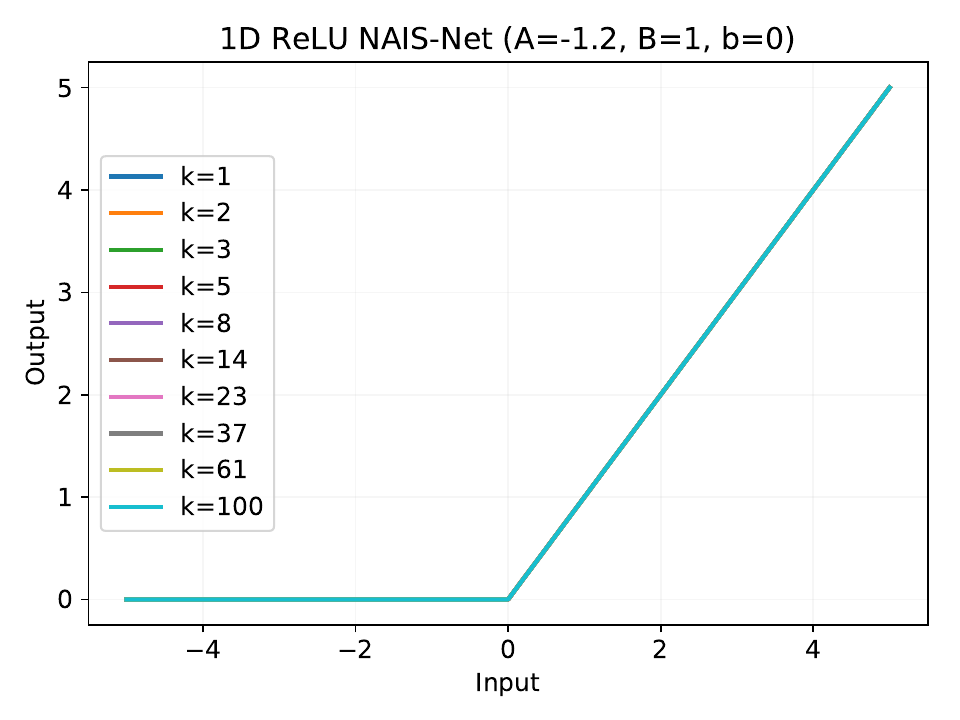}
     \end{figure}
    \end{minipage}
       \begin{minipage}{0.48\columnwidth}
    \begin{figure}[H]
        \centering
            \includegraphics[width=\linewidth, clip]{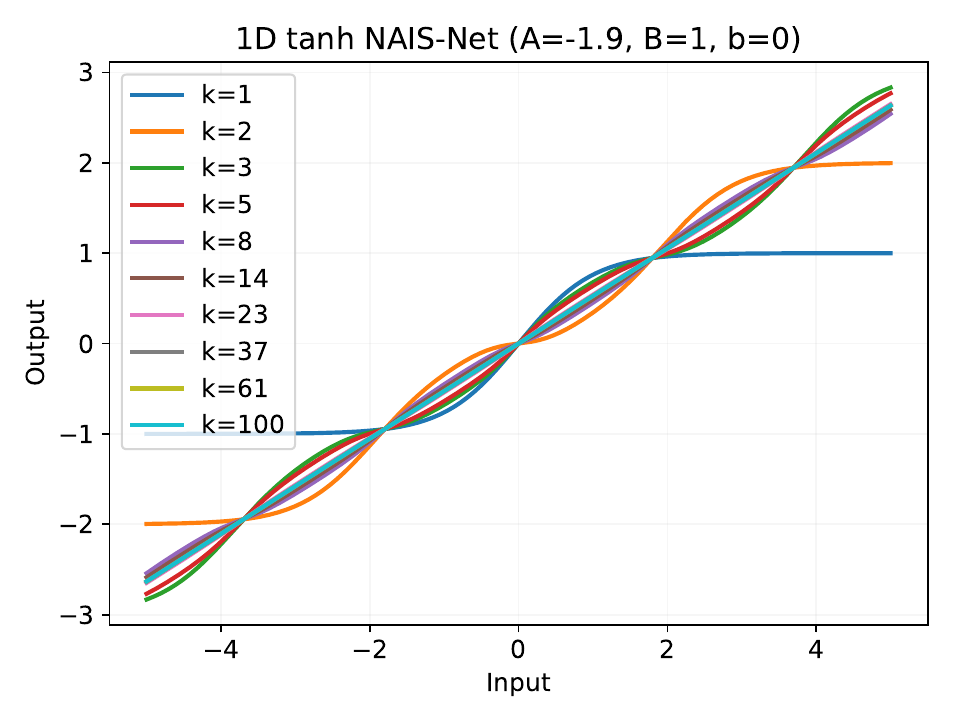}
    \end{figure}
    \end{minipage}
       \noindent
    \begin{minipage}{0.48\columnwidth}
    \begin{figure}[H]
     \includegraphics[width=\linewidth]{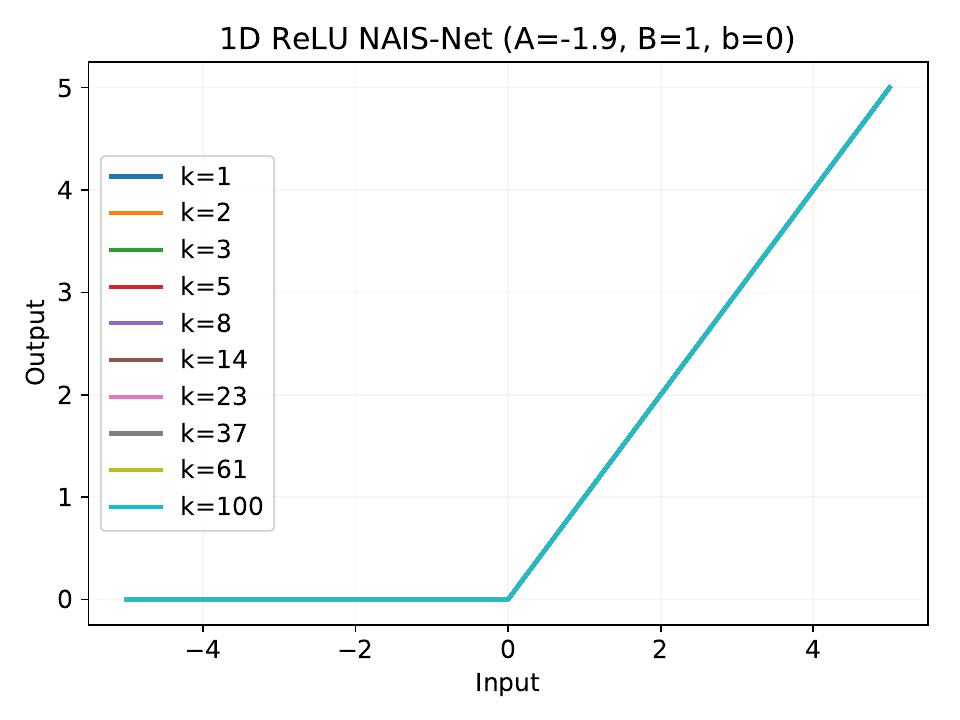}
     \end{figure}
    \end{minipage}

    \caption{
    \small{\bf Single neuron NAIS-Net with less conservative reprojection. Input-output map for different unroll length for tanh (Left) and ReLU (Right) activations.} Input-output maps produced by stable NAIS-Net with eigenvalues that are outside the region of our proposed reprojection for fully connected layers but still inside the one for convolutional layers. In particular, the top and bottom graphs present the case of negative real stable eigenvalues with different magnitude.  The left figures shows that as a result the stable tanh activated networks have monotonic activations (not strictly increasing) that still tend to a straight line as expected but present intermediate decaying oscillations. As a consequence of this the map is still Lipschitz but the Lipschitz constant is not  equal to the steady state gain for any $k$. Figures on the right present the ReLU case where the maps remain identical independently of the parameter $A$.
    }
    \label{sec:1d_stable_2}
\end{figure}

\subsection{Pattern-dependent processing depth through a stopping criterion}
We investigate the use at test time of a stopping criterion for the network unroll, $\|\Delta x(x,u)\|\leq\epsilon$, where $\epsilon$ is a hyper-parameter. The activations for a 1D network are again considered, where $\epsilon$ is set to $0.95$ for illustrative purpose. The resulting activations are discontinuous but locally preserve  the properties illustrated in the previous Section.

Figure \ref{sec:1d_stable_adaptive_computation} shows the input-output maps produced by stable NAIS-Net with our proposed reprojection for fully connected architectures. In particular, the top and bottom graphs present the case of positive stable eigenvalues with different magnitude. The left figures shows that the stable tanh activated networks have piece-wise continuous and locally strictly monotonic activations (strictly increasing around the origin)  that tend to a straight line as $k\rightarrow\infty$ as in our theoretical results. Moreover, the map is also piece-wise Lipschitz with Lipschitz constant less than the steady state gain presented in the main paper, $\|A^{-1}\|\ \|B\|$. Figures on the right present the ReLU case where the maps are piece-wise linear functions with slope that is upper bounded by $\|A^{-1}\|\ \|B\|$. This means that one cannot have an unbounded slope (except for the jumps) or the same map for different unroll length. The rate of change of the map changes as a function of $k$ is also determined by the parameters. The jump magnitude and the slopes are also dependant on the choice of threshold for the stopping criteria.

\begin{figure}[t!]
\centering
    \noindent
    \begin{minipage}{0.48\columnwidth}
    \begin{figure}[H]
        \centering
            \includegraphics[width=\linewidth, clip]{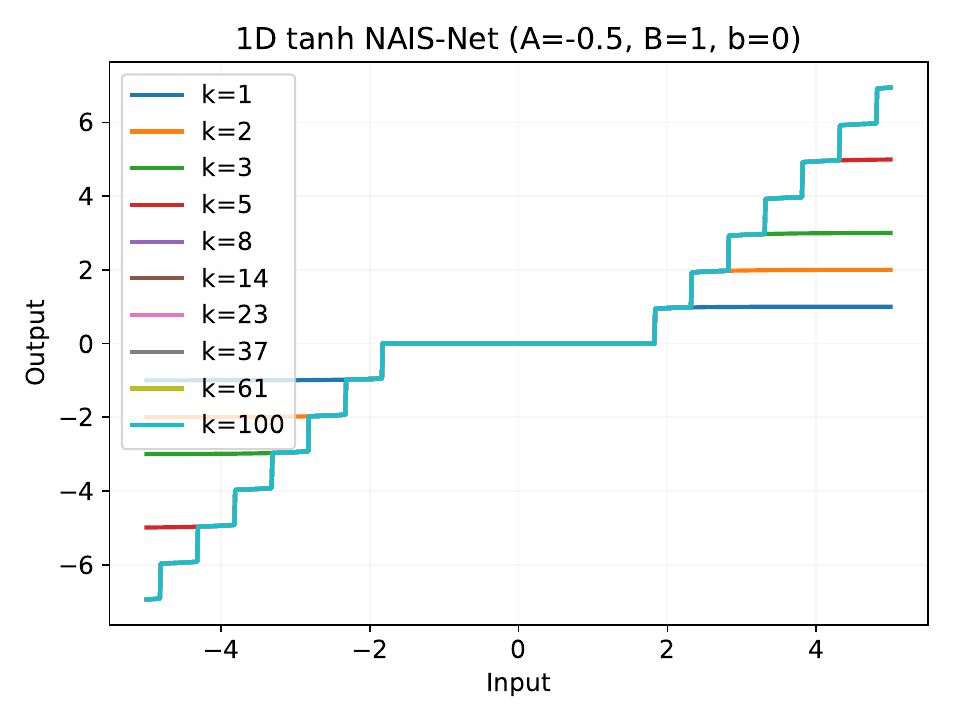}
    \end{figure}
    \end{minipage}
    \noindent
    \begin{minipage}{0.48\columnwidth}
    \begin{figure}[H]
     \includegraphics[width=\linewidth]{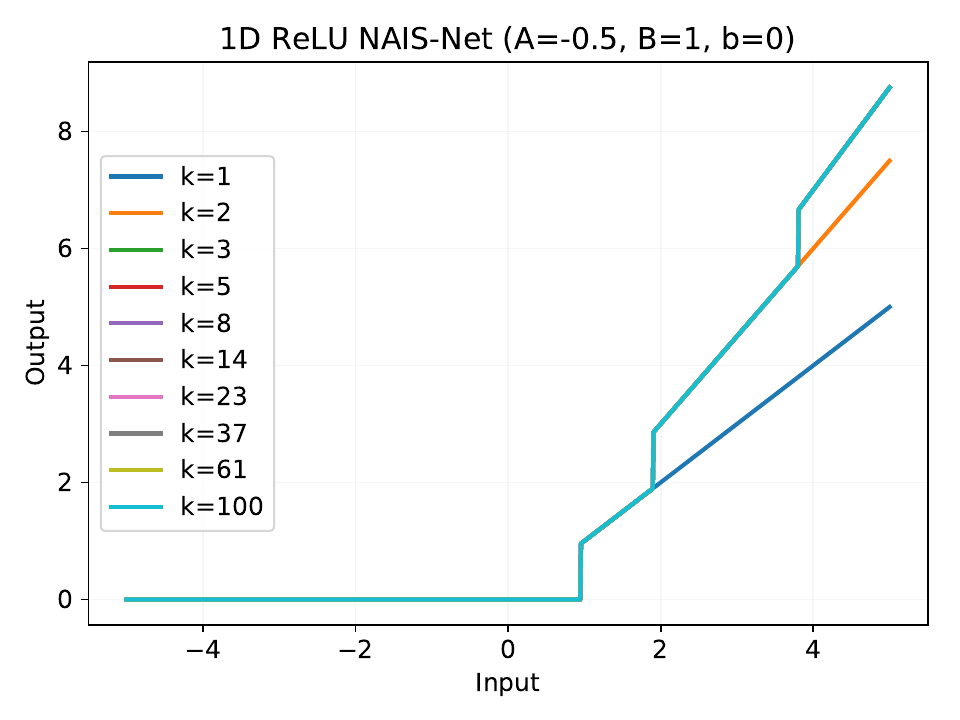}
     \end{figure}
    \end{minipage}
     \begin{minipage}{0.48\columnwidth}
    \begin{figure}[H]
        \centering
            \includegraphics[width=\linewidth, clip]{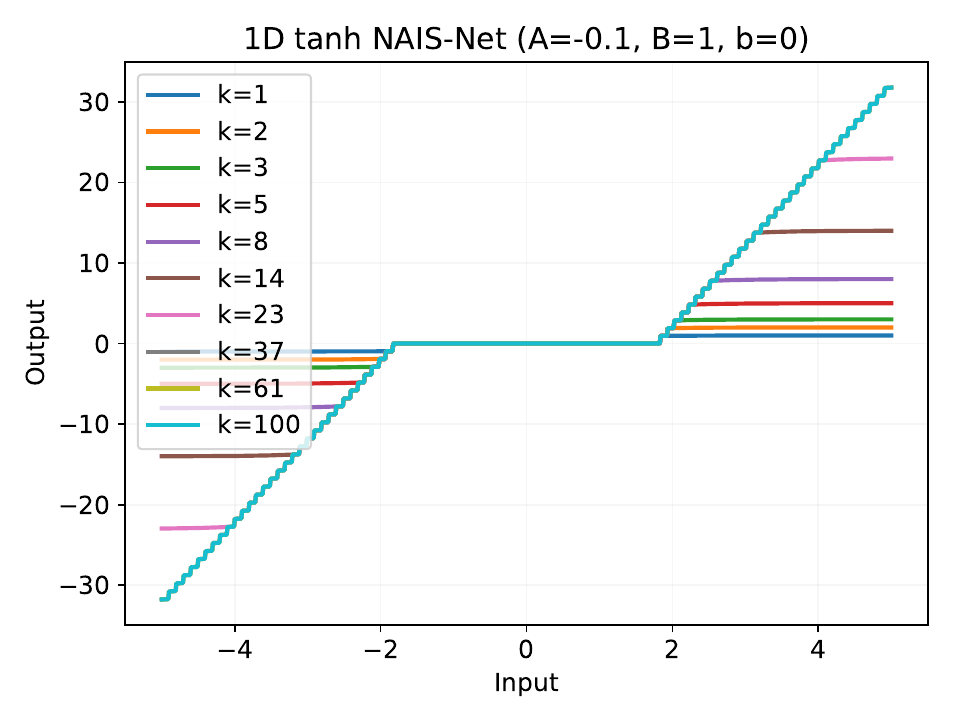}
    \end{figure}
    \end{minipage}
    \noindent
    \begin{minipage}{0.48\columnwidth}
    \begin{figure}[H]
     \includegraphics[width=\linewidth]{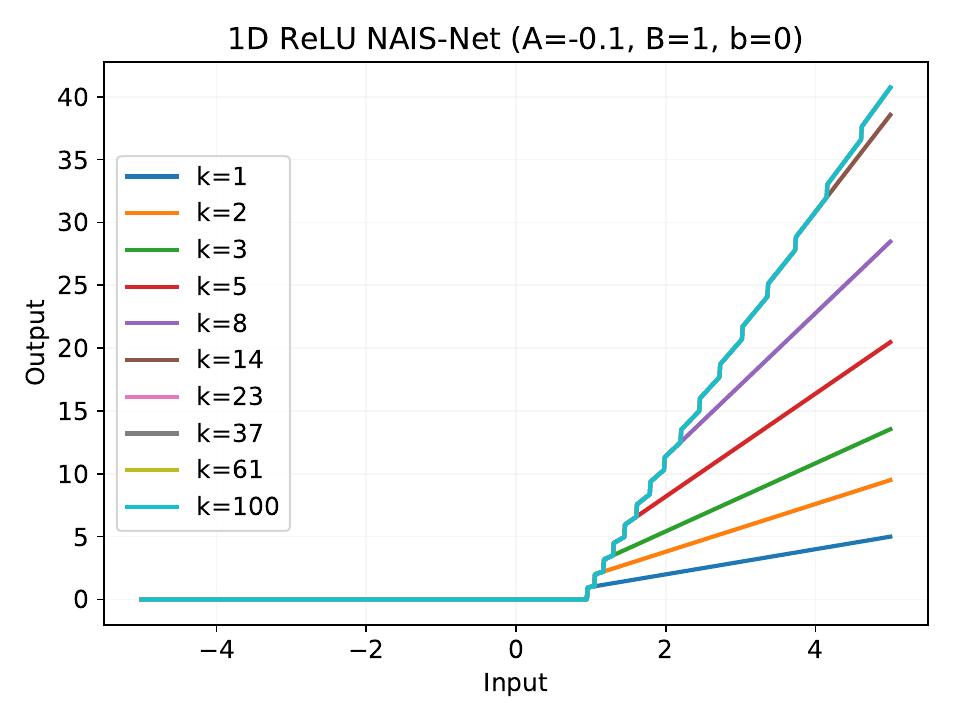}
     \end{figure}
    \end{minipage}

    \caption{
    \small{\bf Single neuron NAIS-Net with variation stopping criteria for pattern-dependent processing depth. Input-output map for different unroll length for tanh (Left) and ReLU (Right) activations.}
    The input-output maps produced by stable NAIS-Net with our proposed reprojection for fully connected architectures. In particular, the top and bottom graphs present the case of positive stable eigenvalues with different magnitude. The left figures shows that the stable tanh activated networks have piece-wise continuous and locally monotonic activations (strictly increasing around the origin)  that tend to a straight line as $k\rightarrow\infty$ as in our theoretical results. Moreover, the map is also piece-wise Lipschitz with Lipschitz constant less than the steady state gain presented in the main paper, $\|A^{-1}\|\ \|B\|$. Figures on the right present the ReLU case where the maps are piece-wise linear functions with slope that is upper bounded by $\|A^{-1}\|\ \|B\|$. This means that one cannot have an unbounded slope (except for the jumps) or the same map for different unroll length. The rate of change of the map changes as a function of $k$ is also determined by the parameters. The jump magnitude and the slopes are also dependant on the choice of threshold for the stopping criteria.
}
    \label{sec:1d_stable_adaptive_computation}
\end{figure}

\begin{sidewaysfigure}

  \centering
  \subfigure[frog]{\includegraphics[width=0.49\textwidth]{figures/depth6imgs/frog.pdf}}
    \subfigure[bird]{\includegraphics[width=0.49\textwidth]{figures/depth6imgs/bird.pdf}}
    \subfigure[ship]{\includegraphics[width=0.50\textwidth]{figures/depth6imgs/ship.pdf}}
    \subfigure[airplane]{\includegraphics[width=0.48\textwidth]{figures/depth6imgs/airplane.pdf}}
    \vspace{-3mm}
    \caption{\small{\bf Image samples with corresponding NAIS-Net depth.} The
      figure shows samples from CIFAR-10 grouped by final network
      depth, for four different classes. The qualitative differences
      evident in images inducing different final depths indicate that
      NAIS-Net adapts processing systematically according
      characteristics of the data.  For example, \emph{``frog''}
      images with textured background are processed with fewer
      iterations than those with plain background. Similarly,
      \emph{``ship''} and \emph{``airplane''} images having a
      predominantly blue color are processed with lower depth than
      those that are grey/white, and \emph{``bird''} images are grouped
      roughly according to bird size with larger species such as
      ostriches and turkeys being classified with greater processing depth.}
    \label{fig:histdepth}
\end{sidewaysfigure}
\medskip

\end{document}